\definecolor{darkblue}{rgb}{0, 0, 0.5}
\title{Steepest Neural Architecture Descent:
Escaping Local Optimum with Signed Neuron Splittings}
\author{%
  \large Lemeng Wu \thanks{Equal Contribution} \\
  UT Austin \\
  \texttt{lmwu@cs.utexas.edu} \\
  \And
  \large Mao Ye \textsuperscript{*} \\
  UT Austin\\
  \texttt{my21@cs.utexas.edu} \\
  \And
  \large Qi Lei \\
  Princeton University \\
  \texttt{qilei@princeton.edu} \\
  \AND
  \large Jason D. Lee \\
  Princeton University \\
  \texttt{jasonlee@princeton.edu} \\
  \And
  \large Qiang Liu \\
  UT Austin \\
  \texttt{lqiang@cs.utexas.edu} \\
}
\global\long\def\th{\boldsymbol{\theta}}%
\global\long\def\E{\mathbb{E}}%
\global\long\def\S{S} 
\global\long\def\Vec{\textbf{\textbf{Vec}}}%
\global\long\def\X{\textbf{X}}%
\global\long\def\R{\mathbb{R}}%
\begin{document}

\maketitle

\begin{abstract}
Developing efficient and principled neural architecture optimization
methods is a critical challenge of modern deep learning.
Recently, \citet{splitting2019} proposed
a \emph{splitting steepest descent (S2D)} method
that jointly  optimizes  the neural parameters and architectures  based on
 progressively growing network structures by splitting neurons into multiple copies in a steepest descent fashion.
 However, S2D suffers from a local optimality issue when all the neurons become ``splitting stable'', a concept akin to local stability in parametric optimization.
 In this work, we develop a significant and surprising extension of the
 splitting descent framework that addresses the local optimality issue.
The idea is to observe that the original S2D is unnecessarily restricted to splitting neurons into \emph{positive weighted} copies.
By simply allowing both positive and negative weights during splitting,
we can eliminate the appearance of splitting stability in S2D and
hence escape the local optima to obtain better performance.
By incorporating signed splittings,
we significantly extend the optimization power of splitting steepest descent both theoretically and empirically.
We verify our method on
various challenging benchmarks such as
CIFAR-100, ImageNet and ModelNet40, on which we outperform S2D and other advanced methods on learning accurate and energy-efficient neural networks.
\end{abstract}

\section{Introduction}
Although the parameter learning of deep neural networks (DNNs) has been well addressed by gradient-based optimization, 
efficient optimization of neural network architectures (or structures) is still largely open. Traditional approaches frame the neural architecture optimization as a discrete combinatorial optimization problem, 
which, however, often lead to highly expensive computational cost and give no rigorous theoretical guarantees. 
New techniques for efficient and principled neural architecture optimization can significantly advance the-state-of-the-art of deep learning.  

Recently, \citet{splitting2019} proposed
a \emph{splitting steepest descent (S2D)} method 
for 
 efficient neural architecture optimization, which frames the joint  optimization of the parameters and neural architectures into a continuous optimization problem in an infinite dimensional model space, and derives a computationally efficient (functional) 
 steepest descent procedure for solving it.   
 Algorithmically, S2D works by alternating between typical parametric updates with the architecture 
 fixed, and an architecture descent 
which  grows the neural network structures by optimally splitting critical neurons into a convex combination of multiple copies. 

 In S2D, the optimal rule for picking what neurons to split and how to split 
 is theoretically derived to yield the fastest descent of the loss in an asymptotically small neighborhood. Specifically, 
 the optimal way to split a neuron is to divide it into \emph{two equally weighted copies} along the 
 minimum 
 eigen-direction of a key quantity called \emph{splitting matrix} for each neuron.  
 Splitting a neuron into more than two copies \emph{can not} introduce any additional gain  theoretically and do not need to be considered for computational efficiency.
 %
 Moreover, the change of loss resulted from splitting a neuron 
 equals the minimum eigen-value  of its splitting matrix (called the \emph{splitting index}).
 Therefore, neurons whose splitting matrix is positive definite are considered to be ``\emph{splitting stable}'' (or not \emph{splittable}) in that 
 splitting them in any fashion can increase the loss and hence would be ``pushed back'' by subsequent gradient descent. 
 %
 In this way, the role of splitting matrices is analogous to how Hessian matrices characterize local stability for typical parametric optimization,  
and the local stability due to positive definite splitting matrices 
can be viewed as a notation of 
local optimality in the parameter-structure joint space. 
Unfortunately, the presence of the splitting stable status leads to 
a key limitation of the practical performance of splitting steepest descent, 
since the loss can be stuck at a relatively large value when the splittings become stable and can not continue. 
 This work 
 fills a surprising missing piece of  
 the picture outlined above, which allows us to address the local optimality problem in splitting descent with a simple algorithmic improvement.   
 We show that the notation of splitting stable caused by positive definite splitting matrices
 is in fact an artifact of splitting neurons into \emph{positively weighted} copies. 
 By simply considering \emph{signed splittings} which allows us to split neurons into copies  with both positive and negative weights,  
 the loss can continue decrease unless the splitting matrices equals zero for all the neurons. Intriguingly, the optimal spitting rules with signed weights 
 can have upto three or four copies (a.k.a. triplet and quartet splittings; see Figure~\ref{fig:vis}(c-e)), even though signed binary splittings (Figure~\ref{fig:vis}(a-b)), which introduces no additional neurons over the original positively weighted splitting, can work sufficiently well in practice. 

Our main algorithm, 
\emph{signed splitting steepest descent (S3D)}, 
which outperforms the original S2D in 
both 
theoretical guarantees and empirical performance.  
Theoretically, it yields stronger notion of optimality and allows us to establish convergence analysis that was impossible for S2D. 
Empirically,
S3D can learn smaller and more accurate networks in a variety of challenging benchmarks,  including CIFAR-100, ImageNet, ModelNet40, on which S3D 
substantially outperforms S2D and a variety of baselines for learning small and energy-efficient networks  \citep[e.g.][]{liu2017learning, li2016pruning, gordon2018morphnet, he2018amc}. 


\begin{figure*}[t]
    \centering
    \includegraphics[width=1.0\textwidth]{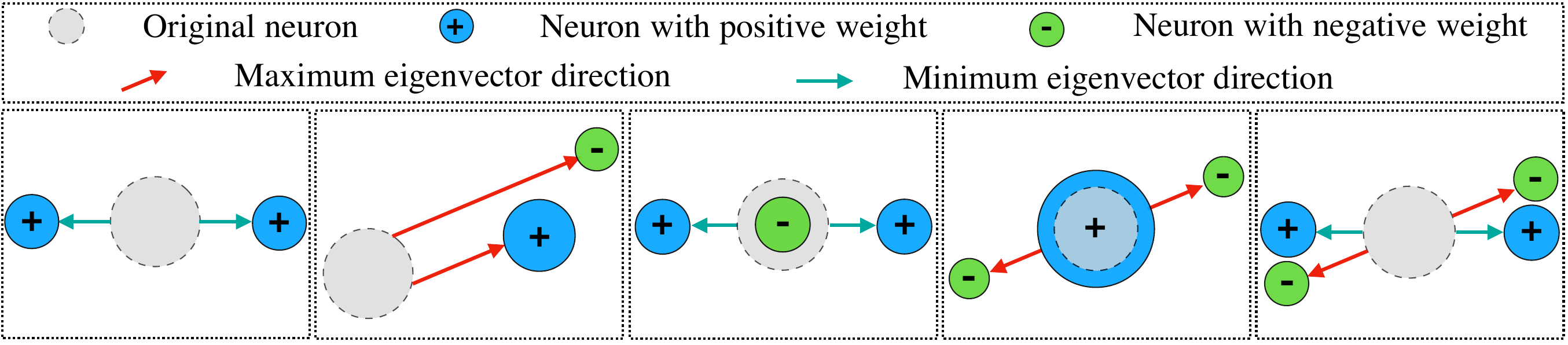}
\newcommand{\tmpfonthere}{\scriptsize}
\setlength{\tabcolsep}{1pt}
\begin{tabular}{lllll}
    \hspace{-1em}{\tmpfonthere (a) Positive Binary Splitting} & 
  {\tmpfonthere (b) Negative Binary Splitting}
 &
 {\tmpfonthere (c) Positive Triplet Splitting} &
 {\tmpfonthere (d) Negative Triplet Splitting} & 
 {\tmpfonthere (e) Quartet Splitting} 
    \end{tabular}
    \vspace{-.5em}
    \caption{Different splitting strategies. 
    The original splitting steepest descent \citep{splitting2019} 
    only uses positive binary splitting (a), which splits the neuron into two positively weighted copies. 
    By  allowing the neurons to split into output weighted by negative weights, we derive a host of new splitting rules (b,c,d,e), which can descent the loss more optimally, and avoid the appearance of splitting stability due to positive splitting matrices. In practice, we find that the combination of positive and negative binary splittings (a-b)
 works sufficiently well.}
    \label{fig:demo}
\vspace{-1em}
\end{figure*}

\section{Background: Splitting Steepest Descent}
Following \citet{splitting2019}, we start with the case of splitting a single-neuron network $f(x) =\sigma(\theta, x)$, 
where $\theta \in \RR^d$ is the parameter and $x$ the input. 
On a data distribution $\mathcal D$, 
the loss function of $\theta$ is 
$$
L(\theta) = \E_{x\sim \mathcal D} \left [\Phi\left (\sigma(\theta, x)\right)\right], 
$$
where $\Phi(\cdot)$ denotes a nonlinear loss function. 

\begin{wrapfigure}{r}{.35\textwidth}
\vspace{-5\baselineskip}
    \includegraphics[width=0.34\textwidth]{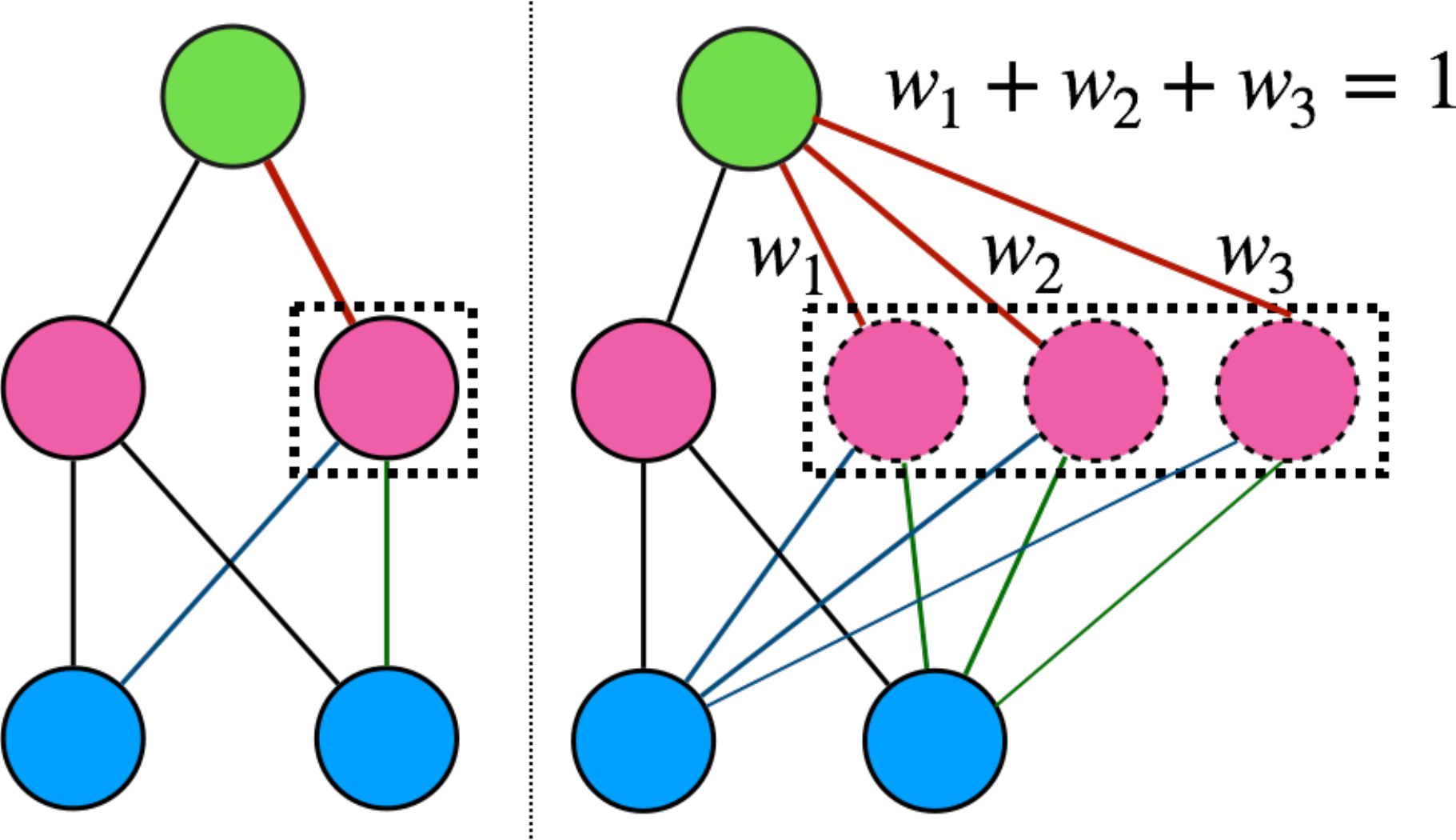} 
    \vspace{-1\baselineskip}
\end{wrapfigure}
Assume we split a neuron with parameter $\theta$ into $m$ copies whose parameters  are $
\{\theta_i\}_{i=1}^m$, each of which is associated with a weight $w_i \in \RR$,  yielding a large neural network of form  $f(x) = 
\sum_i w_i \sigma(\theta_i, x)$. 
Its loss function is 
%
$$
\Lm(\vv \theta, \vv w) = \E_{x\sim \mathcal D} \left [ \Phi\left (\sum_{i=1}^m w_i \sigma(\theta_i, x)\right )\right ], 
$$
where we write 
$\vv\theta=\{\theta_i\}_{i=1}^m$
$\vv w:=\{w_i\}_{i=1}^m$. 
We shall assume $\sum_i w_i = 1$, so that we obtain an equivalent network, 
or a \emph{network morphism} \citep{wei2016network}, 
when the split copies are not updated,  i.e., 
$\theta_i=\theta$ for $\forall i$. 
We want to find the optimal splitting scheme ($\vv\theta$, $\vv w$, $m$) to yield the  minimum loss $\Lm(\vv \theta, \vv w)$. 

Assume the copies $\{\theta_i\}$ can be decomposed into  $\theta_i = \theta + \epsilon( \delta_0 + \delta_i)$ where $\epsilon$ denotes a step-size parameter, 
$\delta_0 := \sum_i w_i \theta_i -\theta$ the average displacement of all copies (which implies $\sum_i w_i\delta_i = 0 $), and $\theta_i$ the individual ``splitting'' direction of $\theta_i$. 
\citet{splitting2019} showed the following key decomposition: 
\begin{align}\label{equ:decompLm}
\Lm(\vv\theta, \vv w) = L(\theta + \epsilon \delta_0) 
+ \frac{\epsilon^2}{2} \II(\vv\delta, \vv w; \theta) + O(\epsilon^3),
\end{align}
where $L(\theta + \epsilon \delta_0)$ denotes the 
effect of average displacement, corresponding to typical  parametric $\theta \mapsto \theta + \epsilon \delta_0$ without splitting, 
and $\II(\vv\delta, \vv w; ~\theta)$ denotes the effect of splitting the neurons; it is a quadratic form depending on a \emph{splitting matrix} defined in \citet{splitting2019}: 
\begin{align} \label{equ:splitting}
\begin{split} 
\II(\vv\delta, \vv w; ~ \theta)  = \sum_{i=1}^m w_i \delta_i^\top S(\theta) \delta_i, ~~~~~~~
 \text{where~~~~}S(\theta) &= \E_{x\sim \mathcal D} [\Phi'(\sigma(\theta, x)) \nabla_{\theta\theta}^2\sigma (\theta, x)]. 
\end{split}
\end{align}
Here $S(\theta) \in \RR^{d\times d}$ is  called the splitting matrix of $L(\theta)$. 
Because splitting increases the number of neurons and only contributes an $O(\epsilon^2)$ decrease of loss following \eqref{equ:decompLm}, 
it is preferred to
decrease the loss with typical parametric updates that requires no splitting (e.g., gradient descent),  
whenever the parametric local optimum of $L(\theta)$ is not achieved. 
However, when we research a local optimum of $L(\theta)$, 
splitting allows us to escape the local optimum at the cost of increasing the number of neurons. In \citet{splitting2019}, the optimal splitting scheme is framed into an optimization problem: 
\begin{align}\label{equ:optII}  
&
G_m^+ := \min_{\vv \delta, \vv w} 
\left\{ \II(\vv \delta, \vv w; \theta) \colon ~ 
\vv w\in  P_m^+, ~
\vv \delta \in {\Delta}_{\vv w}
\right\},
\end{align}
where we optimize 
the weights $\vv w$ in a probability simplex $P_m^+$ and 
splitting vectors $
\vv \delta$ in set set $\Delta_{\vv w}$: 
\begin{align}\label{equ:pm}
 P_m^+ = \big \{ \vv w \in\RR^m \colon ~ 
\sum_{i=1}^m w_i = 1, ~ w_j \geq  0,~~ \forall j \big \}, 
\end{align}
$$
\Delta_{\vv w}  = \big \{\vv\delta \in\RR^{m\times d} \colon 
\sum_{i=1}^m w_i \delta_i = 0,~ 
\norm{\delta_j}\leq 1, ~\forall j 
\big\}, 
$$
in which $\delta_i$ is constrained in the unit ball and the constraint $\sum_{i=1}^m w_i \delta_i = 0$ is to ensure a zero average displacement. 
\citet{splitting2019} showed that 
the optimal gain $G_m^+$ in \eqref{equ:optII}
depends on 
the minimum eigen-value $\lambda_{\min}$ of $S(\theta)$ in that 
$$G_m^+ = \min\left (\lambda_{\min},0\right).$$ 
If $\lambda_{\min} < 0$, 
we obtain a strict decrease of the loss, and the maximum decrease can be achieved by 
a simple binary splitting scheme ($m=2$),  in which the neuron 
is split into two equally weighted copies along the minimum eigen-vector direction $v_{\min}$ of $S(\theta)$, that is, 
\begin{align} \label{equ:binarysplitting}
m=2, && w_1 = w_2 = 1/2, &&\delta_1 = - \delta_2 =  v_{\min}. 
\end{align}
See Figure~\ref{fig:demo}(a) for an illustration.  
This binary splitting $(m=2)$ defines the best possible splitting in the sense of \eqref{equ:optII}, 
which means that it can not be further improved even when it is allowed to split the neuron into an arbitrary number $m$ of copies. 

On the other hand, if $\lambda_{\min}>0$, we have $G_m^+=0$ and the loss can not be decreased by any splitting scheme considered in \eqref{equ:optII}.  
This case 
was called being \emph{splitting stable} in \citet{splitting2019},  which means that even if the neuron is split into an  arbitrary number of copies  in arbitrary way (with a small step size $\epsilon$), all its copies would be pushed back to the original neuron when gradient descent is applied subsequently.  

\subsection{Main Method: Signed Splitting Steepest Descent}
\label{sec:method}
Following the derivation above, 
the splitting process would get stuck and stop when the splitting matrix $S(\theta)$ is positive definite 
($\lambda_{\min} >0$), and it yields small gain when $\lambda_{\min}$ is close to zero. 
Our key observation is that this phenomenon is in fact an artifact of  
constraining the weights $w_i$ to be non-negative in optimization \eqref{equ:optII}-\eqref{equ:pm}. 
By allowing negative weights, 
we can open the door to a much richer class of splitting schemes, which allows us to descent the loss more efficiently. 
Interestingly,
although
the optimal positively weighted splitting is always achievable by the binary splitting scheme ($m=2$) shown in \eqref{equ:binarysplitting},  
the optimal splitting schemes  with signed weights can be 
either binary splitting ($m=2$),  triplet splitting ($m=3$), or at most quartet splitting ($m=4$). 

Specifically, our idea is to replace \eqref{equ:optII} with 
\begin{align}\label{equ:optIIneg}   
&
\!\!\!\!\!\!G_m^{-c}
 := \min_{\vv \delta, \vv w} 
\left\{ \II(\vv \delta, \vv w; \theta) \colon ~
\vv w\in  P_m^{-c}, ~
\vv \delta \in {\Delta}_{\vv w}
\right\},
\end{align}
where 
the weight $\vv w$ is constrained in a larger set $P_m^{-c}$ whose size depends on a scalar  $c \in [1,\infty)$: 
\begin{align}\label{equ:pmneg}
\!\!\!\! P_m^{-c} = \big \{ \vv w \in \RR^m  \colon \sum_{i=1}^m w_i = 1, 
~\sum_{i=1}^m |w_i| \leq c \big \}.
\end{align}
We can see that 
$P_m^{-c}$ reduces to $P_m^+$ when $c = 1$, and contains negative weights when $c > 1$. By using $c > 1$, we enable a richer class of splitting schemes with signed  weights, hence yielding faster descent of the loss function.

The optimization in \eqref{equ:optIIneg} is more involved than the positive case \eqref{equ:optII}, but still yield elementary solutions. 
We now discuss the solution when we split the neuron into  
$m=2,3,4$ copies, respectively.  Importantly, we show that no additional gain can be made 
by splittings with more than $ m =4$ copies.

For notation, we denote by 
 $\lambda_{\min}$, $\lambda_{\max}$ the  smallest and largest eigenvalues of $S(\theta)$, respectively, and $v_{\min}$, $v_{\max}$ their corresponding eigen-vectors with unit norm.

\begin{thm}[\textbf{Binary Splittings}]\label{thm:2spliting}
For the optimization in  \eqref{equ:optIIneg} with $m=2$ and $c \geq 1$, 
we have 
$$
G_2^{-c} =   \min\left ( \lambda_{\min}, ~~ -\frac{c-1}{c+1}\lambda_{\max}, ~~ 0 \right ),
$$
and the optimum is achieved by
one of the following cases: 

i) no splitting ($\delta_1=\delta_2=0$), which yields $G_2^{-c} =0$; 

ii) the positive binary splitting in \eqref{equ:binarysplitting},  yielding $G_2^{-c} = \lambda_{\min}$;  

iii) the following ``negative'' binary splitting scheme:  
\begin{align}\label{equ:neg2splitting}
\begin{split} 
     w_1 = - \frac{c-1}{2}, ~~~~ \delta_1 =v_{\max}, ~~~~~~~~~~~~
         w_2 = \frac{c+1}{2}, ~~~~ \delta_2 = \frac{c-1}{c+1} v_{\max}. 
\end{split}
\end{align}
which yields $G_{2}^{-c}=-\frac{c-1}{c+1}\lambda_{\max}.$ 
This amounts to splitting the neuron into two copies with a positive and a negative weight, respectively, both of which move along the eigen-vector $v_{\max}$, but with different magnitudes (to ensure a zero average displacement).   See Figure~\ref{fig:demo}(b) for an illustration. 
\end{thm}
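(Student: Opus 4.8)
The plan is to solve the $m=2$ instance of \eqref{equ:optIIneg} in closed form by first using the zero‑average‑displacement constraint to eliminate one splitting direction, and then carrying out the remaining optimization by a short case analysis on the signs of the two weights. First I would dispose of the degenerate weights $w_1w_2=0$: since $w_1+w_2=1$, one weight is $1$ and the other $0$, and $w_1\delta_1+w_2\delta_2=0$ forces the nonzero‑weight copy to have $\delta=0$, giving $\II=0$ (case i, no splitting). For $w_1w_2\neq 0$, substitute $\delta_2=-(w_1/w_2)\delta_1$ into $\II=\sum_i w_i\delta_i^\top S(\theta)\delta_i$; using $w_1+w_2=1$ this collapses to the one‑direction form
$$\II=\frac{w_1}{w_2}\,\delta_1^\top S(\theta)\,\delta_1,$$
with $\delta_1$ now subject to $\norm{\delta_1}\le 1$ together with $\norm{\delta_2}=(|w_1|/|w_2|)\norm{\delta_1}\le 1$, and $w_1$ subject to $|w_1|+|1-w_1|\le c$.

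The inner minimization over $\delta_1$ for fixed $w$ is immediate from the Rayleigh quotient: if $w_1/w_2>0$ the optimal direction is $v_{\min}$, if $w_1/w_2<0$ it is $v_{\max}$, and the optimal magnitude is $\min(1,|w_2/w_1|)$ (or $\delta_1=0$ if $S(\theta)$ has no eigenvalue of the favorable sign). This reduces everything to a one‑dimensional problem in $w_1$, which I would split into two regimes. (a) If $w_1,w_2\ge 0$ — equivalently $w_1\in[0,1]$, where the $\ell_1$ constraint is automatic since $c\ge 1$ — then, taking $w_1\le w_2$ without loss of generality, $\norm{\delta_1}\le 1$ is the binding norm bound and $\II=(w_1/w_2)\lambda_{\min}\norm{\delta_1}^2$ when $\lambda_{\min}<0$; writing $(w_1/w_2)\norm{\delta_1}^2=\big((w_1/w_2)\norm{\delta_1}\big)\norm{\delta_1}=\norm{\delta_2}\,\norm{\delta_1}\le 1$ with equality only at $w_1=w_2=1/2,\ \norm{\delta_1}=1$ shows the infimum over this regime is $\min(\lambda_{\min},0)$, attained by the positive binary splitting \eqref{equ:binarysplitting} (case ii). (Equivalently, this regime is exactly $G_2^+$ of \citet{splitting2019}.) (b) If one weight is negative, take $w_1<0$ without loss of generality; then $w_2=1-w_1>1$, the $\ell_1$ constraint reads $1-2w_1\le c$, i.e. $w_1\in[-(c-1)/2,0)$, here $w_1/w_2<0$ so the direction $v_{\max}$ is optimal, and since $w_2=1-w_1>-w_1$ the bound $\norm{\delta_1}\le 1$ again binds, so $\II=(w_1/w_2)\lambda_{\max}$ when $\lambda_{\max}>0$ (and $\II\ge 0$ otherwise). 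Because $w_1\mapsto w_1/(1-w_1)$ is strictly increasing, the minimum over the admissible interval is at $w_1=-(c-1)/2$, giving $-\tfrac{c-1}{c+1}\lambda_{\max}$; unwinding the substitution ($w_2=(c+1)/2$, $\delta_1=v_{\max}$, $\delta_2=-(w_1/w_2)\delta_1=\tfrac{c-1}{c+1}v_{\max}$) recovers exactly \eqref{equ:neg2splitting} (case iii). Combining the degenerate case with regimes (a) and (b) yields $G_2^{-c}=\min\big(\lambda_{\min},\,-\tfrac{c-1}{c+1}\lambda_{\max},\,0\big)$, with the three listed configurations as the respective minimizers.

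The step I would be most careful about is the bookkeeping of which of the two norm constraints $\norm{\delta_1}\le 1$, $\norm{\delta_2}\le 1$ is active for a given $w$ — it is always the one on the copy carrying the weight of larger absolute value, which the two ``without loss of generality'' normalizations arrange to be $\delta_1$ — together with the semidefinite edge cases ($\lambda_{\min}\ge 0$, or $\lambda_{\max}\le 0$) in which regime (a) or (b) contributes only the trivial value $0$; handling these uniformly is precisely what produces the clean three‑term $\min$. As a sanity check I would note that at $c=1$ the formula collapses to $\min(\lambda_{\min},0)$, consistent with $G_2^+$ in \eqref{equ:optII}, and that the negative scheme \eqref{equ:neg2splitting} delivers a strict decrease whenever $\lambda_{\max}>0$ and $c>1$, even when $S(\theta)$ is positive definite and S2D is splitting stable.
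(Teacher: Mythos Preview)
Your proof is correct and complete; the three-term formula and the three optimizers all fall out as you describe. One small slip in your closing self-check: the binding norm constraint is on the copy with the \emph{smaller} absolute weight (it has to move farther to keep $\sum_i w_i\delta_i=0$), not the larger one. Your two ``without loss of generality'' choices in regimes (a) and (b) both place the smaller-$|w|$ copy at index $1$, so your actual computations are right; only the parenthetical description is inverted.

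Your route differs from the paper's in a useful way. The paper first proves a general Lagrangian lemma (any optimal $\delta_i$ is an eigenvector of $S(\theta)$), then for $m=2$ writes $\delta_i=r_i v$ with a common eigenvector $v$ and reduces to minimizing $(w_1r_1^2+w_2r_2^2)\lambda$; it then solves two separate auxiliary problems to pin down the feasible range of $t=w_1r_1^2+w_2r_2^2$ as $[-\tfrac{c-1}{c+1},1]$, and finally takes $\min_{t,\lambda} t\lambda$ over the rectangle. You instead exploit that for $m=2$ the zero-mean constraint $w_1\delta_1+w_2\delta_2=0$ already forces collinearity, so no KKT argument is needed; eliminating $\delta_2$ lands directly on the one-variable Rayleigh form $\II=(w_1/w_2)\,\delta_1^\top S\delta_1$, after which a sign case on $w_1/w_2$ and monotonicity of $w_1/(1-w_1)$ finish the job. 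Your argument is shorter and more elementary for $m=2$; the paper's eigenvector lemma is an investment that pays off for the $m=3,4$ theorems, where the constraint no longer forces collinearity by itself.
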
 
Recall that the positive splitting  \eqref{equ:binarysplitting} follows the minimum eigen-vector 
$v_{\min}$, and achieves a decrease of loss only if $\lambdamin<0$. 
In comparison,  
the negative splitting  \eqref{equ:neg2splitting}
exploits  
the maximum eigen-direction $v_{\max}$ 
and achieves a decrease of loss when $\lambdamax >0$.  
Hence, unless $\lambdamin = \lambdamax=0$, or $S(\theta)=0$, a loss decrease can be achieved by either the positive or negative binary splitting. 


\begin{thm}[\textbf{Triplet Splittings}]\label{thm:3splitting}
For the optimization in \eqref{equ:optIIneg} with $m=3$ and $c \geq 1$, we have 
$$
G_3^{-c} =  \min\left (
\frac{c+1}{2}\lambda_{\min}, ~~-\frac{c-1}{2}\lambda_{\max},~~0 \right), 
$$
and the optimum is achieved by one of the following cases: 

i) no splitting ($\delta_1=\delta_2=\delta_3=0$), with $G_3^{-c} = 0$;

ii) the following ``positive'' triplet splitting scheme with two positive weights and one negative weights that yields $G_3^{-c} = \frac{c+1}{2}\lambdamin$: 
\begin{align}\label{equ:p3splitting}
\begin{array}{*3{>{\displaystyle}l}}
w_1 = \frac{c+1}{4},\ \ \ w_2 = \frac{c+1}{4},\ \ \ w_3 = - \frac{c-1}{2},\ \ \ \delta_1 = v_{\min},\ \ \ \delta_2 = -v_{\min},\ \ \ \delta_3= 0.
\end{array}
\end{align}
iii) the following ``negative'' triplet  splitting scheme with two negative weights and one positive weights that yields $G_3^{-c} = -\frac{c-1}{2}\lambdamax$: 
\begin{align}\label{equ:n3splitting}
\begin{array}{*3{>{\displaystyle}l}}
w_1 = -\frac{c-1}{4},  
& w_2 = -\frac{c-1}{4}, 
& w_3 =  \frac{c+1}{2},\ \ \ \delta_1 = v_{\max},\ \ \ \delta_2 = -v_{\max},\ \ \ \delta_3= 0,
\end{array}
\end{align}
\end{thm}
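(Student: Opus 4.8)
The plan is to prove $G_3^{-c} = \min(\tfrac{c+1}{2}\lambda_{\min},\, -\tfrac{c-1}{2}\lambda_{\max},\, 0)$ by two matching inequalities: an upper bound $G_3^{-c} \le \min(\cdots)$ obtained by exhibiting feasible splittings, and a lower bound $\II(\vv\delta,\vv w;\theta) \ge \min(\cdots)$ holding for every $(\vv\delta,\vv w)$ feasible for \eqref{equ:optIIneg}--\eqref{equ:pmneg}. The upper bound is the routine direction: for each of the three candidates (no splitting; the positive triplet \eqref{equ:p3splitting}; the negative triplet \eqref{equ:n3splitting}) I would check $\vv w \in P_3^{-c}$ by verifying $\sum_i w_i = 1$ and $\sum_i |w_i| \le c$, check $\vv\delta \in \Delta_{\vv w}$ by verifying $\sum_i w_i\delta_i = 0$ and $\|\delta_i\|\le 1$, and then evaluate $\II = \sum_i w_i\delta_i^\top S(\theta)\delta_i$ using $v_{\min}^\top S(\theta)v_{\min} = \lambda_{\min}$, $v_{\max}^\top S(\theta)v_{\max} = \lambda_{\max}$, and $v_{\min}\perp v_{\max}$; this produces the three values $0$, $\tfrac{c+1}{2}\lambda_{\min}$, $-\tfrac{c-1}{2}\lambda_{\max}$, so $G_3^{-c}$ is at most their minimum.

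For the lower bound I would first diagonalize, taking without loss of generality $S(\theta)=\mathrm{diag}(\lambda_1,\dots,\lambda_d)$ with $\lambda_1=\lambda_{\min}$ and $\lambda_d=\lambda_{\max}$, so that $\lambda_{\min}\|\delta\|^2 \le \delta^\top S(\theta)\delta \le \lambda_{\max}\|\delta\|^2$ for any $\delta$ in the unit ball. I would then split on the sign pattern of $\vv w$: up to relabeling the three copies, either all $w_i\ge 0$, or exactly one is negative, or exactly two are negative (all three cannot be negative since $\sum_i w_i=1$). Writing $s_+ := \sum_{w_i\ge 0} w_i$ and $s_- := -\sum_{w_i<0} w_i$, the constraints give $s_+ - s_- = 1$ and $s_+ + s_- \le c$, hence $s_+\le \tfrac{c+1}{2}$ and $s_-\le\tfrac{c-1}{2}$. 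The all-nonnegative case is immediate: $\II \ge \lambda_{\min}\sum_i w_i\|\delta_i\|^2 \ge \min(\lambda_{\min},0) \ge \min(\cdots)$. In the remaining cases I would use the zero-displacement constraint to eliminate the lone minority-sign copy (e.g.\ $\delta_3 = -\tfrac{1}{w_3}(w_1\delta_1+w_2\delta_2)$ when only $w_3<0$, which also turns $\|\delta_3\|\le 1$ into $\|w_1\delta_1+w_2\delta_2\|\le|w_3|$), substitute back into $\II$, and lower-bound the resulting expression against $\min(\cdots)$ after optimizing out the remaining scalar freedoms (the lengths, and whether the retained $\delta_i$ sit along $v_{\min}$ or $v_{\max}$).

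The hard part is this last step, because a crude per-copy application of the sandwich bound only gives $\II \ge \tfrac{c+1}{2}\lambda_{\min} - \tfrac{c-1}{2}\lambda_{\max}$, which is strictly weaker than the claimed bound when $\lambda_{\min}<0<\lambda_{\max}$: it discards the coupling imposed by $\sum_i w_i\delta_i = 0$. The point one must exploit is that with only three copies one cannot keep two copies pointed along $v_{\min}$ \emph{and} two along $v_{\max}$ while zeroing the weighted average, so the gains extractable along $v_{\min}$ and along $v_{\max}$ cannot both be collected in full; quantifying this trade-off is the crux. The clean organizing device I would aim for is a reduction lemma showing that at an optimum each $\delta_i$ may be taken to be $0$ or a unit vector $\pm v_{\min}$ or $\pm v_{\max}$ — any component along an intermediate eigenvector, or any shortfall from unit length, can be removed or redistributed among the copies without increasing $\II$ while preserving $\sum_i w_i\delta_i=0$. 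After this reduction the problem collapses to a finite enumeration over the ``type'' assigned to each of the three copies together with the three weights, in which $\sum_i w_i\delta_i=0$ forces any contributing eigen-direction to be shared by at least two copies; balancing this against $\sum_i|w_i|\le c$ then singles out the positive-triplet and negative-triplet weights of \eqref{equ:p3splitting}--\eqref{equ:n3splitting} (together with no splitting), matching the stated formula. The same reduction lemma is what also supports the $m=2$ and $m=4$ statements and the ``no further gain beyond $m=4$'' claim, so establishing it carefully is where the real effort lies.
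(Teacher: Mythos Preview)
Your upper-bound direction matches the paper exactly. The lower-bound direction differs substantially, and the paper's route sidesteps precisely the difficulty you flag as ``the crux''.

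The paper first proves a KKT lemma: at any optimum of \eqref{equ:optIIneg}, each $\delta_i$ with $w_i\neq 0$ and $\delta_i\neq 0$ is an eigenvector of $S(\theta)$. For $m=3$, the constraint $\sum_i w_i\delta_i=0$ then forces all three $\delta_i$ to share the \emph{same} eigenvalue $\lambda$, since eigenvectors with distinct eigenvalues are linearly independent. Writing $\delta_i=\sum_\ell r_{i,\ell}v_\ell$ in that eigenspace, the objective factorizes as $\II=\lambda\cdot t$ with $t:=\sum_i w_i\|r_i\|^2$, and the paper bounds $t\in[-\tfrac{c-1}{2},\tfrac{c+1}{2}]$ via the one-line estimate $\sum_i w_i\|r_i\|^2\le\sum_i w_i^+\le\tfrac{c+1}{2}$ (and symmetrically below with $w_i^-$). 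The key point is that this bound on $t$ uses only $\|r_i\|\le 1$ and the $\ell_1$ constraint on $\vv w$; the zero-displacement constraint is \emph{not needed} once the same-eigenspace reduction is in hand. So the $v_{\min}$/$v_{\max}$ trade-off you worry about never arises: with three copies, an optimum simply cannot mix the two extreme eigendirections.

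Your proposed reduction lemma (each $\delta_i$ is $0$ or a \emph{unit} vector $\pm v_{\min}$ or $\pm v_{\max}$) is both weaker in one respect---it allows different eigenvalues across copies, leaving you the enumeration and the trade-off analysis---and too strong in another: the ``unit vector'' part already fails for $m=2$, where the optimal negative binary split has $\|\delta_2\|=\tfrac{c-1}{c+1}<1$, so the lemma cannot support the $m=2$ case as you claim. The plan can be repaired by relaxing ``unit'' to ``eigenvector of norm $\le 1$'', but then you are essentially rederiving the paper's KKT lemma, and you still need the same-eigenvalue observation (specific to $m\le 3$) to avoid the case analysis altogether.
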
 

Similar to the binary splittings, 
the positive and negative triplet splittings exploit the minimum and maximum eigenvalues, respectively. 
In both cases, the triplet splittings achieve larger descent than the binary counterparts, which is made possible by  placing a copy with no movement ($\delta_3=0$) to allow the other two copies to achieve larger descent with a higher degree of freedom. 

See Figure~\ref{fig:demo}(c)-(d) for illustration of the triplet splittings.  
Intuitively, the triplet splittings can be viewed as giving birth to two off-springs while keeping the original neuron alive, while the binary splittings ``kill'' the original neuron and only keep the two off-springs.  

We now consider 
the optimal quartet splitting ($m=4$), and show that no additional gain is possible with  $m\geq 4$ copies. 
\begin{thm}[\textbf{Quartet Splitting and Optimality}] 
\label{thm:4spliting}
For any $m\geq 4$, $m\in \mathbb Z_+$ and $c\geq 1$, we have 
$$
G_m^{-c}= G_4^{-c} =  
\frac{c+1}{2}\lambda^{\mathrm{th}}_{\min}~ -~ \frac{c-1}{2}\lambda^{\mathrm{th}}_{\max},
$$
where $\lambda_{\max}^{\mathrm{th}} =
\max(\lambda_{\max},0)$ and 
$\lambda_{\min}^{\mathrm{th}}=\min(\lambda_{\min},0)$. 
In addition, the optimum is achieved by the following splitting scheme with $m=4$: 
\begin{align}\label{equ:4splitting}
\begin{split} 
\w_{1}=\w_{2}=\frac{c+1}{4}, & \ \ \ \ \ \  \w_{3}=\w_{4}=-\frac{c-1}{4}, \ \ \ \ \ \ 
\delta_{1}=-\delta_{2}=
v_{\min}^{\mathrm{th}}, \ \ \ \ \ \  \delta_{3}=-\delta_{4}=
v_{\max}^{\mathrm{th}},
\end{split}
\end{align}
where $v_{\min}^{\mathrm{th}}:=
\ind_{[\lambdamin<0]} \times  v_{\min}$, ~  
$v_{\max}^{\mathrm{th}}:=\ind_{[\lambdamax>0]} \times  v_{\max}$, and $\ind_{[\cdot]}$ denotes the indicator function. 
\end{thm}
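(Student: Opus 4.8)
The plan is to prove a matching lower bound and exhibit an explicit construction, following the same two-sided strategy used for the binary and triplet cases but carried out in full generality over the number of copies $m$. Throughout write $S = S(\theta)$ and recall from \eqref{equ:splitting} that $\II(\vv\delta,\vv w;\theta) = \sum_{i=1}^m w_i\,\delta_i^\top S\,\delta_i$. The one elementary fact I would record first is a pointwise Rayleigh estimate with thresholding: for any $\delta$ with $\|\delta\|\le 1$ one has $\lambda_{\min}\|\delta\|^2 \le \delta^\top S\delta \le \lambda_{\max}\|\delta\|^2$, and since $0 \le \|\delta\|^2 \le 1$ this tightens to $\lambda_{\min}^{\mathrm{th}} \le \delta^\top S\delta \le \lambda_{\max}^{\mathrm{th}}$ --- a negative lower bound or a positive upper bound gets shrunk toward $0$, while the sign-consistent side is unchanged --- with $\lambda_{\min}^{\mathrm{th}}, \lambda_{\max}^{\mathrm{th}}$ as in the statement.

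Next I would derive the lower bound, in a way that does not depend on $m$. Partition the copies by the sign of their weights, $P = \{i : w_i > 0\}$ and $N = \{i : w_i < 0\}$ (weights $w_i = 0$ contribute nothing), and set $a = \sum_{i\in P} w_i \ge 0$ and $b = -\sum_{i\in N} w_i \ge 0$. The two constraints defining $P_m^{-c}$ in \eqref{equ:pmneg}, namely $\sum_i w_i = 1$ and $\sum_i|w_i|\le c$, become $a - b = 1$ and $a + b \le c$, which force $a \le \frac{c+1}{2}$ and $b \le \frac{c-1}{2}$. Multiplying the Rayleigh bound by $w_i$ (tracking signs inside $P$ and $N$) and summing gives $\II(\vv\delta,\vv w;\theta) \ge a\lambda_{\min}^{\mathrm{th}} - b\lambda_{\max}^{\mathrm{th}}$; since $\lambda_{\min}^{\mathrm{th}} \le 0 \le \lambda_{\max}^{\mathrm{th}}$, pushing $a$ up to $\frac{c+1}{2}$ and $b$ up to $\frac{c-1}{2}$ yields $\II(\vv\delta,\vv w;\theta) \ge \frac{c+1}{2}\lambda_{\min}^{\mathrm{th}} - \frac{c-1}{2}\lambda_{\max}^{\mathrm{th}}$ for every feasible pair. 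This argument never uses the zero-average constraint $\sum_i w_i\delta_i = 0$, so it holds a fortiori over $\Delta_{\vv w}$; hence $G_m^{-c} \ge \frac{c+1}{2}\lambda_{\min}^{\mathrm{th}} - \frac{c-1}{2}\lambda_{\max}^{\mathrm{th}}$ for every $m \in \mathbb{Z}_+$.

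Finally I would check that \eqref{equ:4splitting} attains this bound. Feasibility is immediate: $\sum_i w_i = 1$, $\sum_i|w_i| = c$, $\|\delta_i\|\le 1$, and because the copies come in antisymmetric pairs ($\delta_1 = -\delta_2$, $\delta_3 = -\delta_4$) with equal weights within each pair, $\sum_i w_i\delta_i = 0$ holds automatically, so the scheme lies in $\Delta_{\vv w}$. Evaluating $\II$ on it and using $(v_{\min}^{\mathrm{th}})^\top S\, v_{\min}^{\mathrm{th}} = \lambda_{\min}^{\mathrm{th}}$ and $(v_{\max}^{\mathrm{th}})^\top S\, v_{\max}^{\mathrm{th}} = \lambda_{\max}^{\mathrm{th}}$ (the indicator $\ind_{[\cdot]}$ deletes a term exactly when the corresponding thresholded eigenvalue is $0$) gives $\II = \frac{c+1}{2}\lambda_{\min}^{\mathrm{th}} - \frac{c-1}{2}\lambda_{\max}^{\mathrm{th}}$, so $G_4^{-c}$ equals this value; for $m > 4$, appending copies with $w_i = 0,\ \delta_i = 0$ preserves all constraints and leaves $\II$ unchanged, so the same value is attainable, and combined with the lower bound $G_m^{-c} = G_4^{-c} = \frac{c+1}{2}\lambda_{\min}^{\mathrm{th}} - \frac{c-1}{2}\lambda_{\max}^{\mathrm{th}}$ for all $m \ge 4$. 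I do not anticipate any genuinely hard step: the only things to watch are the thresholding bookkeeping when $\lambda_{\min}\ge 0$ or $\lambda_{\max}\le 0$ (the degenerate cases $\lambda_{\min} = \lambda_{\max}$ and $S(\theta)=0$ are covered automatically, since the formula and the construction remain valid when the two eigenvectors coincide or the thresholded quantities vanish), and making explicit that the lower bound is truly $m$-free --- it rests only on $\sum_i w_i = 1$ and $\sum_i|w_i|\le c$, quantities insensitive to the number of copies, which is exactly why no splitting with $m \ge 5$ can outperform the quartet.
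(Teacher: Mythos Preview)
Your proposal is correct and follows essentially the same approach as the paper's proof: split the index set by the sign of $w_i$, bound each $\delta_i^\top S\delta_i$ by the thresholded eigenvalues, use $\sum_i w_i=1$ and $\sum_i|w_i|\le c$ to bound the positive and negative weight masses by $\frac{c+1}{2}$ and $\frac{c-1}{2}$, and then verify that the explicit quartet scheme attains the bound. Your treatment is slightly more explicit than the paper's in handling the $m>4$ case (padding with zero weights) and in noting that the lower bound never uses the zero-mean constraint, but the core argument is identical.
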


Therefore, if $\lambdamin = \lambdamax =0$, 
we have $v_{\min}^{\mathrm{th}}=v_{\max}^{\mathrm{th}} = 0$, and \eqref{equ:4splitting} yields effectively no splitting ($\delta_i=0$, $\forall i\in[4]$). In this case, no decrease of the loss can be made by  any splitting scheme, regardless of how large $m$ is. 

If $\lambdamax \geq \lambdamin > 0$ (resp. $\lambdamin \leq \lambdamax <0$), 
we have $v^{\mathrm{th}}_{\min} =0$ (resp. $v^{\mathrm{th}}_{\max}=0$), and \eqref{equ:4splitting} reduces to the positive (resp. negative) triplet splitting in Theorem~\ref{thm:3splitting}. 
There is no additional gain to use $m=4$ over $m=3$. 

If $\lambdamin<0<\lambdamax$, 
this yields a quartet splitting  (Figure~\ref{fig:demo}(e)) 
which has two positively weighted copies  split along the $v_{\min}$ direction,
and two negative weighted copies along the $v_{\max}$ direction. 
The advantage of this quartet splitting is that it exploits \emph{both maximum and minimum eigen-directions} simultaneously, while any binary or  triplet splitting can only 
benefit from one of the two directions.

\begin{algorithm*}[t] 
\caption{Signed Splitting Steepest Descent (S3D) for Progressive Training of Neural Networks} 
\begin{algorithmic} 
\STATE Starting from a small initial neural network.
Repeat the following steps until a convergence criterion is reached:  
\STATE \textbf{1. Parametric Updates}:
    Optimize the neuron weights using standard optimizer (e.g., stochastic gradient descent) to reach a local optimum, on which the parametric update can not make further improvement.  
    \vspace{.3\baselineskip} 
    
    \STATE \textbf{2. Growing by Splitting}: 
    Evaluate the maximum and minimum eigenvalues of each neuron; 
    select a set of neurons with most negative values of $G_{m}^{-c}$ with $m=2$, 3, or 4, using a heuristic of choice, and split these neurons using the optimal schemes 
    specified in Theorem~\ref{thm:2spliting}-\ref{thm:4spliting}. 
\end{algorithmic}
\label{alg:main}  

\end{algorithm*}

\textbf{Remark}\space\space 
A common feature of all the splitting schemes above ($m =2,3$ or $4$) 
is that the decrease of loss is all proportional to the spectrum radius of splitting matrix, that is,  
\begin{align} \label{equ:gmcrho}
G_m^{-c} \leq -\kappa_m \rho(S(\theta)), ~~~~~~~~\text{where}~~~~~~~~~~
\rho(S(\theta)) := \max(|\lambda_{\max}(S(\theta))|, ~|\lambda_{\min}(S(\theta))|),
\end{align}
where $\kappa_2 = \frac{c-1}{c+1}$ and $\kappa_m = \frac{c-1}{2}$ for $m \geq 3$.  Hence, 
unless $\rho(S(\theta))=0$, which implies $S(\theta)=0$, 
we can always decrease the loss by the optimal splitting schemes with any $m\geq 2$. 
This is in contrast with the optimal positive splitting in \eqref{equ:binarysplitting}, 
which  get stuck when $S(\theta)$ is positive semi-definite ($\lambdamin \geq0$). 

We can see from Eq~\eqref{equ:gmcrho}   that the effects of splittings with different $m\geq 2$ are qualitatively similar. 
The improvement of using the triplet and quartet splittings over the binary splittings is  only up to 
a constant factor of $\kappa_3/\kappa_2 = (c+1)/2$, and may not yield a significant difference on the final optimization result. 
As we show in experiments, it is preferred to use binary splittings ($m=2$), as it introduces less neurons in each splitting and yields much smaller neural networks. 

\paragraph{Algorithm}
Similar to \citet{splitting2019},
the splitting descent can be easily extended 
to general neural networks with multiple neurons, possibly in different layers, 
because the effect of splitting different neurons are additive as shown in Theorem~2.4 of \citet{splitting2019}. 

This yields the practical algorithm in \eqref{alg:main}, 
in which we alternate between 
\emph{i}) the  standard parametric update phase, 
in which we use traditional gradient-based optimizers 
until 
no further improvement can be made by pure parametric updates, 
and \emph{ii}) the splitting phase, in which we evaluate the minimum and maximum eigenvalues of the splitting matrices of the different neurons, select a subset of neurons with the most negative values of  $G_m^{-c}$ with $m=2$, 3, or 4, and split these neurons using the optimal schemes specified in Theorem~\ref{thm:2spliting}-\ref{thm:4spliting}.  

The rule for deciding how many neurons to split at each iteration 
can be a heuristic of users' choice. 
For example, we can decide a maximum number  $k$  of neurons to split and a positive threshold $\eta$, 
and select the top $k$ neurons with the most negative values of $G_m^{-c}$,
and satisfy $G_{m}^{-c} \leq -\eta$.  

\paragraph{Computational Cost}
Similar to \citet{splitting2019}, the eigen-computation of signed splittings requires  $\obig(md^3)$ in time and $\obig(md^2)$ in space, where $m$ is the number of neurons and $d$ is the parameter size of each neuron. 
However, this can be significantly improved by using 
the Rayleigh-quotient gradient descent for eigen-computation 
introduced in \citet{wang2019energy}, which 
has roughly the same time and space complexity as typical parametric back-propagation on the same network (i.e., $\obig(md^2)$ in time and $\obig(md)$ in space). 
See Appendix \ref{appendix:Fast_RQ} for more details on how we apply Rayleigh-quotient gradient descent in signed splittings. 

\paragraph{Convergence Guarantee} 
The original  S2D does not have a rigours convergence guarantee due to the local minimum issue associated with positive definite splitting matrices. 
With more signed splittings, 
the loss can be minimized much more thoroughly, 
and hence allows us to establish provably convergence guarantees. 
In particular, we show that, under proper conditions, 
by splitting two-layer neural networks using  
S3D with only binary splittings starting from a single-neuron network, 
we achieve a training MSE loss of $\eta$ by splitting at most $\mathcal O((n/(d\eta))^{3/2})$ steps, 
where $n$ is data size and $d$ the dimension of the input dimension. 
The final size of the network we obtain, which equals $\mathcal O((n/(d\eta))^{3/2})$, is smaller than the number of neurons required 
for over-parameterization-based analysis of standard gradient descent training of neural networks, 
and hence provides a theoretical justification of that splitting can yield accurate and smaller networks than standard gradient descent. 
For example, the analysis in \citet[][]{du2018gradient} requires   $\mathcal O(n^6)$ neurons, or $\mathcal O(n^2/d)$ in \citet[][]{oymak2019towards},  
larger than what we need when $n$ is large. 
The detailed results are shown in Appendix due to space constraint. 

\section{Experiments} 
We test our algorithm on various benchmarks, including CIFAR-100, ImageNet and ModelNet40. We apply our signed splitting steepest descent (S3D) following Algorithm \ref{alg:main} and compare it with splitting steepest descent (S2D) \citep{splitting2019}, which is the same as Algorithm~\ref{alg:main} except that only positive splittings are used.
We also consider an energy-aware variant following \citet{wang2019energy}, 
in which the increase of energy cost for splitting each neuron is estimated at each splitting step, 
and the set of neurons to split is selected by 
solving a knapsack problem 
to maximize the total splitting gain subject to a constraint on the increase of energy cost. See  \citet{wang2019energy} for details.

We tested S3D with different splitting sizes ($m=2,3,4$) and found that the binary splitting ($m=2$) tends to give the best performance in practical deep learning tasks of image and point cloud classification.  
This is because $m=3,4$ tend to give much larger networks while do not yield significant improvement over $m=2$ to compensate the faster growth of network size. 
In fact, if we consider the average gain of each new copy, $m = 2$ provides a better  trade-off between the accuracy and network sizes. Therefore, we only consider $m = 2$  in all the deep learning experiments.
Due to the limited space, we put more experiment details in Appendix.

\paragraph{Toy RBF neural networks}
We revisit the toy RBF neural network experiment described in \citep{splitting2019} to domenstrate the benefit of introducing signed splittings.  
\citet{splitting2019}, it still tends to get stuck at local optima  when the splitting matrices are positive definite. By using more general signed splittings, our S3D algorithm allows us to escape the local optima that S2D can not escape, hence yielding better results. 
For both S2D and S3D, we start with an initial network with a single neuron and gradually grow it by splitting neurons. 
We test both S2D which includes only positive binary splittings, 
and S3D with signed binary splittings ($m=2$), 
triplet splittings ($m=3$), and quartet splittings ($m=4$), respectively. More experiment setting can be found in Appendix \ref{app:toy}.

As shown in Figure \ref{fig:nn_toy} (a),  
S2D gets stuck in a local minimum, 
while our signed splitting can escape the local minima and fit the true curve well in the end. Figure \ref{fig:nn_toy} (b) shows different loss curves trained by S3D ($m=2$) with different $c$. 
The triangle remarks in  Figure \ref{fig:nn_toy} (b) 
indicate the first time when positive and signed splittings pick  differ neurons.  
%
Figure \ref{fig:nn_toy} (d) further provides evidence showing that S3D can pick up a different but better neuron (with large $\lambda_{\max}$) to split compared with S2D, which helps the network get out of local optima.
\begin{figure*}[t]
\centering
\setlength{\tabcolsep}{1.0pt}
\begin{tabular}{ccccc}

\raisebox{3.0em}{\rotatebox{90}{{\scriptsize Loss}}}~~\includegraphics[width =0.2\textwidth]{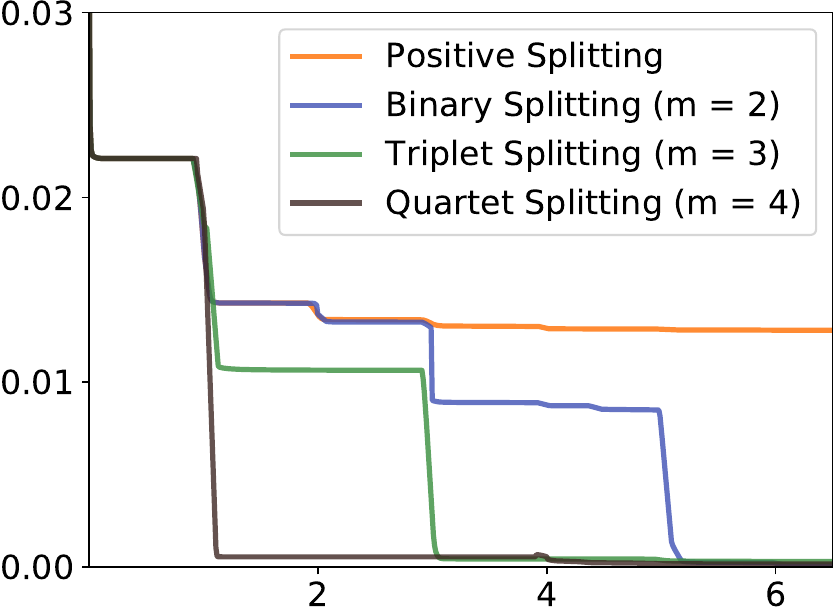}&
\raisebox{3.0em}{\rotatebox{90}{{\scriptsize Loss}}}\hspace{-0.2em}~\includegraphics[width =0.2\textwidth]{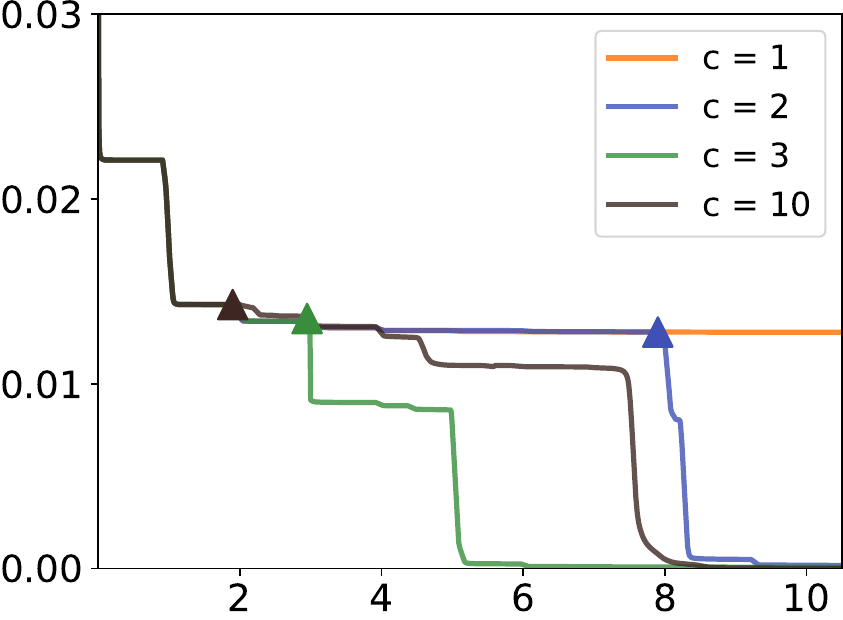} &
\includegraphics[width =0.185\textwidth]{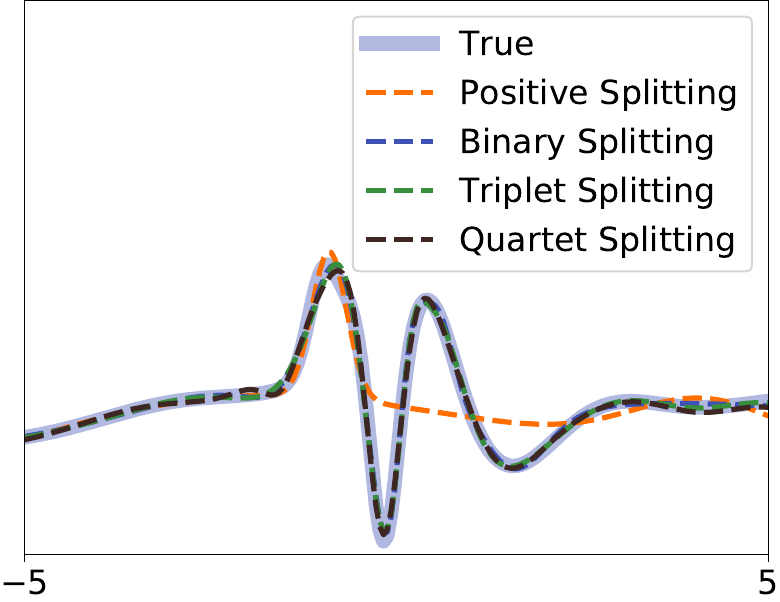} & 
\raisebox{-0.10em}{\includegraphics[width =0.2\textwidth]{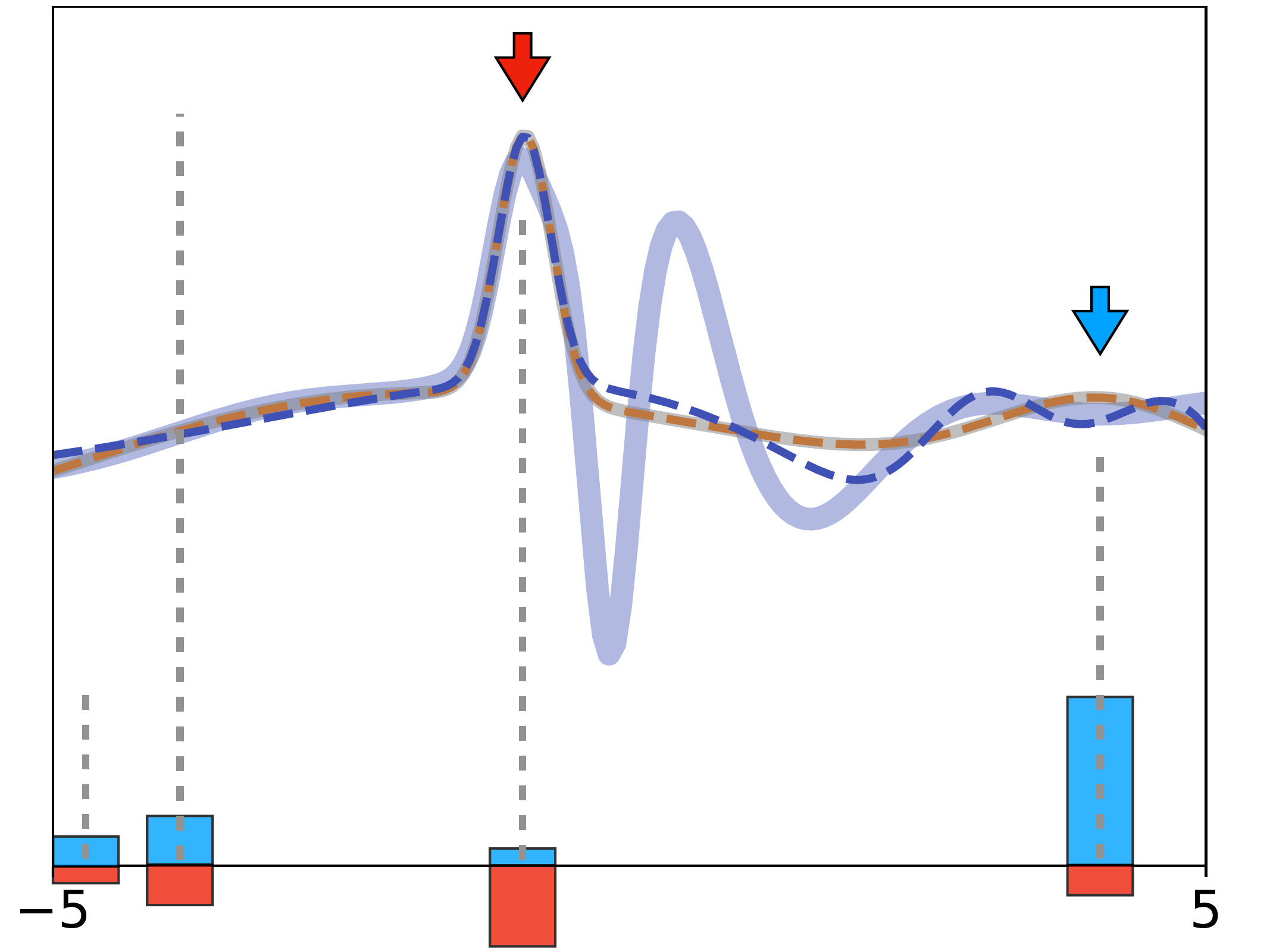}} & 
\raisebox{0.50em}{\includegraphics[width =0.15\textwidth]{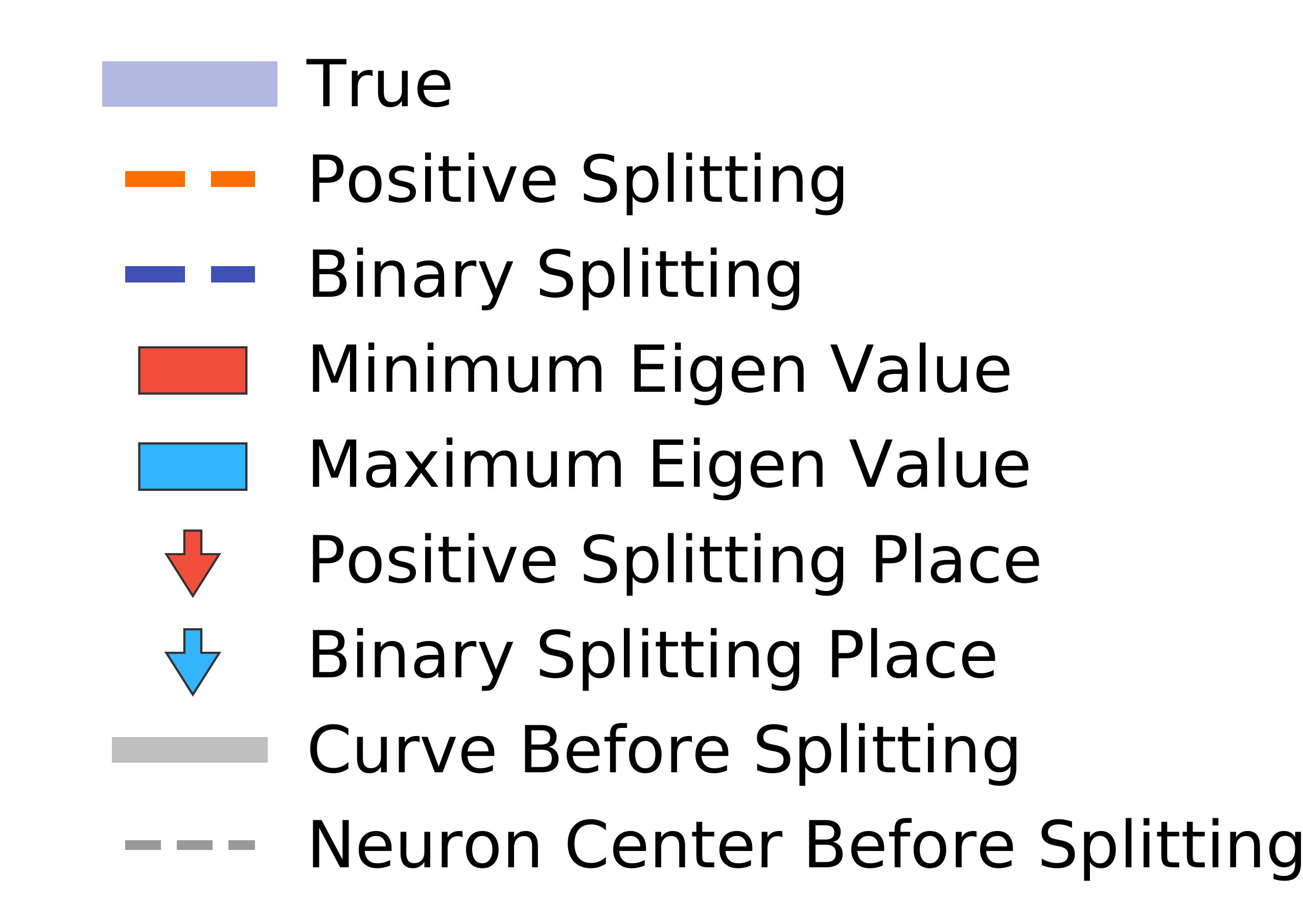} }
\\
\scriptsize ~~~\quad Iterations $(\times 10^{4})$
&
\scriptsize ~~~\quad Iterations $(\times 10^{4})$
&
&
\\
\scriptsize (a) & \scriptsize (b) & \scriptsize (c) & \scriptsize (d)

\end{tabular}
\vspace{-1em}
\caption{\small
Results on a one-dimensional RBF network.
(a) Loss curve of different splitting methods when $c=3$. (b) Loss curves of signed binary splittings ($m=2$) with different values of $c$. 
Note that $c=1$ reduces to positive splitting. The triangle markers indicate the first time when S2D and S3D give different splitting results. (c)  The curve fitted by different splitting methods when the network grow into 5 neurons. 
(d) The centers of the RBF neurons (indicated by the bar centers) of the curve learned when 
applying signed binary splittings ($m=2$) for 4 steps, 
and their corresponding maximum and minimum eigenvalues (the blue and red bars). 
The blue and red arrows indicate the location of splitting according to the maximum and minimum eigenvalues, respectively, 
and the blue and orange dashed lines are the corresponding curves of binary splittings and positive splittings $(m=2)$ we obtained after the splittings.}
\label{fig:nn_toy}
\vspace{-1.0em}
\end{figure*}

\begin{figure*}[ht]
\centering
\scalebox{0.9}{
\setlength{\tabcolsep}{1.0pt}
\begin{tabular}{cccc}

\raisebox{3.0em}{\rotatebox{90}{{\scriptsize Test Accuracy}}}~~\includegraphics[height =0.20\textwidth]{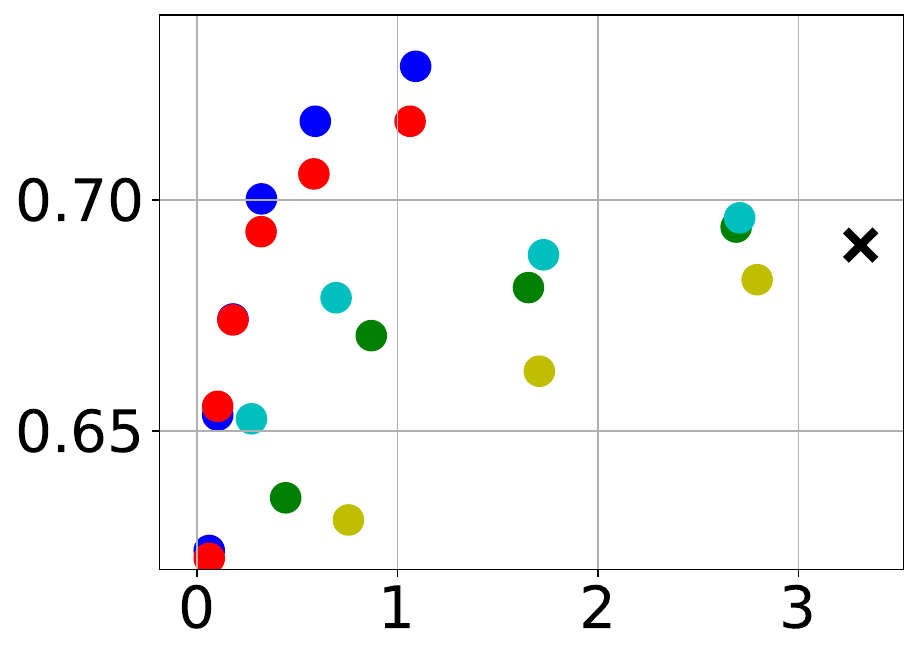}
&\raisebox{3.0em}{\rotatebox{90}{{\scriptsize Test Accuracy}}}~~\includegraphics[height=0.20\textwidth]{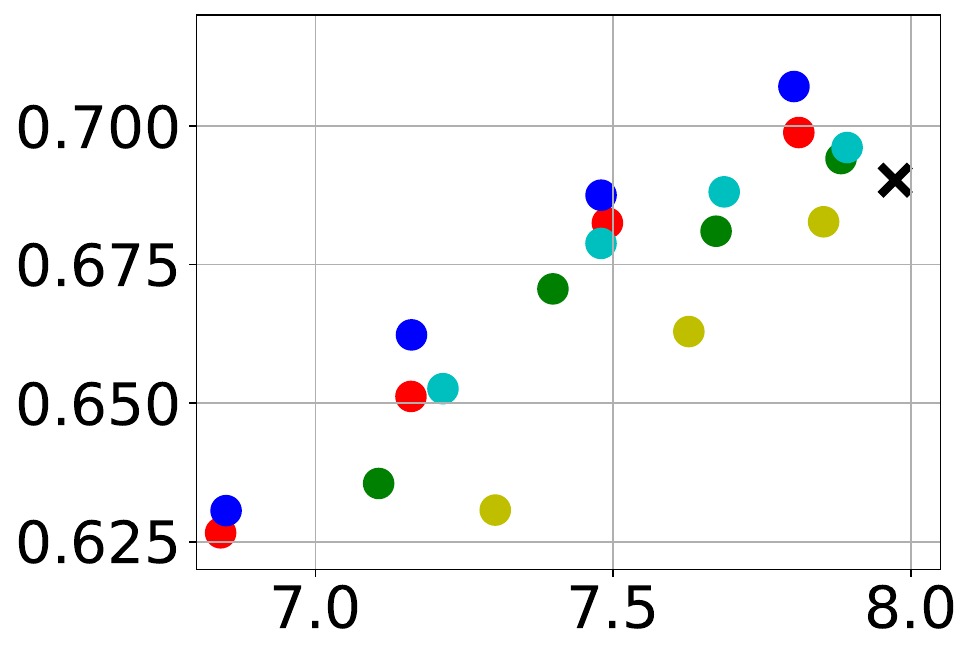} 
& \raisebox{1em}{\includegraphics[height =0.10\textwidth]{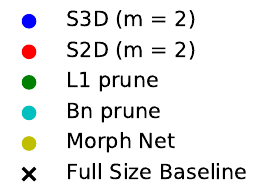}}&
\raisebox{3.0em}{\rotatebox{90}{{\scriptsize Test Accuracy}}}\includegraphics[height =0.20\textwidth]{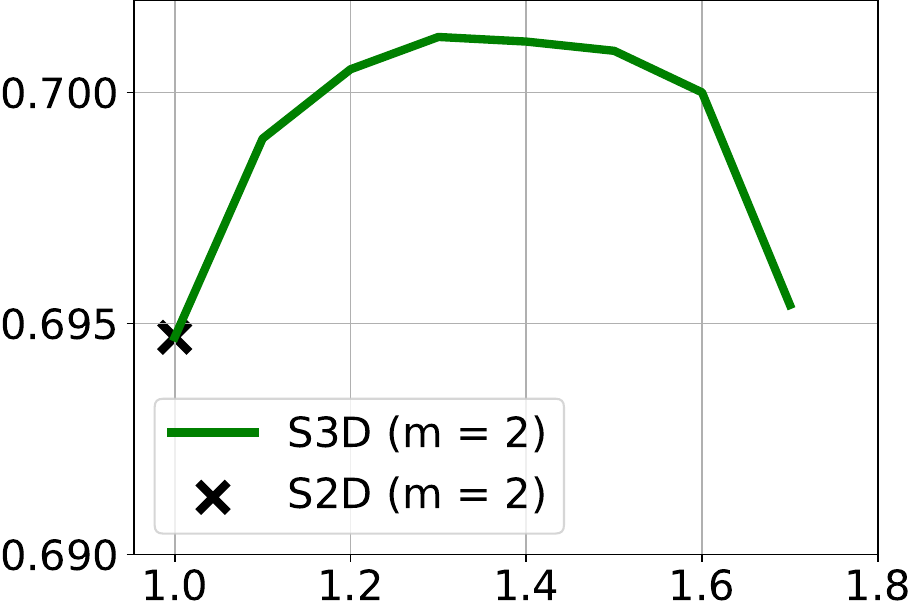}\\
\scriptsize ~~\quad\quad(a) \# Params (M)
&\scriptsize ~~\quad\quad(b) Log10 (FLOPs)
& 
& \scriptsize \quad\quad(c) Value of c 
\end{tabular}
}
\vspace{-1em}
\caption{\small 
Results of MobileNetV1 on CIFAR-100. 
(a) Testing accuracy v.s. the number of parameters of the models learned by S3D with binary splittings ($m=2$) and other baselines. 
(b) Results in 
the energy-aware setting by S3D with binary splittings ($m=2$) and other baselines. (c) Results of S2D and S3D ($m=2$) when $c$ varies 
in the same setting as that in (a) at the $5$th spitting step. 
}
\label{fig:cifar100}
\end{figure*}

\paragraph{Results on CIFAR-100} 
We apply  S3D  to grow DNNs for the image classification task. We test our method on MobileNetV1 \citep{howard2017mobilenets} on CIFAR-100 and compare our S3D with S2D \citep{splitting2019} as well as other pruning baselines, including L1 Pruning \citep{liu2017learning}, Bn Pruning \citep{liu2017learning} and MorphNet \citep{gordon2018morphnet}. 
We also apply our algorithm in an energy-aware setting discussed in \citet{wang2019energy}, 
which decides the best neurons to split 
by formulating a knapsack problem  to best trade-off  the splitting gain and energy cost; see  \citet{wang2019energy}
for the details.  
To speedup the eigen-computation 
in S3D and S2D, we use the fast gradient-based eigen-approximation algorithm in \citep{wang2019energy} (see Appendix~\ref{appendix:Fast_RQ}).  
Our results show that our algorithm outperforms prior arts with higher accuracy and lower cost in terms of both model parameter numbers and FLOPs. See more experiment detail in Appendix \ref{app:cifar}

Figure \ref{fig:cifar100} (a) and (b) show that our S3D algorithm outperforms all the baselines in both the standard-setting and
the energy-aware setting of \citet{wang2019energy}. Table \ref{table:cifar100} reports the testing accuracy,  parameter size and FLOPs of the learned models. We can  see that our method achieves significantly higher accuracy as well as lower  parameter sizes and FLOPs. 
%
 We study the relation between testing accuracy and the hyper-parameter $c$ in Figure \ref{fig:cifar100} (c), at the 5th splitting step in Figure~\ref{fig:cifar100} (a) (note that  $c=1.0$ reduces to S2D).  We can see that $c\approx 1.3$ is optimal in this case.




\begin{table}[t]

\parbox{.5\linewidth}{\begin{center}
\setlength{\tabcolsep}{5pt}
\renewcommand\arraystretch{1.0}
\vspace{-11.0em}
\scalebox{0.8}{

\begin{tabular}{l|ccc}
\hline
Method & Accuracy & \# Param (M) & \# Flops (M) \\
\hline
Full Size Baseline& 69.04& 3.31 & 94.13\\
L1 \citep{liu2017learning} & 69.41& 2.69 & 76.34\\
Bn \citep{liu2017learning} & 69.61& 2.71 & 78.15\\
MorphNet \citep{gordon2018morphnet}& 68.27& 2.79 & 71.63\\
\hline
S2D-5 \citep{splitting2019} & 69.69& 0.31 & 79.94\\
S3D-5 & \textbf{70.19}& \textbf{0.30} & 73.69\\
\hline

\end{tabular}
}
\end{center}

\caption{{Comparison of different methods when the testing accuracy is around 69\%.}
S2D-5 and S3D-5 represent applying S2D and S3D $(m=2)$ for  5 splitting steps, respectively. 
}\label{table:cifar100}

}
\quad
\parbox{.45\linewidth}{
\setlength{\tabcolsep}{5pt}
\renewcommand\arraystretch{1.2}
\vspace{-0.5em}
    \centering
   
    \scalebox{0.8}{
    \begin{tabular}{l|ccc}
        \hline
        Model & MACs (G) & Top-1 & Top-5 \\
        \hline \hline
        MobileNetV1 (1.0x) & 0.569 & 72.93 & 91.14 \\
        S2D-4  &  0.561 & 73.96 & 91.49  \\
        S3D-4  & \textbf{0.558} & \textbf{74.12} & \textbf{91.50} \\
        \hline \hline
        MobileNetV1 (0.75x) & 0.317 & 70.25 & 89.49\\
        AMC \citep{he2018amc} & 0.301 & 70.50 &	89.30\\
        S2D-3  & 0.292 & 71.47 & 89.67 \\
        S3D-3 & \textbf{0.291} & \textbf{71.61} & \textbf{89.83} \\
        \hline \hline
        MobileNetV1 (0.5x) &  0.150 & 65.20 & 86.34\\
        S2D-2  &  \textbf{0.140} & 68.26 & 87.93 \\
        S3D-2 & \textbf{0.140} & \textbf{68.72} & \textbf{88.19} \\
         \hline \hline
         S2D-1   & 0.082 & 64.06 & 85.30 \\
         S3D-1  & 0.082 & \textbf{64.37} & \textbf{85.49} \\
         \hline \hline
         Seed &  0.059 & 59.20 & 81.82 \\ 
         \hline
    \end{tabular}
    }
    \vspace{1em}
    \caption{Results of ImageNet classification using MobileNetV1. S2D-$k$ and S3D-$k$ denote we split the network $k$ times using S2D and S3D ($m=2$), respectively.}
    \label{tab:imagenet_mbv1}
    }
\end{table}

\begin{table}[t]
\setlength{\tabcolsep}{3pt}
\vspace{-11em}
\parbox{0.5\textwidth}{
\scalebox{.8}{
\begin{tabular}{l|ccc}
\hline
Model & Acc. & Forward time (ms) & \# Param (M)\\
\hline
PointNet \citep{Qi_2017_CVPR} & 89.2 & 32.19 & 2.85\\
PointNet++ \citep{NIPS2017pointnet++}& 90.7& 331.4 & 0.86\\
\hline
DGCNN (1.0x)& 92.6& 60.12 & 1.81\\
DGCNN (0.75x)& 92.4& 48.06 & 1.64\\
DGCNN (0.5x)& 92.3& 38.90 & 1.52\\
\hline
DGCNN-S2D-4 & 92.7 & 42.83 & 1.52 \\
DGCNN-S3D-4 & \textbf{92.9} & 42.06 & 1.51\\
\hline

\end{tabular}}

\caption{Results on the ModelNet40 classification task. DGCNN-S2D-4 and DGCNN-S3D-4 
denote  applying S2D and S3D ($m=2$)  
for 4 splitting steps, respetively.} 
\label{tab:dgcnn}
\vspace{-2em}
}
\end{table}

\paragraph{Results on ImageNet} 
We apply our method in ImageNet classification task. 
We follow the setting of \citep{wang2019energy}, 
using their energy-aware neuron selection criterion and 
fast gradient-based eigen-approximation. 
We also compare our methods with AMC \citep{he2018amc} , full MobileNetV1 and MobileNetV1 with $0.75\times$, $0.5\times$ width multipliers on each layers. 
We find that 
our S3D achieves higher Top-1 and Top-5 accuracy than other methods with comparable multiply-and-accumulate operations (MACs). See details of setting in Appendix \ref{app:img}.
%
Table \ref{tab:imagenet_mbv1} shows that our S3D obtains better Top-1 and Top-5 accuracy compared with the S2D in \citet{wang2019energy} and other baselines 
 with the same or smaller MACs.
 We also visualize the filters after splitting on ImageNet; see Appendix \ref{sec:vis}. 

\paragraph{Results on Point Cloud Classification}  
\myempty{ 
Point cloud is a simple and popular representation of 3D objects, which can be easily captured and processed by mobile devices. 
Point cloud classification amounts  to 
classifying 3D objects based on their point cloud  representations, and is found in many cutting-edge AI applications, such as face recognition in Face ID and LIDAR-based recognition in autonomous driving. 
Since many of these applications are deployed on mobile devices, 
a key challenges is to build small and energy efficient networks with high accuracy. 
We can attack this challenge with splitting steepest descent. 
}
%
 We consider  point cloud classification with 
 Dynamic graph convolution neural network (DGCNN) \citep{wang2019dynamic}. 
DGCNN one of the best networks for point cloud, but tends to be expensive in both speed and space,   because it involves K-nearest-neighbour (KNN) operators 
 for aggregating neighboring features on the graph.  
 We apply S3D to search better DGCNN structures 
 with smaller sizes, hence significantly improving the space and time efficiency. 
 Following the experiment in \citet{wang2019dynamic}, we choose ModelNet40 as our dataset. See details in Appendix \ref{app:dgcnn}. 
%
%
 %
 %
 Table \ref{tab:dgcnn} shows the result  compared with PointNet \citep{Qi_2017_CVPR}, PointNet++ \citep{NIPS2017pointnet++} and DGCNN with different multiplier on its EgdConv layers. We compare the accuracy as well as model size and time cost for forward processing.
  For forward processing time, we test it on a single NVIDIA RTX 2080Ti  with a batch size of 16. We can see that our S3D algorithm obtains networks with the highest accuracy among all the methods, with a faster forward processing speed than DGCNN ($0.75\times$) and a smaller model size than DGCNN ($0.5\times$). 

 



    


\section{Related Works}
 Neural Architecture Search (NAS)  has been traditionally framed as a discrete combinatorial  optimization 
 and solved based on black-box optimization methods such as reinforcement learning \citep[e.g.][]{zoph2016neural,zoph2018learning}, 
 evolutionary/genetic algorithms \citep[e.g.,][]{stanley2002evolving, real2018regularized}, 
 or continuous relaxation followed with gradient descent  
 \citep[e.g.,][]{liu2018darts, xie2018snas}. 
 These methods need to search in a large model space with expensive evaluation cost,  
 and can be computationally expensive or easily stucked at local optima. Techniques such as   
  weight-sharing \citep[e.g.][]{pham2018efficient, cai2018proxylessnas,bender2019understanding}  
  and low fidelity estimates \citep[e.g.,][]{zoph2018learning, pmlr-v80-falkner18a, runge2018learning} 
  have been developed to alleviate the cost problem in NAS; 
  see e.g., \citet{elsken2019neural, wistuba2019survey} for recent surveys of NAS. 
  In comparison, splitting steepest descent is based on a significantly different functional steepest view that leverages the fundamental topological information of deep neural architectures to enable more efficient search, 
  ensuring both rigorous theoretical guarantees and superior practical performance. 

 The idea of progressively growing neural networks has been considered 
 by researchers in various communities from different angles. 
 However, most existing methods are based on heuristic ideas. 
 For example, \citet{wynne1992node} proposed a heuristic method to split neurons based on the eigen-directions of covariance matrix of the gradient. See e.g., \citet{ghosh1994structural, utgoff1998constructive} for surveys of similar ideas in the classical literature. 
 
 Recently, \citet{chen2015net2net} proposed a method called Net2Net for knowledge transferring which grows a well-trained network by splitting randomly picked neurons along random directions.  
 Our optimal splitting strategies can be directly adapted to improve Net2Net.  
 Going beyond node splitting, more general operators  
 that grow networks while preserving the function represented by the networks, referred to as \emph{network morphism}, have been studied  and exploited in a series of recent works \citep[e.g.,][]{chen2015net2net, wei2016network, cai2018efficient,  elsken2018efficient}. 
 

 A more principled progressive training approach for neural networks 
 can be derived using Frank-Wolfe  \citep[e.g.,][]{schwenk2000boosting,bengio2006convex, bach2017breaking}, 
 which yields greedy algorithms that 
 iteratively add optimal new neurons while keeping the previous neurons fixed. 
 Although rigorous convergence rate can be established for these methods \citep[e.g.,][]{bach2017breaking}, they are not practically applicable because adding each new neuron requires to solve an intractable non-convex global optimization problem. 
  In contrast, the splitting steepest descent approach is fully computationally tractable, 
  because the search of the optimal node splitting schemes amounts to an tractable eigen-decomposition problem (albeit being non-convex). 
 The original S2D in \citet{splitting2019} did not provide a convergence guarantee, because the algorithm gets stuck when the splitting matrices become positive definite. 
 By using signed splittings, our S3D can escape more local optima, ensuring both strong theoretical guarantees and better empirical performance.

An alternative approach for learning small and energy-efficient networks is to \emph{prune} large pre-trained neural networks to obtain compact sub-network structures   \citep[e.g.,][]{han2015deep, li2016pruning, liu2017learning, liu2018rethinking, frankle2018lottery}. 
As shown in our experiments and \citet{splitting2019, wang2019energy}, 
the splitting approach can outperform existing pruning methods, without requiring the overhead of  pre-traininging large models. A promising future direction is to design algorithms that adaptively combine splitting with pruning to achieve  better results. 


\section{Conclusion}
In this work, 
we develop an extension of the splitting steepest descent framework  
to avoid the local optima by introducing signed splittings. 
Our S3D can learn small and accurate networks in challenging cases. 
%
For future work, we will develop further speed up of S3D and explore more flexible ways for optimizing network architectures going beyond neuron splitting. 
%

\section*{Acknowledge}
The work is conducted in the statistical learning and AI group in  computer science at UT Austin, which is supported in part by CAREER-1846421, SenSE-2037267, EAGER-2041327, and NSF AI Institute for Foundations of Machine Learning (IFML).


\bibliography{main}
\bibliographystyle{bibstyle}

\newpage\clearpage
\appendix
\onecolumn

\newpage
\section{Derivation of Optimal Splitting Schemes with Negative Weights}

\begin{lem}\label{lem:eigen}
Let $(\vv\delta^*, \vv w^*)$  be an optimal solution of \eqref{equ:optIIneg}. 
Then $\delta_i^*$ must be an eigen-vector of $S(\theta)$ unless $w_i^* = 0$ or $\delta_i^* =0$. 
\end{lem}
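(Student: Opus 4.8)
The plan is to fix the optimal weights $\vv w^*$ and analyze the inner minimization of $\II(\vv\delta,\vv w^*;\theta)=\sum_{j} w_j^*\,\delta_j^\top S(\theta)\,\delta_j$ over $\vv\delta\in\Delta_{\vv w^*}$ by a first-order perturbation argument. Fix an index $i$ with $w_i^*\neq 0$ and $\delta_i^*\neq 0$, and write $S(\theta)\delta_i^*=\beta\,\delta_i^*+g$ with $g\perp\delta_i^*$; the statement ``$\delta_i^*$ is an eigenvector of $S(\theta)$'' is exactly $g=0$. Assuming $g\neq 0$, I perturb $\delta_i^*$ by a norm-preserving rotation inside the sphere $\{\|\delta\|=\|\delta_i^*\|\}$, namely $\delta_i(t)=\cos(t)\,\delta_i^*+\sin(t)\,\|\delta_i^*\|\,\hat u$ for a unit $\hat u\perp\delta_i^*$; this keeps $\|\delta_i(t)\|\le 1$, and a direct computation gives $\frac{d}{dt}\big(w_i^*\,\delta_i(t)^\top S(\theta)\,\delta_i(t)\big)\big|_{t=0}=2\,w_i^*\,\|\delta_i^*\|\,g^\top\hat u$, which is nonzero (in fact $\pm 2 w_i^*\|\delta_i^*\|\|g\|$) for $\hat u=\pm g/\|g\|$.

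Since the rotation changes $\sum_j w_j^*\delta_j$ at rate $w_i^*\|\delta_i^*\|\hat u$, feasibility is restored by moving a second copy. There is an index $k\neq i$ with $w_k^*\neq 0$ — otherwise $\sum_j w_j^*\delta_j=w_i^*\delta_i^*=0$ would force $\delta_i^*=0$ — so I set $\delta_k(t)=\delta_k^*-\frac{w_i^*\|\delta_i^*\|}{w_k^*}\,\hat u\,\sin(t)$ (plus an $O(t^2)$ correction absorbed by any copy with slack, since the gain is $O(t)$ while the residual of $\sum_j w_j^*\delta_j=0$ is $O(t^2)$). Adding the two contributions, the first-order change of $\II$ equals $2\,w_i^*\,\|\delta_i^*\|\,\big[S(\theta)(\delta_i^*-\delta_k^*)\big]^\top\hat u$ (the $\beta\,\delta_i^*$ term drops because $\hat u\perp\delta_i^*$). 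Optimality forbids this from being negative for any admissible $\hat u$; when $\delta_k^*$ has slack in its norm constraint, both $\hat u$ and $-\hat u$ are admissible over all of $\delta_i^{*\perp}$, so $S(\theta)(\delta_i^*-\delta_k^*)$ must be parallel to $\delta_i^*$, and if in addition $\delta_k^*=0$ (one of the excluded cases) this reads $S(\theta)\delta_i^*\parallel\delta_i^*$, the desired conclusion.

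The remaining work is case analysis when no compensating copy has slack: then every used $\delta_k^*$ has $\|\delta_k^*\|=1$, the admissible $\hat u$ range only over a half of $\delta_i^{*\perp}$ (the sign constraint $\frac{w_i^*}{w_k^*}\,\delta_k^{*\top}\hat u\ge 0$ coming from $\|\delta_k\|\le 1$), and one can conclude only $S(\theta)(\delta_i^*-\delta_k^*)\in\mathrm{span}(\delta_i^*,\delta_k^*)$; applying the same rotation argument to $\delta_k^*$ and chaining the resulting relations across the copies (together with the trivial cases $m=1$ and ``only one nonzero weight'') forces all the directions involved to be collinear or to be genuine eigenvectors, which is the claim. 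I expect this bookkeeping — maintaining the zero-average constraint $\sum_j w_j^*\delta_j=0$ and every norm constraint $\|\delta_j\|\le 1$ simultaneously while preserving the sign of the first-order gain — to be the only delicate point. An alternative, more compact route is to write the KKT stationarity condition of the inner problem, $\big(S(\theta)+\tfrac{\gamma_i}{w_i^*}I\big)\delta_i^*=-\tfrac12\mu$ with $\gamma_i\ge 0$ and $\mu$ the multiplier of the coupling constraint, and reduce the lemma to showing $\mu=0$ (again the crux), whence $S(\theta)\delta_i^*=-\tfrac{\gamma_i}{w_i^*}\delta_i^*$.
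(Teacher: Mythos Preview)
The paper's proof is a two–line KKT argument: with $\vv w^*$ fixed, it writes the stationarity of $\sum_i w_i\delta_i^\top S\delta_i$ subject to $\big\|\sum_i w_i\delta_i\big\|=0$ and $\|\delta_i\|=1$, obtaining
\[
w_i^* S\delta_i^* \;-\; \lambda_1\,w_i^*\,\bar\delta^* \;-\; \lambda_{2,i}\,\delta_i^* \;=\;0,\qquad \bar\delta^*:=\sum_i w_i^*\delta_i^*,
\]
and then simply cancels the middle term because $\bar\delta^*=0$, leaving $S\delta_i^*=(\lambda_{2,i}/w_i^*)\,\delta_i^*$. No perturbation, no compensation, no case analysis.

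Your route is genuinely different and much more laborious, and, as you yourself flag, incomplete. The ``chaining'' step meant to handle the case where every compensating copy saturates its norm constraint is only asserted: from $S(\delta_i^*-\delta_k^*)\in\mathrm{span}(\delta_i^*,\delta_k^*)$ for several pairs $(i,k)$ it is not obvious that each $\delta_i^*$ individually is an eigenvector, and your sketch does not close this. In the KKT alternative you mention at the end, you correctly isolate ``$\mu=0$'' as the crux but do not prove it. The paper's device is precisely what makes that crux disappear: by writing the coupling constraint as the \emph{scalar} equation $\|\bar\delta\|^2=0$, its gradient $2w_i\bar\delta$ vanishes at every feasible point, so the multiplier term never enters. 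You may fairly object that this scalar formulation violates standard constraint qualifications; the paper does not address that point, and in any case the lemma is used only as a guide, since the subsequent theorems verify the explicit optimal schemes by exhibiting matching upper and lower bounds. If you want a self-contained argument, the cleanest path is the KKT one with the vector multiplier $\mu$, supplemented by a separate argument that $\mu=0$ (for instance by summing the stationarity equations over $i$ and using the constraints together with optimality in $\vv w$), rather than the perturbation-and-compensation bookkeeping.
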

\begin{proof} 
Write $S = S(\theta)$ for simplicity. 
With fixed weights $\vv w$, the optimization w.r.t. $\vv \delta$ is
\begin{align*}
    \min_{\vv\delta} 
    \sum_{i=1}^m w_i \delta_i ^\top S \delta_i ~~~s.t.~~~ \norm{\sum_{i=1}^m w_i \delta_i} = 0,~~ \norm{\delta_i} = 1.
\end{align*}
By KKT condition, the optimal solution must satisfy 
\begin{align*}
    & w_i^* S \delta_i^*  - 
    \lambda_1 w_i^* \bar\delta^* - \lambda_2^* \delta_i = 0   \\
    & \bar \delta^* :=\sum_{i=1}^m w_i^* \delta_i^* = 0,
\end{align*}
where $\lambda_1 $ and $\lambda_2$ are two Lagrangian multipliers.  Canceling out $\bar \delta^*$ gives 
$$
w_i^*  S \delta_i^*  - \lambda_2 \delta_i^* =0.
$$
Therefore, if $w_i^*\neq 0$ and $\delta_i\neq 0$, then $\delta_i^*$ must be the eigen-vector of $S$ with eigen-value $\lambda_2/w_i^*$. 
\end{proof}

\newpage \clearpage
\subsection{Derivation of Optimal Binary Splittings ($m=2$)}

\begin{thm}
1) Consider the optimization in \eqref{equ:optIIneg} with $m =2$ and $c\geq 1$. Then the optimal solution must satisfy 
$$\delta_1 = r_1 v, ~~~~\delta_2 = r_2 v,$$
where $v$ is an eigen-vector of $S(\theta)$ and $r_1, r_2 \in \R^2$ are two scalars. 

2) In this case, the optimization reduces to 
\begin{align} \label{equ:R2Opt}
\begin{split} 
    G_2^{-c}:= 
    \min_{\vv w, \vvr, v} 
     (w_1 r_1^2 & + w_2 r_2^2)\times 
     \lambda \\ 
    s.t. ~~~&  w_1 + w_2 = 1 \\
    & w_1 r_1 + w_2 r_2 = 0 \\
    & |w_1| + |w_2|  \leq c \\
    & |r_1|, |r_2| \leq 1 \\
    & \text{$\lambda$ is an eigen-value of $S(\theta)$}. 
    \end{split}
    \end{align}
    
3)    
The optimal value above is 
\begin{align}\label{equ:G2capp}
G_{2}^{-c} = 
\min\left (  \lambdamin, ~~  -\frac{c-1}{c+1}\lambdamax, ~~
0 \right ). 
\end{align}
If $  -\frac{c-1}{c+1} \lambda_{\max} <  \min(\lambdamin,0)$, the optimal solution is achieved by 
\begin{align*} 
        w_1 = - \frac{c-1}{2}, ~~~~ \delta_1 =v_{\max}, &&  
         w_2 = \frac{c+1}{2}, ~~~~ \delta_2 = \frac{c-1}{c+1} v_{\max}. 
        \end{align*}
If $
\lambda_{\min} <  
\min(-\frac{c-1}{c+1}\lambdamax,0)$ 
the optimal solution is achieved by 
\begin{align*} 
        w_1 =  \frac{1}{2}, ~~~~ \delta_1 =v_{\min}, &&  
         w_2 = \frac{1}{2}, ~~~~ \delta_2 =  - v_{\min}. 
        \end{align*}
If $0\leq \min(\lambdamin,-\frac{c-1}{c+1}\lambdamax)$, and hence $\lambdamin=\lambdamax=0$,  
the optimal solution is achieved by no splitting: $\delta_1=\delta_2=0.$ 
\end{thm}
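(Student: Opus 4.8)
The plan is to prove the three claims in sequence, reducing the matrix optimization \eqref{equ:optIIneg} with $m=2$ first to a scalar problem and then solving that scalar problem by case analysis. For part 1), I would invoke Lemma~\ref{lem:eigen}: at an optimal solution each $\delta_i^*$ with $w_i^*\neq 0$, $\delta_i^*\neq 0$ is an eigen-vector of $S(\theta)$. The extra work here is to argue that $\delta_1^*$ and $\delta_2^*$ may be taken to be (scalar multiples of) the \emph{same} eigen-vector. If $w_1^*=0$ or $\delta_1^*=0$, the constraint $w_1\delta_1+w_2\delta_2=0$ together with $w_1+w_2=1$ forces $\delta_2^*=0$ as well (since $w_2^*\neq 0$ in that case), so both are trivially multiples of any eigen-vector. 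Otherwise both are eigen-vectors; if they correspond to the same eigen-value we are done, and if to different eigen-values, the zero-average-displacement constraint $w_1^*\delta_1^*=-w_2^*\delta_2^*$ forces linear dependence of two eigen-vectors from distinct eigenspaces, which is impossible unless one of them vanishes. Writing $\delta_i=r_iv$ with $\|v\|=1$ and $r_i\in[-1,1]$ then gives $\delta_i^\top S\delta_i = r_i^2\lambda$ with $\lambda$ the eigen-value attached to $v$, which is exactly the reduced problem \eqref{equ:R2Opt}, establishing part 2).

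For part 3), I would solve \eqref{equ:R2Opt}. Fix the eigen-value $\lambda$ and minimize over $w_1,w_2,r_1,r_2$; at the end optimize over the choice of $\lambda$, which clearly should be $\lambdamin$ when the objective coefficient $w_1r_1^2+w_2r_2^2$ is negative and $\lambdamax$ when it is positive (and we always retain the option of no splitting, giving value $0$). So I split into two subproblems: (i) maximize $w_1r_1^2+w_2r_2^2$ and pair it with $\lambdamin$, and (ii) minimize $w_1r_1^2+w_2r_2^2$ (to the most negative value) and pair it with $\lambdamax$. Using $w_2=1-w_1$ and $w_1r_1=-w_2r_2$ one eliminates $r_2 = -w_1r_1/(1-w_1)$, and the objective becomes $w_1r_1^2\bigl(1 + w_1/(1-w_1)\bigr) = w_1 r_1^2/(1-w_1)$. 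Subject to $|r_1|\le 1$ we take $r_1^2=1$; subject to $|w_1|+|1-w_1|\le c$ the feasible range of $w_1$ is $[\,(1-c)/2,\ (1+c)/2\,]$. Then $g(w_1):=w_1/(1-w_1)$ is increasing on this interval (its derivative is $1/(1-w_1)^2>0$ away from $w_1=1$; one must note $w_1=1$ is interior only if $c\ge1$, but there $r_2$ would require $w_2=0$, a degenerate case handled separately, or one checks the sign of $1-w_1$ on each side), so its maximum is at $w_1=(1+c)/2$, giving $g = \frac{(1+c)/2}{(1-c)/2} = -\frac{c+1}{c-1}$, wait — this needs care: at $w_1=(1+c)/2>1$ we have $1-w_1<0$, so I instead record that the \emph{supremum} of $w_1r_1^2+w_2r_2^2$ over the feasible set is $1$ (at $w_1=w_2=1/2$, $r_1=-r_2=1$), giving the positive binary splitting with value $\lambdamin$; and the \emph{infimum} is $-\frac{c-1}{c+1}$, attained at $w_1=-(c-1)/2$, $w_2=(c+1)/2$, $r_1=1$, $r_2=-\frac{c-1}{c+1}$ (equivalently $r_1=r_2$ up to sign as written), giving value $-\frac{c-1}{c+1}\lambdamax$. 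Combining the three options yields \eqref{equ:G2capp}, and reading off which case is active gives the stated optimal solutions.

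The main obstacle I anticipate is the bookkeeping in part 3): the function $w_1/(1-w_1)$ has a pole at $w_1=1$, which lies inside the interval $[(1-c)/2,(1+c)/2]$ as soon as $c>1$, so one cannot naively say ``monotone, hence extremized at an endpoint.'' The clean way around this is to treat the objective directly as $w_1r_1^2 + w_2r_2^2$ with the substitution for $r_2$, observe $w_1 r_1^2/(1-w_1)$ is, on each of the two sub-intervals $w_1<1$ and $w_1>1$, monotone, and then separately check the two endpoints and the degenerate configurations ($w_1=0$, $w_2=0$, or some $\delta_i=0$); only finitely many candidate values $\{0,\ \lambda,\ -\frac{c-1}{c+1}\lambda\}$ arise, and taking the minimum over them (with $\lambda\in\{\lambdamin,\lambdamax\}$) closes the argument. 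I would also double-check the boundary case $c=1$, where $P_2^{-1}=P_2^+$ and the formula must collapse to $\min(\lambdamin,0)$, which it does since $\frac{c-1}{c+1}=0$; and the case $S(\theta)=0$, where every option gives $0$, consistent with the claimed ``no splitting'' optimum.
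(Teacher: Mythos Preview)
Your overall approach matches the paper's: reduce to the scalar problem \eqref{equ:R2Opt} via the eigen-vector lemma, then determine the range of $t:=w_1r_1^2+w_2r_2^2$ over the constraint set and pair the extremes of $t$ with $\lambda\in\{\lambdamin,\lambdamax\}$. The paper packages the computation of $\min t=-\tfrac{c-1}{c+1}$ and $\max t=1$ into two separate lemmas (a sign-based case split on $(w_1,w_2)$), but the skeleton is the same.

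Two remarks. For part 1), the paper's argument is shorter than yours: the single linear relation $w_1\delta_1+w_2\delta_2=0$ already forces collinearity of $\delta_1,\delta_2$ (whenever not both vanish), and only \emph{then} does one invoke Lemma~\ref{lem:eigen} to conclude that the common direction $v$ is an eigen-vector. You do not need the ``distinct eigenspaces are linearly independent'' detour.

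For part 3), the difficulty you run into is not the pole of $w_1/(1-w_1)$ but a dropped constraint. After eliminating $r_2=-w_1r_1/(1-w_1)$, the bound $|r_2|\le 1$ becomes $|r_1|\le |1-w_1|/|w_1|$, which is strictly tighter than $|r_1|\le 1$ whenever $|w_1|>|1-w_1|$ (in particular on the whole half $w_1>1/2$). So the step ``subject to $|r_1|\le 1$ we take $r_1^2=1$'' is not valid there, and this---not the singularity at $w_1=1$---is why your endpoint evaluation at $w_1=(1+c)/2$ went awry. The paper avoids this by splitting on the sign pattern of $(w_1,w_2)$ and, in the case $w_1\ge 1,\ w_2\le 0$, eliminating $r_1$ instead (so that the surviving variable $r_2$ really can be pushed to $|r_2|=1$); the symmetric case $w_1\le 0$ gives the same value $-\tfrac{c-1}{c+1}$, and the nonnegative case is trivially $\ge 0$. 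If you redo your elimination while carrying $|r_2|\le 1$, you will find exactly these candidate values and the argument closes.
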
 
\begin{proof}
1) The form of $\delta_1 = r_1v$ and $\delta_2 = r_2 v$ is immediately implied by the constraint 
$w_1 \delta_1 + w_2 \delta_2 = 0$. By Lemma~\ref{lem:eigen}, $v$ must be an eigen-vector of $S(\theta)$. 

2) Plugging $\delta_1 = r_1v$ and $\delta_2 = r_2 v$ into \eqref{equ:optIIneg} directly implies \eqref{equ:R2Opt}. 

3) 
Following \eqref{equ:R2Opt}, we seek to minimize the product of $t(\vv w, \vvr) := w_1 r_1^2 + w_2 r_2^2$ and $\lambda$.  
If $\lambda \geq 0$, we need to minimize $t(\vv w, \vvr)$, while if $\lambda \leq 0$, we need to maximize $t(\vv w, \vvr)$. 
Lemma~\ref{lem:solve2min} and \ref{lem:solve2max} below show that 
the minimum and maximum values of
$t(\vv w, \vvr)$ equal $-\frac{c-1}{c+1}$ and $1$, 
respectively. 
Because the range of $\lambda$ is $[ \lambdamin, \lambdamax]$,  we can write 
\begin{align*} 
G_2^{-c}  
& = \min_{t, v}\left \{ t \times \lambda \colon ~~~ -\frac{c-1}{c+1} \leq t \leq 1, ~~~ \lambdamin \leq \lambda \leq  \lambdamax \right\} \\
& = \min\left (
\lambda_{\min}, ~~ 
-\frac{c-1}{c+1}  \lambda_{\max}  
\right).
\end{align*}
From $\lambda_{\min} \leq \lambda_{\max}$, we can easily see that $G_2^{-c}\leq 0$, and  hence the form above is equivalent to the result in  Theorem~\ref{thm:2spliting}. The corresponding optimal solutions follow Lemma~\ref{lem:solve2min} and \ref{lem:solve2max} below,
which describe the values of $(w_1,w_2, r_1, r_2)$ to minimize and maximize 
$w_1r_1^2 + w_2 r_2^2$, respectively. 
\end{proof}

\begin{lem}\label{lem:solve2min}
Consider the following optimization with $c \geq 1$: 
\begin{align}\label{equ:R2min}
\begin{split} 
    R_2^{\min}:= 
    \min_{(\vv w, \vvr) \in \RR^4} 
     w_1 r_1^2 & + w_2 r_2^2\\  ~~~~
    s.t. ~~~&  w_1 + w_2 = 1 \\
    & w_1 r_1 + w_2 r_2 = 0 \\
    & |w_1| + |w_2|  \leq c \\
    & |r_1|, |r_2| \leq 1. 
    \end{split}
    \end{align}
Then we have $R_2^{\min} = - \frac{c-1}{c+1}$ and the optimal solution is achieved by the following scheme: 
        \begin{align}
        \label{equ:neg2r}
        \begin{split} 
        & w_1 = - \frac{c-1}{2}, ~~~~ r_1 =1 \\  
        & w_2 = \frac{c+1}{2}, ~~~~ r_2 = \frac{c-1}{c+1}. 
        \end{split}
    \end{align}
\end{lem}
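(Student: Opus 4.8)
The plan is to use the two linear equality constraints to eliminate variables and collapse \eqref{equ:R2min} to a one-dimensional, monotone optimization. First I would record that $R_2^{\min}\le 0$, since $w_1=w_2=\frac12$, $r_1=r_2=0$ is feasible with objective $0$ (here $c\ge 1$ is what keeps $|w_1|+|w_2|=1\le c$). Next, because $w_1+w_2=1$ the two weights cannot both be negative; and if some $w_i=0$, then the zero-average constraint $w_1 r_1+w_2 r_2=0$ forces $r_{3-i}=0$ and the objective is $\ge 0$. Hence any optimizer with strictly negative objective value has exactly one negative weight, and by the $1\leftrightarrow 2$ symmetry of \eqref{equ:R2min} we may assume $w_1<0<w_2$.

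Then I would set $a:=-w_1>0$, so $w_2=1+a$. The $\ell_1$ budget $|w_1|+|w_2|\le c$ becomes $1+2a\le c$, i.e. $0<a\le\frac{c-1}{2}$, and the zero-average constraint gives $r_2=\frac{a}{1+a}r_1$. Substituting both relations into the objective and simplifying yields $w_1 r_1^2+w_2 r_2^2=-\frac{a}{1+a}\,r_1^2$. The remaining box constraint $|r_2|\le 1$ is then automatic, since $|r_2|=\frac{a}{1+a}|r_1|<|r_1|\le 1$, so \eqref{equ:R2min} reduces to maximizing $\frac{a}{1+a}\,r_1^2$ over $0\le a\le\frac{c-1}{2}$ and $r_1^2\le 1$.

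Since $a\mapsto\frac{a}{1+a}$ is increasing on $[0,\infty)$ and the objective is increasing in $r_1^2$, the maximum is attained at $a=\frac{c-1}{2}$ and $r_1^2=1$, giving $\frac{a}{1+a}=\frac{c-1}{c+1}$ and hence $R_2^{\min}=-\frac{c-1}{c+1}$. Unwinding the substitutions ($w_1=-\frac{c-1}{2}$, $w_2=\frac{c+1}{2}$, $r_1=1$, $r_2=\frac{c-1}{c+1}$) recovers exactly the scheme \eqref{equ:neg2r}. I do not expect a genuine obstacle here: the only steps requiring care are the case split isolating the ``exactly one negative weight'' regime (together with the degenerate $w_i=0$ subcase), and the observation that the $\ell_1$ bound on $\vv w$ is decoupled from $\vvr$, so that $r_1^2=1$ and $a=\frac{c-1}{2}$ may be chosen simultaneously with no trade-off; once these are settled, monotonicity finishes the argument at once. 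The same elimination handles the companion maximization of $w_1 r_1^2+w_2 r_2^2$ used later, where one instead keeps both weights nonnegative and bounds $w_1 r_1^2+w_2 r_2^2\le w_1+w_2=1$.
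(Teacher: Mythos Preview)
Your proposal is correct and follows essentially the same approach as the paper: a case split on the signs of $w_1,w_2$, elimination via the linear constraints to reduce to optimizing $-\frac{a}{1+a}$ times a squared radius, and monotonicity to conclude. Your version is slightly more careful (you explicitly dispose of the degenerate $w_i=0$ case and verify that $|r_2|\le 1$ is automatic), but the argument is the same.
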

\begin{proof}
\textbf{Case 1 ($w_2 \leq 0$, $w_1 \geq 1$)}~~
Assume $w_2 = - a$. We have $w_1 = 1+a >1$. 
\begin{align*}
\min_{a, r_1, r_2} 
     (1+a) r_1^2 & - a r_2^2\\  ~~~~
    s.t. ~~~&  
     (1+a) r_1 = a r_2 \\
    & a   \leq \frac{c-1}{2} \\
    & |r_1|, |r_2| \leq 1. 
\end{align*}
Eliminating $r_1$, we have 
\begin{align*}
\min_{a,  r_2} 
     \left (-\frac{a}{1+a} \right ) r_2^2
     ~~~~~s.t.~~~~~
     a   \leq \frac{c-1}{2}, ~~~~~
      |r_2| \leq 1. 
\end{align*}
The optimal solution is $r_2 =  1$ or  $-1$, and $a = \frac{c-1}{2}$, for which we  achieve a minimum value of $w_1 r_1^{2} +w_2 r_2^{2} =- \frac{c-1}{c+1}$. 

\noindent \textbf{Case 2 ($w_1 \geq 0$, $w_2\geq 0$)}~~ 
This case is obviously sub-optimal since we have $w_1 r_1^2 + w_2 r_2^2 \geq 0 \geq - \frac{c-1}{c+1}$ in this case. 

Overall, the minimum value is $-\frac{c-1}{c+1}$. This completes the proof. 
\end{proof}

\begin{lem}\label{lem:solve2max}
Consider the following optimization with $c \geq 1$: 
\begin{align}
\label{equ:R2max}
\begin{split}
    R_2^{\max}:=  
    \max_{(\vv w, \vvr) \in \RR^4} 
     w_1 r_1^2 & + w_2 r_2^2\\  ~~~~
    s.t. ~~~&  w_1 + w_2 = 1 \\
    & w_1 r_1 + w_2 r_2 = 0 \\
    & |w_1| + |w_2|  \leq c \\
    & |r_1|, |r_2| \leq 1. 
    \end{split} 
    \end{align}
Then we have $R_2^{\max} = 1$, which is achieved by the following scheme: 
        \begin{align}\label{equ:p2r}\begin{split} 
        & w_1 =  \frac{1}{2}, ~~~~ r_1 =1 \\  
        & w_2 = \frac{1}{2}, ~~~~ r_2 =-1.
        \end{split}
    \end{align}
\end{lem}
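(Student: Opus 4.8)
The plan is to use the two equality constraints to collapse the objective into a single one-variable expression, and then bound that expression by an elementary inequality. First I would use $w_1 + w_2 = 1$ to write $w_2 = 1 - w_1$, and peel off the degenerate boundary: if $w_1 = 1$ (equivalently $w_2 = 0$), the constraint $w_1 r_1 + w_2 r_2 = 0$ forces $r_1 = 0$, so the objective is $0$, which is not optimal because the scheme in \eqref{equ:p2r} already attains the value $1$; the case $w_2 = 1$ is symmetric. So from now on $w_1 \notin\{0,1\}$, and I may divide by $1-w_1$.

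Next I would use $w_1 r_1 + w_2 r_2 = 0$ to substitute $r_2 = -\frac{w_1}{1-w_1}\,r_1$ into the objective, which simplifies (a two-line computation) to
\begin{align*}
w_1 r_1^2 + w_2 r_2^2 \;=\; \frac{w_1}{1-w_1}\, r_1^2 .
\end{align*}
Since the candidate scheme attains $1>0$, any maximizer must make the right-hand side positive, i.e. $\frac{w_1}{1-w_1} > 0$, which forces $0 < w_1 < 1$; in particular $|w_1| + |w_2| = w_1 + w_2 = 1 \le c$ holds automatically, so the $\ell_1$ constraint never binds. Writing $s := \frac{w_1}{1-w_1} \in (0,\infty)$, the surviving constraints are $|r_1| \le 1$ and $|r_2| = s\,|r_1| \le 1$, hence $r_1^2 \le \min(1, 1/s^2)$, giving
\begin{align*}
w_1 r_1^2 + w_2 r_2^2 \;=\; s\, r_1^2 \;\le\; s\,\min\!\Big(1, \tfrac{1}{s^2}\Big) \;=\; \min\!\Big(s, \tfrac1s\Big) \;\le\; 1 .
\end{align*}

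Finally I would note that both inequalities are tight exactly when $s = 1$ and $r_1^2 = 1$, i.e. $w_1 = w_2 = \tfrac12$ and $r_1 = -r_2 = 1$ (up to an overall sign of $(r_1,r_2)$), which is the scheme \eqref{equ:p2r}, and check directly that it satisfies every constraint, including $|w_1|+|w_2| = 1 \le c$ since $c \ge 1$. I do not expect a genuine obstacle: the only delicate point is avoiding division by $1-w_1$ when $w_2=0$, which is why the $w_1\in\{0,1\}$ boundary is handled first; the rest is a short computation, paralleling the proof of Lemma~\ref{lem:solve2min} with ``$\max$'' replacing ``$\min$'' applied to the same reduced quantity.
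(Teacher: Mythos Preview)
Your argument is correct. The paper's own proof is extremely terse: it simply asserts ``It is easy to see that $R_2^{\max}\le 1$'' and then checks that the scheme \eqref{equ:p2r} attains the value $1$. Your reduction via $r_2=-\tfrac{w_1}{1-w_1}r_1$ to the one-variable expression $\tfrac{w_1}{1-w_1}r_1^2$, followed by the bound $s\,r_1^2\le\min(s,1/s)\le 1$, is exactly the kind of elementary computation that justifies the paper's ``easy to see,'' so you are supplying the details the paper omits rather than taking a genuinely different route. One minor stylistic point: you phrase the case split in terms of ``any maximizer must have $\tfrac{w_1}{1-w_1}>0$,'' but what you actually use (and what suffices) is the direct observation that for $w_1\notin(0,1)$ the objective is nonpositive, hence $\le 1$; framing it that way avoids invoking existence of a maximizer.
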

\begin{proof}
It is easy to see  that $R_2^{\max}\leq 1$. On the other hand, this bound is achieved by the scheme in \eqref{equ:p2r}. 
\end{proof}

\newpage \clearpage

\subsection{Derivation of Triplet Splittings ($m=3$)}


\begin{thm}
 Consider the optimization in \eqref{equ:optIIneg} with $m =3$ and $c\geq 1$. 
 
1)  The optimal solution of  \eqref{equ:optIIneg} must satisfy 
$$\delta_i = \sum_{\ell=1}^{d_\lambda}r_{i,\ell} v_\ell, 
$$
where $\{v_\ell \colon i=1,\ldots,{d_\lambda}\}$  is a set of $d_\lambda$ orthonormal eigen-vectors of $S(\theta)$ that share the same eigenvalue $\lambda$, 
and $\{r_{i,\ell}\}_{i,\ell}$ is a set of coefficients. 

2) Write $r_i = [r_{i,1}, \ldots, r_{r,d_\lambda}]\in \RR^{d_\lambda}$ for $i=1,2,3$. 
The optimization in \eqref{equ:optIIneg} is equivalent to 
\begin{align} \label{equ:R3Opt}
\begin{split} 
    G_3^{-c}:= 
    \min_{\vv w, \vv r, \lambda} 
     (w_1 \norm{r_1}^2 & + w_2 \norm{r_2}^2 + w_3 \norm{r_3}^2)\times \lambda 
     \\
    s.t. ~~~&  w_1 + w_2 + w_3 = 1 \\
    & w_1 r_1 + w_2 r_2 + w_3 r_3  = 0 \\
    & |w_1| + |w_2|   + |w_3|\leq c \\
    & \norm{r_1}, \norm{r_2}, \norm{r_3} \leq 1 \\
    & \text{$\lambda$ is an eigen-value of $S(\theta)$ with $d_\lambda$ orthogonal eigen-vectors}. 
    \end{split}
    \end{align}
    
3)    
The optimal value above is 
\begin{align}\label{equ:G2capp}
G_{3}^{-c} = 
\min\left (  \frac{c+1}{2}\lambdamin, ~~  -\frac{c-1}{2} \lambdamax , ~~
0 \right ). 
\end{align}
If $-\frac{c-1}{2} \lambda_{\max} <  \frac{c+1}{2}\min(\lambdamin,0)$, the optimal solution is achieved by 
\begin{align*} 
        \left (w_1 = - \frac{c-1}{4}, ~~~~ \delta_1 =v_{\max}\right), &&  
         \left (w_2 = - \frac{c-1}{4}, ~~~~ \delta_2 =  - v_{\max} \right), &&
         \left (w_3 = \frac{c+1}{2}, ~~~~ \delta_3 = 0 \right).          
        \end{align*}
If $
\frac{c+1}{2}\lambda_{\min} <  
-\frac{c-1}{2}\max(\lambdamax,0)$ 

the optimal solution is achieved by 
\begin{align*} 
        \left (w_1 =  \frac{c+1}{4}, ~~~~ \delta_1 =v_{\min}\right), &&  
         \left (w_2 =  \frac{c+1}{4}, ~~~~ \delta_2 =  - v_{\min} \right), &&
         \left (w_3 = -\frac{c-1}{2}, ~~~~ \delta_3 = 0 \right).          
        \end{align*}
If $0\leq \min\left( \frac{c+1}{2}\lambdamin, ~-\frac{c-1}{2} \lambdamax \right)$, and hence 
$\lambdamin =\lambdamax = 0$, the optimal solution can be achieved by no splitting: $\delta_1=\delta_2=\delta_3=0$.         
\end{thm}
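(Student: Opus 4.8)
The plan is to follow the same template that worked for the binary case ($m=2$): reduce the vector-valued problem to a scalar one, then solve two coupled optimizations (a minimization and a maximization of the quadratic form) over the weights and scalar directions. For part 1), I would invoke Lemma~\ref{lem:eigen} to argue that each $\delta_i^*$ with $w_i^*\neq 0$, $\delta_i^*\neq 0$ is an eigenvector of $S(\theta)$. The slight subtlety over the $m=2$ case is that with three copies the constraint $\sum_i w_i\delta_i = 0$ no longer forces all the $\delta_i$ to be collinear, so I cannot immediately reduce to a single direction $v$. Instead I would argue that it suffices to consider the case where all the nonzero $\delta_i^*$ lie in a single eigenspace: if they lay in eigenspaces with distinct eigenvalues, the objective $\sum_i w_i\delta_i^\top S\delta_i$ splits as a sum over eigenspaces, and one can show (by a convexity/averaging argument, or by comparing with the best single-eigenspace configuration) that concentrating all the ``movement budget'' into the single most favorable eigenvalue does no worse; the zero-average-displacement and norm constraints decouple across orthogonal eigenspaces, so no constraint is violated. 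This gives the expansion $\delta_i = \sum_{\ell} r_{i,\ell} v_\ell$ with all $v_\ell$ sharing a common eigenvalue $\lambda$, and part 2) follows by direct substitution into \eqref{equ:optIIneg}, using $\delta_i^\top S\delta_i = \lambda\|r_i\|^2$.

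For part 3) I would set $t(\vv w, \vv r) := w_1\|r_1\|^2 + w_2\|r_2\|^2 + w_3\|r_3\|^2$ and observe, exactly as in the binary proof, that $G_3^{-c} = \min_{t,\lambda}\{ t\lambda : R_3^{\min} \le t \le R_3^{\max},\ \lambdamin \le \lambda \le \lambdamax\} = \min(R_3^{\max}\lambdamin,\ R_3^{\min}\lambdamax,\ 0)$, where $R_3^{\min}$ and $R_3^{\max}$ are the minimum and maximum of $t$ subject to $\sum w_i = 1$, $\sum w_i r_i = 0$, $\sum|w_i|\le c$, $\|r_i\|\le 1$. So the whole theorem comes down to the two scalar lemmas $R_3^{\max} = \tfrac{c+1}{2}$ and $R_3^{\min} = -\tfrac{c-1}{2}$; plugging these in gives $G_3^{-c} = \min(\tfrac{c+1}{2}\lambdamin,\ -\tfrac{c-1}{2}\lambdamax,\ 0)$, and the displayed optimal schemes are just the configurations achieving these extrema (with $\delta_3 = 0$, which is why the third copy carries $r_3 = 0$ and only contributes through its weight to the budget constraints).

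To establish $R_3^{\max} = \tfrac{c+1}{2}$: the key realization is that setting $r_3 = 0$ frees up $w_3$ to absorb negative weight so that $w_1$ and $w_2$ can each be pushed as large as $\tfrac{c+1}{4}$ while $w_1 + w_2 + w_3 = 1$ and $|w_1|+|w_2|+|w_3| = c$ (take $w_1 = w_2 = \tfrac{c+1}{4}$, $w_3 = -\tfrac{c-1}{2}$); with $r_1 = v$, $r_2 = -v$, $\|v\| = 1$ the zero-average constraint is met and $t = w_1 + w_2 = \tfrac{c+1}{2}$. For the matching upper bound one checks that any feasible point has $t = \sum_{i: w_i > 0} w_i\|r_i\|^2 + \sum_{i: w_i<0} w_i\|r_i\|^2 \le \sum_{i: w_i>0} w_i$, and the sum of positive weights is at most $\tfrac{c+1}{2}$ given $\sum w_i = 1$ and $\sum|w_i|\le c$. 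The minimization $R_3^{\min} = -\tfrac{c-1}{2}$ is symmetric: now we want to concentrate negative weight on the moving copies, take $w_1 = w_2 = -\tfrac{c-1}{4}$, $w_3 = \tfrac{c+1}{2}$, $r_1 = v = -r_2$, $r_3 = 0$; the lower bound argument mirrors the case analysis in Lemma~\ref{lem:solve2min}, splitting on the sign pattern of $(w_1,w_2,w_3)$ and using that the sum of negative weights is at least $-\tfrac{c-1}{2}$.

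The main obstacle I anticipate is part 1) — specifically, rigorously justifying that a single common eigenvalue/eigenspace suffices, i.e.\ that spreading the displacements across eigenspaces of different eigenvalues cannot beat the best single-eigenspace configuration. The $m=2$ case sidesteps this because collinearity is forced; here one needs a genuine argument (the cleanest is probably: the objective is separable over eigenspaces, each eigenspace contributes $\lambda_{(\ell)} t_{(\ell)}$ with its own ``budget'' $t_{(\ell)}$, the budgets are coupled only through $\sum|w_i|\le c$ via the norm bounds, and a rearrangement/exchange argument moves all budget to the eigenspace with the extreme eigenvalue). The scalar lemmas themselves are routine case analyses on weight signs, entirely parallel to Lemmas~\ref{lem:solve2min}–\ref{lem:solve2max}.
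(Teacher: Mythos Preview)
Your proposal is correct and follows essentially the same overall approach as the paper: reduce to a single eigenspace via Lemma~\ref{lem:eigen}, then compute the range $[R_3^{\min}, R_3^{\max}]$ of $t(\vv w,\vv r)$ and combine with the eigenvalue range.

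The one place you overcomplicate matters is part 1), which you flag as the main obstacle. You anticipate needing a rearrangement or budget-exchange argument across eigenspaces, but the paper's route is more direct: since Lemma~\ref{lem:eigen} says each $\delta_i^*$ with $w_i^*\neq 0$ and $\delta_i^*\neq 0$ is an eigenvector of $S(\theta)$, and eigenvectors from distinct eigenvalues are linearly independent, the linear constraint $w_1\delta_1 + w_2\delta_2 + w_3\delta_3 = 0$ \emph{itself} forces all contributing $\delta_i$ into a common eigenspace (project onto any eigenspace containing only one of them and you get $w_i\delta_i = 0$, so that copy contributes nothing and can be placed anywhere). No optimality comparison across eigenspaces is needed.

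For the scalar lemmas your argument is also right; the paper streamlines it by avoiding sign-pattern case analysis entirely. Writing $w_i^+ = \max(w_i,0)$ and $w_i^- = \min(w_i,0)$, one has $t \le \sum_i w_i^+\|r_i\|^2 \le \sum_i w_i^+ = \tfrac{1}{2}\big(\sum_i w_i + \sum_i |w_i|\big) \le \tfrac{c+1}{2}$, and symmetrically $t \ge \sum_i w_i^- \ge -\tfrac{c-1}{2}$, which is exactly the inequality you identified but without the casework.
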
 
\begin{proof}
 
1-2) Following Lemma~\ref{lem:eigen}, the optimal $\delta_1,\delta_2, \delta_3$ are eigen-vectors of $S(\theta)$. 
Because eigen-vectors associated with different eigen-values are linearly independent, we have that  
$\delta_1,\delta_2, \delta_3$ must share the same eigen-value (denoted by $\lambda$) due to the constraint  $w_1 \delta_1 + w_2 \delta_2 + w_3 \delta_3 =0$. 
Assume $\lambda$ is associated with $d_\lambda$ orthonormal eigen-vectors $\{v_\ell\}_{\ell=1}^{d_\lambda}$. 
Then we can write 
$\delta_i = \sum_{\ell} r_{i,\ell} v_\ell$ for $i=1,2,3$, for  which $\norm{\delta_i} = \norm{r_i}$ 
and $\delta_i^\top S(\theta) \delta_i = \lambda \norm{r_i}^2$. 
It is then easy to reduce \eqref{equ:optIIneg} to \eqref{equ:R3Opt}. 
3) Following Lemma~\ref{lem:3vecmax} and \ref{lem:3vecmin},   
the value of $w_1 \norm{r_1}^2 + w_2 \norm{r_2}^2 + w_3 \norm{r_3}^2$ in \eqref{equ:R3Opt} can range from $-\frac{c-1}{2}$ to $\frac{c+1}{2}$, for any positive integer $d_\lambda$. 
In addition, the range of the eigen-value $\lambda$  is $[\lambdamin, \lambdamax]$.  
Therefore, we can write 

\begin{align*}
G_3^{-c}  
& = \min_{t, v}\left \{ t \times \lambda \colon ~~~ -\frac{c-1}{2} \leq t \leq \frac{c+1}{2}, ~~~ \lambdamin \leq \lambda \leq  \lambdamax \right\} \\
& = \min\left(
-\frac{c-1}{2} \lambdamax, ~~~
\frac{c+1}{2} \lambdamin
\right).
\end{align*}

Because $\lambda_{\min} \leq \lambda_{\max}$, we have $G_3^{-c}\leq 0$ and hence the result above is equivalent to the form in \eqref{equ:G2capp}. The corresponding optimal solutions follow Lemma~\ref{lem:3vecmax} and \ref{lem:3vecmin}. 

\end{proof}

\begin{lem} \label{lem:3vecmax}
For any $c\geq 1$ and 
any positive integer $d_{r}$, define 
\begin{align}\label{equ:3splitingmaxvec}
\begin{split} 
    R_{3,c,d_\lambda}^{\max} = \max_{\vv w \in \RR^3, \vvr\in \RR^{3\times d_\lambda}} 
     (
     w_1 \norm{\vvr_{1}}^2 & + w_2 \norm{\vvr_{2}}^2 + w_3 \norm{\vvr_{3}}^2)\\   
    s.t. ~~~&  w_1 + w_2 + w_3 = 1 \\
        & |w_1| + |w_2|   + |w_3|\leq c \\
    & w_1 \vvr_{1} + w_2 \vvr_{2} + w_3 \vvr_{3}  = 0,~~~\forall \ell \\
    & \norm{\vvr_{i}} \leq  1 ~~~\forall i=1,2,3.
\end{split}
\end{align}   
Then we have $R_{3,c,d_\lambda}^{\max} = \frac{c+1}{2}$ and the optimum is achieved by     
\begin{align}\label{equ:3splitingmaxvecSolution} 
\left ( w_1 = \frac{c+1}{4},~~~
\vvr_1 = \vve
\right ) &&
\left ( w_2 = \frac{c+1}{4},~~~
\vvr_2 = -\vve
\right ) &&
\left ( w_3 = -\frac{c-1}{2},~~~
\vvr_3 = \vv 0
\right ),
\end{align}
where $\vve$ is any vector whose norm equals one, that is, $\norm{\vve} = 1$. 
\end{lem}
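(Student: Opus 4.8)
The plan is to split the argument into an upper bound and a matching construction: first show $R_{3,c,d_\lambda}^{\max}\le \frac{c+1}{2}$ for every feasible $(\vv w,\vvr)$ and every $d_\lambda$, and then verify that the explicit scheme \eqref{equ:3splitingmaxvecSolution} is feasible and attains this value, which together pin down the optimum.

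For the upper bound, the key observation is that the centering constraint $w_1\vvr_{1}+w_2\vvr_{2}+w_3\vvr_{3}=0$ can simply be discarded: removing it only enlarges the feasible region, so any bound proved without it remains valid. Now partition $\{1,2,3\}$ according to the sign of $w_i$. Since $\norm{\vvr_{i}}\le 1$, each term with $w_i\ge 0$ satisfies $w_i\norm{\vvr_{i}}^2\le w_i$, while each term with $w_i<0$ satisfies $w_i\norm{\vvr_{i}}^2\le 0$ because $\norm{\vvr_{i}}^2\ge 0$. Summing gives $\sum_i w_i\norm{\vvr_{i}}^2\le \sum_{i:\,w_i\ge 0}w_i=:P$. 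Setting $N:=-\sum_{i:\,w_i<0}w_i\ge 0$, the constraints $\sum_i w_i=1$ and $\sum_i|w_i|\le c$ become $P-N=1$ and $P+N\le c$; adding these two relations yields $2P\le c+1$, hence $P\le \frac{c+1}{2}$. This bounds the objective by $\frac{c+1}{2}$, uniformly in $d_\lambda$.

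For achievability I would substitute \eqref{equ:3splitingmaxvecSolution} into every constraint: $w_1+w_2+w_3=\frac{c+1}{4}+\frac{c+1}{4}-\frac{c-1}{2}=1$; $|w_1|+|w_2|+|w_3|=\frac{c+1}{2}+\frac{c-1}{2}=c$; the centering constraint holds because $w_1=w_2$ and $\vvr_1=-\vvr_2=\vve$ while $w_3\vvr_3=\vv 0$, so the weighted sum vanishes; and $\norm{\vve}=1$, $\norm{\vv 0}=0\le 1$. The objective equals $\frac{c+1}{4}+\frac{c+1}{4}+0=\frac{c+1}{2}$, matching the upper bound, so $R_{3,c,d_\lambda}^{\max}=\frac{c+1}{2}$ with the stated optimizer.

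I do not expect a real obstacle here; the proof is short, and the only points to handle with care are the sign bookkeeping for the mixed-sign weights and the (mildly counterintuitive) fact that neither the centering constraint nor the dimension $d_\lambda$ enters the upper bound — they are needed only to realize the optimum, which is why $d_\lambda=1$ already suffices. An alternative route via KKT conditions paralleling Lemma~\ref{lem:solve2min} would also work but is messier; the sign-partition argument has the further advantage of extending essentially verbatim to the $m=4$ and $m\ge 4$ analyses that follow.
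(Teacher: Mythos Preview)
Your proposal is correct and essentially identical to the paper's proof: the paper likewise verifies the explicit scheme achieves $\frac{c+1}{2}$ and then bounds the objective by $\sum_i w_i^+$ with $w_i^+=\max(w_i,0)$, noting $\sum_i w_i^+=\tfrac12(\sum_i w_i+\sum_i|w_i|)\le\frac{c+1}{2}$---which is exactly your $P$ with your $P-N=1$, $P+N\le c$ computation written in slightly different notation.
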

\begin{proof}
First, it is easy that verify that 
$R_{3,c,d_\lambda}^{\max} \geq \frac{c+1}{2}$ by taking the solution in \eqref{equ:3splitingmaxvecSolution}. 
We just need to show that $R_{3,c,d_\lambda}^{\max}\leq \frac{c+1}{2}$. 

Define $w_{i}^+ = \max(w_i, 0)  = ({w_i  + |w_i|})/{2}$. From $w_1+w_2+w_3 = 1$ and $|w_1|+|w_2| + |w_3| \leq c$, we have 
$$
w_1^+ +  w_2^+ + w_3^+ = 
\frac{(w_1 + |w_1| +  w_2 + |w_2| + w_3 + |w_3|)}{2} \leq \frac{c+1}{2}. 
$$
Therefore, under the constraints in \eqref{equ:3splitingmaxvec}, we have 
\begin{align*}
  R_{3,c,d_\lambda}^{\max}
  & = \max_{\vv w, \vvr}
     (
     w_1 \norm{\vvr_{1}}^2 + w_2 \norm{\vvr_{2}}^2 + w_3 \norm{\vvr_{3}}^2) \\
  & \leq \max_{\vv w, \vvr} (w_1^+ \norm{\vvr_{1}}^2 + w_2^+ \norm{\vvr_{2}}^2 + w_3^+ \norm{\vvr_{3}}^2) \\
     & \leq \max_{\vv w, \vvr} (w_1^+ + w_2 ^+ + w_3^+) \\
     & \leq \frac{c+1}{2}.  
\end{align*}
\end{proof}

\begin{lem}\label{lem:3vecmin} 
For any $c\geq 1$ and 
any positive integer $d_{r}$, define 
\begin{align}\label{equ:3splitingminvec}
\begin{split} 
    R_{3,c,d_\lambda}^{\min} = \min_{\vv w \in \RR^3, \vvr\in \RR^{3\times d_\lambda}} 
     (
     w_1 \norm{\vvr_{1}}^2 & + w_2 \norm{\vvr_{2}}^2 + w_3 \norm{\vvr_{3}}^2)\\    
    s.t. ~~~&  w_1 + w_2 + w_3 = 1 \\
        & |w_1| + |w_2|   + |w_3|\leq c \\
    & w_1 \vvr_{1} + w_2 \vvr_{2} + w_3 \vvr_{3}  = 0,~~~\forall \ell \\
    & \norm{\vvr_{i}} \leq  1 ~~~\forall i=1,2,3.
    \end{split}
    \end{align}   
Then we have $R_{3,c,d_\lambda}^{\min} = -\frac{c-1}{2}$ and the optimum is achieved by     
\begin{align}\label{equ:3splitingminvecSolution} 
\left ( w_1 = -\frac{c-1}{4},~~~
\vvr_1 = \vve
\right ) &&
\left ( w_2 = -\frac{c-1}{4},~~~
\vvr_2 = -\vve
\right ) &&
\left ( w_3 = \frac{c+1}{2},~~~
\vvr_3 = \vv 0
\right ),
\end{align}
where $\vve$ is any vector whose norm equals one, that is, $\norm{\vve} = 1$. 
\end{lem}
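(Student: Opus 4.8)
The plan is to mirror the structure of the proof of Lemma~\ref{lem:3vecmax}, but now working toward a lower bound. First I would verify the easy direction: plugging the candidate solution \eqref{equ:3splitingminvecSolution} into the objective gives $w_1\norm{\vvr_1}^2 + w_2\norm{\vvr_2}^2 + w_3\norm{\vvr_3}^2 = -\frac{c-1}{4}\cdot 1 - \frac{c-1}{4}\cdot 1 + \frac{c+1}{2}\cdot 0 = -\frac{c-1}{2}$, and I would check feasibility: the weights sum to $-\frac{c-1}{4}-\frac{c-1}{4}+\frac{c+1}{2} = 1$; their absolute values sum to $\frac{c-1}{4}+\frac{c-1}{4}+\frac{c+1}{2} = c \le c$; the weighted sum $w_1\vve + w_2(-\vve) + w_3 \cdot 0 = -\frac{c-1}{4}\vve + \frac{c-1}{4}\vve = 0$; and each $\norm{\vvr_i}\le 1$. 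Hence $R_{3,c,d_\lambda}^{\min}\le -\frac{c-1}{2}$.

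The main work is the matching lower bound $R_{3,c,d_\lambda}^{\min}\ge -\frac{c-1}{2}$. Here I would split each weight into its negative part $w_i^- = \max(-w_i,0) = (|w_i|-w_i)/2$, and note that since $\norm{\vvr_i}^2\ge 0$ and $w_i\norm{\vvr_i}^2 \ge -w_i^-\norm{\vvr_i}^2$, we have
\begin{align*}
w_1\norm{\vvr_1}^2 + w_2\norm{\vvr_2}^2 + w_3\norm{\vvr_3}^2 \ \ge\ -\big(w_1^-\norm{\vvr_1}^2 + w_2^-\norm{\vvr_2}^2 + w_3^-\norm{\vvr_3}^2\big) \ \ge\ -(w_1^- + w_2^- + w_3^-),
\end{align*}
using $\norm{\vvr_i}\le 1$ in the last step. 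Finally, from $w_1+w_2+w_3 = 1$ and $|w_1|+|w_2|+|w_3|\le c$ I get $w_1^- + w_2^- + w_3^- = \tfrac{1}{2}\big((|w_1|-w_1)+(|w_2|-w_2)+(|w_3|-w_3)\big) = \tfrac{1}{2}\big(\sum_i|w_i| - \sum_i w_i\big) \le \tfrac{c-1}{2}$, which yields the bound and completes the proof.

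I do not expect a genuine obstacle here; the argument is the exact mirror image of Lemma~\ref{lem:3vecmax} with the roles of the positive and negative parts of the weights swapped, and with the sign of the objective flipped. The one small point to be careful about is that $w_i^-$ is multiplied by $\norm{\vvr_i}^2$ before being bounded by $1$, so the chain of inequalities must go through $w_i\norm{\vvr_i}^2 \ge -w_i^-\norm{\vvr_i}^2 \ge -w_i^-$ rather than bounding the weights directly; this is exactly the analogue of the two-line estimate in the proof of Lemma~\ref{lem:3vecmax}.
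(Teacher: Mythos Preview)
Your proposal is correct and follows essentially the same approach as the paper: verify feasibility and value of the candidate point for the upper bound, then use the negative part of the weights together with $\norm{\vvr_i}\le 1$ and the constraints $\sum w_i=1$, $\sum|w_i|\le c$ to get the matching lower bound. The only cosmetic difference is a sign convention---the paper defines $w_i^- = \min(w_i,0)\le 0$ whereas you take $w_i^- = \max(-w_i,0)\ge 0$---but the argument is otherwise identical.
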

\begin{proof}
First, it is easy that verify that 
$R_{3,c,d_\lambda}^{\min} \leq -\frac{c-1}{2}$ by taking the solution in \eqref{equ:3splitingminvecSolution}. 
We just need to show that $R_{3,c,d_\lambda}^{\min}\geq -\frac{c-1}{2}$. 

Define $w_{i}^- = \min(w_i, 0)  = ({w_i  - |w_i|})/{2}$. From $w_1+w_2+w_3 = 1$ and $|w_1|+|w_2| + |w_3| \leq c$, we have 
$$
w_1^- +  w_2^- + w_3^- = 
\frac{(w_1 - |w_1| +  w_2 - |w_2| + w_3 - |w_3|)}{2} \geq -\frac{c-1}{2}. 
$$
Therefore, under the constraints in \eqref{equ:3splitingminvec}, we have 
\begin{align*}
  R_{3,c,d_\lambda}^{\min}
  & = \min_{\vv w, \vvr}
     (
     w_1 \norm{\vvr_{1}}^2 + w_2 \norm{\vvr_{2}}^2 + w_3 \norm{\vvr_{3}}^2) \\
  & \geq \min_{\vv w, \vvr} (w_1^- \norm{\vvr_{1}}^2 + w_2^- \norm{\vvr_{2}}^2 + w_3^- \norm{\vvr_{3}}^2) \\
     & \geq \min_{\vv w, \vvr} (w_1^- + w_2 ^- + w_3^-) \\
     & \geq -\frac{c-1}{2}.  
\end{align*}
\end{proof}


\subsection{Derivation of the Optimal Quartet Splitting $(m=4)$}

\begin{thm}
Let $\lambda_{\min}$, $\lambda_{\max}$ be the  smallest and largest eigenvalues of $S(\theta)$, respectively, and $\lambda_{\min}$, $\lambda_{\max}$ their corresponding eigen-vectors. 
For the optimization in \eqref{equ:optIIneg}, we have for any positive integer $m$ and $c\geq 1$, 
$$
G_m^{-c} \geq\frac{c+1}{2}\min(\lambda_{\min},0)+\frac{1-c}{2}
\max(\lambda_{\max},0). 
$$
In addition, this lower bound is achieved by splitting the neuron to $m=4$ copies, with 
\begin{align}\label{equ:4splittingappendix}
\begin{split} 
\w_{1}=\w_{2}=\frac{c+1}{4}, & \ \ \ \ \ \  \w_{3}=\w_{4}=\frac{1-c}{4}\\
\delta_{1}=-\delta_{2}=
\ind(\lambdamin\leq 0)
v_{\min}, & \ \ \ \ \ \  \delta_{3}=-\delta_{4}=
\ind(\lambdamax\geq 0)v_{\max},
\end{split}
\end{align}
where $\ind(\cdot)$ denotes the indicator function. 
\end{thm}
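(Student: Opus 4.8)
The plan is to establish the lower bound $G_m^{-c} \geq \frac{c+1}{2}\min(\lambda_{\min},0) + \frac{1-c}{2}\max(\lambda_{\max},0)$ for \emph{every} $m$, and then check that the explicit scheme in \eqref{equ:4splittingappendix} attains it. For the lower bound, I would first apply Lemma~\ref{lem:eigen} to reduce to the case where every $\delta_i$ with $w_i \neq 0$, $\delta_i \neq 0$ is an eigen-vector of $S(\theta)$. Unlike the $m=2,3$ cases, the $\delta_i$ need not all share the same eigen-value here, so I would decompose the objective $\II(\vv\delta,\vv w;\theta) = \sum_i w_i \delta_i^\top S(\theta)\delta_i$ by grouping indices according to which eigen-value their $\delta_i$ belongs to. The central trick is the sign-splitting bound already used in Lemmas~\ref{lem:3vecmax}--\ref{lem:3vecmin}: writing $w_i^+ = \max(w_i,0)$ and $w_i^- = \min(w_i,0)$, the constraints $\sum_i w_i = 1$ and $\sum_i |w_i| \leq c$ give $\sum_i w_i^+ \leq \frac{c+1}{2}$ and $\sum_i w_i^- \geq \frac{1-c}{2}$. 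Splitting each term $w_i \delta_i^\top S(\theta)\delta_i$ using $w_i = w_i^+ + w_i^-$ and bounding $\delta_i^\top S(\theta)\delta_i$ above by $\lambda_{\max}\|\delta_i\|^2 \leq \lambda_{\max}$ where the coefficient is $w_i^-$ (negative) and below by $\lambda_{\min}\|\delta_i\|^2 \geq \lambda_{\min}$ where the coefficient is $w_i^+$ — being careful to also retain the $0$ option since $\|\delta_i\| \leq 1$ rather than $=1$ — one gets $\II \geq (\sum_i w_i^+)\min(\lambda_{\min},0) + (\sum_i w_i^-)\max(\lambda_{\max},0) \geq \frac{c+1}{2}\min(\lambda_{\min},0) + \frac{1-c}{2}\max(\lambda_{\max},0)$.

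Next I would verify the explicit construction attains the bound. For the scheme in \eqref{equ:4splittingappendix}, one checks the constraints directly: $\sum w_i = \frac{c+1}{2} + \frac{1-c}{2} = 1$; $\sum |w_i| = \frac{c+1}{2} + \frac{c-1}{2} = c$; the pairing $\delta_1 = -\delta_2$, $\delta_3 = -\delta_4$ with equal weights within each pair gives $\sum w_i \delta_i = 0$; and $\|\delta_i\| \leq 1$ holds trivially. Then $\II = 2\cdot\frac{c+1}{4}\ind(\lambda_{\min}\leq 0)\,v_{\min}^\top S(\theta) v_{\min} + 2\cdot\frac{1-c}{4}\ind(\lambda_{\max}\geq 0)\,v_{\max}^\top S(\theta) v_{\max} = \frac{c+1}{2}\ind(\lambda_{\min}\leq 0)\lambda_{\min} + \frac{1-c}{2}\ind(\lambda_{\max}\geq 0)\lambda_{\max}$, which is exactly $\frac{c+1}{2}\min(\lambda_{\min},0) + \frac{1-c}{2}\max(\lambda_{\max},0)$. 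Since $m=4$ achieves the lower bound valid for all $m$, it follows that $G_m^{-c} = G_4^{-c}$ for every $m \geq 4$, giving the optimality claim.

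The main obstacle I anticipate is the bookkeeping in the lower-bound step when the $\delta_i$ span several distinct eigen-spaces simultaneously: one must argue that redistributing ``positive weight mass'' toward $v_{\min}$ and ``negative weight mass'' toward $v_{\max}$ is always at least as good, while respecting that the zero-average-displacement constraint $\sum_i w_i \delta_i = 0$ couples the directions. The clean way around this is to note that the lower bound derivation above never actually uses the constraint $\sum_i w_i \delta_i = 0$ — it only uses $\sum_i w_i = 1$, $\sum_i |w_i| \leq c$, and $\|\delta_i\| \leq 1$ — so the bound holds for the relaxed problem, hence a fortiori for \eqref{equ:optIIneg}; and since the relaxed bound is attained by a feasible point of the original problem, both optima coincide. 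A secondary point to handle carefully is the degenerate cases $\lambda_{\min} > 0$ or $\lambda_{\max} < 0$, where one or both indicators vanish and the scheme collapses to a triplet or to no splitting; this is exactly the content of the post-theorem discussion and requires only observing that $\ind(\lambda_{\min}\leq 0)v_{\min}^\top S v_{\min} = \min(\lambda_{\min},0)$ in all cases.
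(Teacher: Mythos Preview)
Your proposal is correct and follows essentially the same argument as the paper: split the objective by the sign of the weights, bound each quadratic form $\delta_i^\top S(\theta)\delta_i$ by $\min(\lambda_{\min},0)$ or $\max(\lambda_{\max},0)$ using $\norm{\delta_i}\leq 1$, apply the budget constraints $\sum w_i^+\le(c+1)/2$ and $\sum w_i^-\ge(1-c)/2$, and then verify the explicit quartet scheme attains the bound. Your observation that the zero-displacement constraint is never used in the lower bound---so the initial appeal to Lemma~\ref{lem:eigen} is unnecessary---is exactly how the paper proceeds (it works directly with the Rayleigh bounds and never invokes that lemma for this theorem).
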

\begin{proof}
Denote by $I_{\vv w}^+ :=\{i \in [m]\colon  w_i > 0\}$ and $I_{\vv w}^- :=\{i \in [m]\colon  w_i < 0\}$ the index set of positive and negative weights, respectively. 
And $S_{\vv w}^+ =  \sum_{i\in I_{\vv w}^+} w_i$ the sum of the positive weights.  We have $\sum_{i} |w_i| = 2 S_{\vv w}^+ -1 \leq c$,
yielding  $0\leq S_{\vv w}^+  \leq  (c+1)/2$. 

Note that we have $\delta_i^\top S(\theta) \delta_i \in [\min(\lambdamin, 0), ~ \max(\lambdamax, 0)]$ for $\norm{\delta_i} \leq 1$. we have  
\begin{align*} 
G_m^{-c} 
& = \min \left \{ \sum_{i\in I_{\vv w}^+} w_i \delta_i^\top S(\theta) \delta_i + \sum_{i\in I_{ \vv w}^-}  w_i \delta_i^\top S(\theta) \delta_i \right\} \\
& \geq S_{\vv w}^+ \min(\lambdamin,0) ~+~ 
(1-S_{\vv w}^+) \max(\lambdamax,0) \\
& \geq \frac{c+1}{2}\min(\lambdamin,0) 
+ \frac{1-c}{2} \max(\lambdamax, 0). 
\end{align*}
On the other hand, it is easy to verify that this bound is achieved by the solution in \eqref{equ:4splittingappendix}. This completes the proof. 
\end{proof}

\section{Convergence Analysis}
We provide a simple analysis of the convergence of the training loss of signed splitting steepest descent (Algorithm~\ref{alg:main}) on one-hidden-layer neural networks. 
We show that our algorithm allows us to achieve a training MSE loss of $\eta$ by splitting at most $\mathcal O((n/(d\eta))^{3/2})$ steps, starting from a single-neuron network,
where $n$ is data size and $d$ the dimension of the input dimension. 

To set up, consider splitting a one-hidden-layer network, 
\begin{align}\label{equ:onelayer}
f(x;~\vv\theta,\vv w)=\sum_{i=1}^{m}w_{i}\sigma(\theta_{i}^\top x), 
\end{align}
where $\sigma\colon \RR\to\RR$ is an uni-variate activation function. 
Each of the $m$ neurons 
can be the offspring of some earlier neuron, 
and will be split further. 
Consider a general loss of form  $$L(\vv\theta,\vv w) = \E_{x\sim \Dn}[\Phi(f(x;\vv\theta,\vv w))].$$ 
The splitting matrix of the $i$-th neuron can be shown to be 
\[
\S_i(\vv\theta,\vv w)=
w_i\E_{x\sim \Dn}\left [ 
\Phi'(f(x; ~\vv\theta, \vv w)) 
\sigma''(\theta_i^{\top} x) x x^{\top} \right ]. 
\]
For an empirical dataset $\Dn = 
\{x^{(\ell)}\}_{\ell=1}^n$, define 
\[
\X=\left[\Vec(x^{(1)}x^{(1)\top}),...,\Vec(x^{(n)}x^{(n)\top})\right]\in\R^{d^{2}\times n}.
\]
where $\Vec(A)$ denotes the vectorization of matrix $A$. 

We start with showing that 
the training loss can be controlled by the spectrum radius $\rho(S_i(\vv\theta,\vv w))$ of the splitting matrix of any neuron.  
This allows us to establish provable bounds on the loss because 
$\rho(S_i(\vv\theta,\vv w))$ is expected to be zero or small 
when the signed splitting descent converges. 

\begin{ass} \label{ass:ass1}
Consider the network in \eqref{equ:onelayer} with mean square loss $\Phi(f(x)) := {\color{black}\frac{1}{2}}(f(x) - y(x))^2$, where $y(x)$ denotes the label associated with $x$. 
Assume 
$\lambda_{\X}:=\lambda_{\min}\left(\X^{\top}\X/d^{{2}}\right) > 0$, 
and $|\sigma''(\theta_i^\top x^{(\ell)})|\geq h$ 
for $i\in [m]$ and $\ell \in [n]$. 
Assume $ \norm{\nabla^3_{\vv\theta^3} L(\vv\theta, \vv w)}_{\infty} \leq 6C$ for all the values of $\vv\theta$ and $\vv w$ reachable by our algorithm. 
\end{ass}

\paragraph{Remark}
Notice that the assumption $\lambda_\X>0$ requires that $d^2 > 0$, which holds for most computer vision dataset, e.g. , CIFAR and ImageNet.


\begin{lem} \label{lem:globalopt} Under Assumption~\ref{ass:ass1}, denote by  
$\rho({S_i}):=\max\left\{\left|\lambda_{\max}\left(\S_i(\vv\theta,\vv w)\right)\right|,\left|\lambda_{\min}\left(\S_i(\vv\theta,\vv w)\right)\right|\right\}$ the spectrum radius of $\S_i(\vv\theta,\vv w)$, and 
 $\alpha = {n}/({d h^2 \lambda_\X})$.  
 We have 
\[
\E_{x\sim \Dn}\left[(f(x;\vv\theta,\vv w)-y(x))^{2} \right] 
\le
\alpha {(\rho(S_i)/w_i)^{2}}, ~~\forall i \in [m]. 
\]
\end{lem}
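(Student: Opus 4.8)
The plan is to reduce the inequality to a linear-algebra statement about the residual vector and then invoke the two spectral hypotheses. Write $r=(r^{(1)},\dots,r^{(n)})$ with $r^{(\ell)}:=f(x^{(\ell)};\vv\theta,\vv w)-y(x^{(\ell)})$, so that the left-hand side is exactly $\tfrac1n\|r\|_2^2$. For the square loss $\Phi'(f)=f-y$, so the splitting matrix of neuron $i$ is $\S_i(\vv\theta,\vv w)=\tfrac{w_i}{n}\sum_{\ell=1}^n r^{(\ell)}\sigma''(\theta_i^\top x^{(\ell)})\,x^{(\ell)}x^{(\ell)\top}$. Collecting the scalars $\sigma''(\theta_i^\top x^{(\ell)})$ into a diagonal matrix $D_i\in\R^{n\times n}$ and vectorizing, this becomes $\Vec(\S_i(\vv\theta,\vv w))=\tfrac{w_i}{n}\,\X D_i r$, which is the bridge between $\S_i$ and the residual.

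Next I would pass from the Frobenius norm to the spectral radius. Since $\S_i$ is a symmetric $d\times d$ matrix, $\rho(\S_i)=\|\S_i\|_{\mathrm{op}}$ and $\|\S_i\|_F\le\sqrt d\,\|\S_i\|_{\mathrm{op}}$ (because $\mathrm{rank}(\S_i)\le d$), hence $\rho(\S_i)\ge\tfrac{1}{\sqrt d}\,\|\Vec(\S_i)\|_2=\tfrac{|w_i|}{n\sqrt d}\,\|\X D_i r\|_2$. It then remains to lower bound $\|\X D_i r\|_2$ by a multiple of $\|r\|_2$. For this I use $\|\X D_i r\|_2^2=(D_i r)^\top \X^\top\X (D_i r)\ge\lambda_{\min}(\X^\top\X)\,\|D_i r\|_2^2=d^2\lambda_\X\,\|D_i r\|_2^2$ from $\lambda_\X>0$, together with $\|D_i r\|_2^2=\sum_\ell \sigma''(\theta_i^\top x^{(\ell)})^2(r^{(\ell)})^2\ge h^2\|r\|_2^2$ from $|\sigma''|\ge h$. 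Chaining these gives $\rho(\S_i)\ge\tfrac{|w_i|\sqrt d\,h\sqrt{\lambda_\X}}{n}\,\|r\|_2$, i.e. $\|r\|_2^2\le\tfrac{n^2}{w_i^2 d h^2\lambda_\X}\,\rho(\S_i)^2$; dividing by $n$ yields $\E_{x\sim\Dn}[(f(x)-y(x))^2]\le\tfrac{n}{d h^2\lambda_\X}(\rho(\S_i)/w_i)^2$, which is the claim with $\alpha=n/(d h^2\lambda_\X)$.

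I expect the only genuinely delicate point to be the passage from $\|\cdot\|_F$ to $\rho(\cdot)$: one must use that $\S_i$ is symmetric (so $\rho=\|\cdot\|_{\mathrm{op}}$ exactly, not merely $\le$) and that the $\sqrt d$ factor stems from $\mathrm{rank}(\S_i)\le d$. Getting the constant in $\alpha$ exactly right then hinges on tracking the powers of $d$ carefully — one factor $d^2$ entering from $\lambda_{\min}(\X^\top\X)=d^2\lambda_\X$ and one factor $1/d$ from the Frobenius bound — so that the net dependence is $1/d$. Everything else is a direct chain of inequalities; in particular this lemma uses neither the $O(\epsilon^3)$ expansion \eqref{equ:decompLm} nor the third-derivative bound $\|\nabla^3_{\vv\theta^3}L\|_\infty\le 6C$, which are only needed for the subsequent convergence counting argument.
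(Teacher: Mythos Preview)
Your proof is correct and is essentially the same argument as the paper's: vectorize $S_i$, lower bound $\|\Vec(S_i)\|_2$ via $\lambda_{\min}(\X^\top\X)=d^2\lambda_\X$ and $|\sigma''|\ge h$, and upper bound $\|S_i\|_F$ by $\sqrt d\,\rho(S_i)$ using symmetry. The only cosmetic difference is that you package the scalars into the diagonal matrix $D_i$ and write $\Vec(S_i)=\tfrac{w_i}{n}\X D_i r$, which makes the linear-algebra structure a bit more explicit; the paper carries out the identical chain of inequalities componentwise.
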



\begin{ass} \label{ass:ass2}
Assume Assumption~\ref{ass:ass1} holds. 
Let $\eta$ be any positive constant, and define 
$\rho_0 :=  (\eta/\alpha)^{1/2}  = h  (\lambda_\X\eta d/n)^{1/2}$. 
Assume we apply Algorithm~\ref{alg:main} to the neural network in \eqref{equ:onelayer}, 
following the guidance below:

1) At each splitting step, we pick any neuron with $w_i^2 \geq 1$ and $\rho(S_i) \geq \rho_0$ 
and split it with the optimal splittings in  Theorem~\ref{thm:2spliting}-\ref{thm:4spliting} (with $m=2,3 \text{~or~}4$). The algorithm stops when either $L(\vv\theta,\vv w)\leq \eta$, or such neurons can not be found. 

2) Assume the step-size $\epsilon$ used in the splitting updates satisfies $\epsilon \leq\frac{1}{4C}\kappa_m  \rho_0 = \mathcal O((d\eta/n)^{1/2})$.  

3) Assume we only update $\vv\theta$ during the 
parametric optimization phase while keeping $\vv w$ unchanged, and the parametric optimization does not deteriorate the loss.  
\end{ass} 

\begin{thm} \label{thm:conv34splitting}
Assume we run Algorithm~\ref{alg:main} with triplet or quartet splittings $(m=3\text{~or~}4)$ and $c\geq 3$, and 
Assumption~
\ref{ass:ass2} holds. 
If we initialize the network 
with a single neuron that satisfies $w_{i}^2\geq 1$, then the algorithm 
achieves $L(\vv\theta, \vv w)\leq \eta$ 
with at most 
$
T:=\left \lceil  \beta  \epsilon^{-2}  
\left (\frac{n}{d\eta}\right)^{1/2} 
\right \rceil 
$
iterations, 
where 
\[
\beta = 4(\kappa_3 h \sqrt{\lambda_\X} )^{-1}
\max(L(\vv\theta_0, \vv w_0) - \eta,0).
\]
In this case, we are able to obtain a neural network that achieves $L(\vv \theta, \vv w)\leq \eta$ with  $2T+1$ neurons
via triplet splitting, 
and $3T+1$ neurons via quartet splitting. 
\end{thm}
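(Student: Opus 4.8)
\emph{Overview.} The plan is to combine a uniform per-step decrease of the loss with the fact that the algorithm can never get stuck while $L(\vv\theta,\vv w)>\eta$. Concretely, I will show (i) that every splitting step, followed by its parametric phase, lowers $L$ by at least $\frac{\epsilon^2}{4}\kappa_m\rho_0$, and (ii) that as long as $L(\vv\theta,\vv w)>\eta$ there is always a neuron meeting the selection rule of Assumption~\ref{ass:ass2}, so the algorithm keeps splitting. Telescoping (i) over the steps forces $L\le\eta$ within the claimed $T$, and counting how many neurons each split adds gives the final network sizes.

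\emph{Per-step decrease.} Fix a step and let the split neuron have parameter $\theta$ and weight $w$ with $w^2\ge 1$ and $\rho(S(\theta))\ge\rho_0$. The schemes of Theorem~\ref{thm:3splitting}--\ref{thm:4spliting} have zero average displacement ($\delta_0=0$, $\sum_i w_i\delta_i=0$), so \eqref{equ:decompLm} gives $\Lm(\vv\theta,\vv w)=L(\theta)+\frac{\epsilon^2}{2}\II(\vv\delta,\vv w;\theta)+O(\epsilon^3)$ with vanishing first-order term. Using the optimal scheme, $\II(\vv\delta,\vv w;\theta)=G_m^{-c}$, and \eqref{equ:gmcrho} gives $\frac{\epsilon^2}{2}G_m^{-c}\le-\frac{\epsilon^2}{2}\kappa_m\rho(S(\theta))\le-\frac{\epsilon^2}{2}\kappa_m\rho_0$. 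Assumption~\ref{ass:ass1} (its bound on the third derivatives of the loss over all reachable parameters, which is what the constant $C$ encodes) makes the cubic remainder in \eqref{equ:decompLm} at most $C\epsilon^3$ in absolute value, and the step-size choice $\epsilon\le\frac{1}{4C}\kappa_m\rho_0$ from Assumption~\ref{ass:ass2} bounds it by $\frac{\epsilon^2}{4}\kappa_m\rho_0$. Hence the split alone decreases $L$ by at least $\frac{\epsilon^2}{4}\kappa_m\rho_0$, and since the parametric phase does not increase $L$, the same lower bound holds for the net decrease over the step.

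\emph{Non-termination.} I claim the network always contains a neuron with squared weight $\ge 1$. This holds for the seed by hypothesis and is preserved by splitting: when $c\ge 3$ we have $\frac{c+1}{4}\ge 1$ and $\frac{c+1}{2}\ge 2$, and inspecting the weight patterns of the positive and negative triplet schemes \eqref{equ:p3splitting}--\eqref{equ:n3splitting} and of the quartet scheme \eqref{equ:4splitting}, each of them contains at least one offspring whose weight is the parent weight times a factor of modulus $\ge 1$, which therefore again has squared weight $\ge 1$; this is the only place where $c\ge 3$ (rather than $c\ge 1$) is used. Now suppose $L(\vv\theta,\vv w)>\eta$ and pick such a neuron, say the $i$-th, with $w_i^2\ge 1$. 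Since $\Phi=\frac12(f-y)^2$ we have $\E[(f-y)^2]=2L>2\eta$, so Lemma~\ref{lem:globalopt} gives $\rho(S_i)^2\ge w_i^2\,\E[(f-y)^2]/\alpha\ge 2\eta/\alpha>\eta/\alpha=\rho_0^2$, i.e. $\rho(S_i)>\rho_0$. Thus this neuron meets the criterion of Assumption~\ref{ass:ass2}(1), so the algorithm does not stop and performs another split.

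\emph{Conclusion.} While $L>\eta$, after $t$ steps $L\le L(\vv\theta_0,\vv w_0)-t\,\frac{\epsilon^2}{4}\kappa_m\rho_0$, so $L\le\eta$ is reached after at most $\big\lceil 4\max(L(\vv\theta_0,\vv w_0)-\eta,0)/(\epsilon^2\kappa_m\rho_0)\big\rceil$ steps; substituting $\rho_0=h(\lambda_{\X}\eta d/n)^{1/2}$ and $\kappa_m=\kappa_3$ (valid for $m=3$ and $m=4$) rewrites this as $T=\big\lceil\beta\,\epsilon^{-2}(n/(d\eta))^{1/2}\big\rceil$ with $\beta=4(\kappa_3 h\sqrt{\lambda_{\X}})^{-1}\max(L(\vv\theta_0,\vv w_0)-\eta,0)$. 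Starting from one neuron, each triplet split replaces one neuron by three (net $+2$) and each quartet split replaces one by four (net $+3$), giving at most $2T+1$ and $3T+1$ neurons respectively. The main obstacle is the per-step estimate: turning the $O(\epsilon^3)$ in \eqref{equ:decompLm} into a genuine uniform bound $\le C\epsilon^3$ holding at every $(\vv\theta,\vv w)$ visited by the algorithm, which is exactly the role of the third-derivative hypothesis in Assumption~\ref{ass:ass1}; the rest is bookkeeping.
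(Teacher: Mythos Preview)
Your proposal is correct and follows essentially the same approach as the paper: invariance of the existence of a neuron with $|w_i|\ge 1$ under splitting when $c\ge 3$, Lemma~\ref{lem:globalopt} to guarantee $\rho(S_i)\ge\rho_0$ whenever $L>\eta$, the Taylor expansion \eqref{equ:decompLm} together with \eqref{equ:gmcrho} and the step-size bound to obtain a uniform per-step decrease of $\frac{\epsilon^2}{4}\kappa_m\rho_0$, and then telescoping. The only differences are organizational (you argue per-step decrease before non-termination, while the paper frames it as a dichotomy) and that you are slightly more careful with the factor of two between $L$ and $\E[(f-y)^2]$; neither changes the substance of the argument.
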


Since $\epsilon=\mathcal O((d\eta/n)^{1/2})$ by Assumption \ref{ass:ass2}, 
our result 
suggests that we can 
learn a neural network with $\mathcal O((n/(d\eta))^{3/2})$ neurons 
to achieve a loss  no larger than $\eta$. 
Note that this is much smaller than the number of neurons required 
for over-parameterization-based analysis of standard gradient descent training of neural networks. For example, the analysis in \citet[][]{du2018gradient} requires   $\mathcal O(n^6)$ neurons, or $\mathcal O(n^2/d)$ in \citet[][]{oymak2019towards},  
much larger than what we need when $n$ is large.

Similar result can be established for binary splittings ($m=2$), 
but extra consideration is needed. The problem is that the positive binary splitting halves the output weight of the neurons at each splitting, 
which makes $w_i$ of all the neurons small and hence yields a loose bound in \eqref{lem:globalopt}. This problem  is  sidestepped in Theorem~\ref{thm:conv34splitting}  
for triplet and quartet splittings by taking $c\geq 3$ to ensure $(c+1)/4\geq1$, 
so that  there always exists at least one off-spring  
whose output weight is larger than 1 after the splitting. 
There are several different ways for addressing this issue for binary splittings.  
One simple approach is to initialize the  network to have $T$ neurons with $w_i^2\geq1$, so that there always exists neurons with $w_i^2\geq 1$ during the first $T$ iterations, and yields a neural network with $2T$ neurons at end. We provide a throughout discussion of this issue in the Appendix.  

\section{Proof of Theoretical Analysis} \label{apx: theory}

\subsection{Proof of Lemma \ref{lem:globalopt}}

\noindent\textbf{Lemma~\ref{lem:globalopt}}~~ 
\emph{Under Assumption~\ref{ass:ass1}, denote by  
$\rho({S_i}):=\max\left\{\left|\lambda_{\max}\left(\S_i(\vv\theta,\vv w)\right)\right|,\left|\lambda_{\min}\left(\S_i(\vv\theta,\vv w)\right)\right|\right\}$ the spectrum radius of $\S_i(\vv\theta,\vv w)$, and 
 $\alpha = {n}/({d h^2 \lambda_\X})$.  
 We have 
\[\E_{x\sim \Dn}\left[(f(x;\vv\theta,\vv w)-y(x))^{2} \right] 
\le
\alpha {(\rho(S_i)/w_i)^{2}}, ~~\forall i \in [m]. \]
}
\begin{proof}
We want to bound the mean square error 
using the spectrum radius of the splitting matrix. For the mean square loss, a derivation shows that the splitting matrix of the $i$-th neuron is 
\begin{align*} 
S_i = 
\frac{1}{n}\sum_{\ell=1}^n w_i e_\ell  h_{i,\ell} \left (   x^{(\ell)} x^{(\ell)\top} \right),&& 
e_\ell := f(x^{(\ell)};~ \vv\theta,\vv w) - y(x^{(\ell)}),&&
h_{i,\ell}:=\sigma''(\theta_i^\top x^{(\ell)}). 
\end{align*}
Denote $\norm{\cdot}_F$ as the Frobenius norm. We have 
\begin{align*}
  \norm{S_i}_F = \norm{\Vec(S_i)}_2 
 & = \frac{1}{n}  \norm{\sum_{\ell=1}^n w_i e_\ell  h_{i,\ell} \Vec \left (   x^{(\ell)} x^{(\ell)^\top} \right) }_2\\ 
& \geq d \sqrt{\lambda_{\X}} \frac{1}{n} \sqrt{\sum_{\ell=1}^n (w_i e_\ell h_{i,\ell})^2 }  \\
& \geq d \sqrt{\lambda_\X} \frac{1}{n} |w_i| h \sqrt{\sum_{\ell=1}^n ( e_\ell)^2 } , \ant{because $|h_{i,\ell}|\geq h$, $\forall i,\ell$}. 
\end{align*}
On the other hand, 
\begin{align*}
     \norm{S_i}_F^2 = \trace(S_i^2)  
     \leq d \rho(S_i^2)  =
      d \rho(S_i)^2,
\end{align*}
where $ \rho(S_i^2) = \rho(S_i)^2$ holds because $S_i$ is a symmetric matrix.  
This gives 
$$
\E_{x\sim  D_n}
\left [(f(x; \vv\theta,\vv w)-y(x))^2\right]
= \frac{1}{n}{\sum_{\ell=1}^n ( e_\ell)^2 } \leq  
\frac{n  \rho(S_i)^2}{ h^2 d\lambda_\X w_i^2}. 
$$
\end{proof}

\subsection{Convergence Rate of Triplet and Quartet Splittings} 

\begin{proof}[\textbf{Proof of Theorem~\ref{thm:conv34splitting}}] 
First, 
when $c\geq 3$, 
note that the triplet and quartet splittings always yield at least one off-spring whose weight's absolute value is no smaller than 1 (because $(c+1)/2\geq (c+1)/4 \geq 1$ when $c\geq 3$). Since there is at least one neuron satisfies $w_{i}^2\geq 1$ in the initialization, there always exist neurons with $w_i^2 \geq 1$ throughout the algorithm. 

If there exists a neuron $i$ such that $w_i^2 \geq 1$ and $\rho(S_i) \leq |w_i| (\eta/\alpha)^{1/2}$  
within the first $T$ iterations of signed splitting,
 we readily have by Lemma~\ref{lem:globalopt}
$$
L(\vv\theta, \vv w) \leq \alpha (\rho(S_i)/w_i)^2 \leq \eta. 
$$
If this does not hold, then 
$\alpha(\rho(S_i)/w_i)^2 \geq \eta$ holds for every neuron in the first $T$ iterations.  
This means that the neurons with $w_i^2\geq 1$ must have $\rho(S_i) \geq (\eta/\alpha)^{1/2}$. 

Let $(\vv\theta',\vv w')$ be the parameter and weights we obtained by applying an optimal triplet splitting on $(\vv\theta, \vv w)$. We have by the Taylor expansion in Theorem~2.2 and Theorem 2.4 of \citet{splitting2019}, we have 
\begin{align*} 
L(\vv\theta',\vv w') 
& \leq L(\vv \theta, \vv w) + 
 \frac{\epsilon^2}{2} G_m^{-c} +  C\epsilon^3 \\
& \leq L(\vv \theta, \vv w) -\frac{\epsilon^2}{2} \kappa_3 \rho(S_i) +  C\epsilon^3 \\
& \leq L(\vv \theta, \vv w) -\frac{\epsilon^2}{2} \kappa_3 ({\eta /\alpha })^{1/2}
+  C\epsilon^3. 
\end{align*}
Therefore, through the first $T$ iterations of  splitting descent,  we have 
\begin{align*} 
L(\th_T, \vv w_T) 
& \leq L(\vv \theta_0, \vv w_0) - T \left (\frac{\epsilon^2}{2} \kappa_3 
({\eta /\alpha } )^{1/2}
- C \epsilon^3 \right) \\
& \leq L(\vv \theta_0, \vv w_0) - T \left (\frac{\epsilon^2}{4} \kappa_3 
({\eta /\alpha } )^{1/2}\right) \ant{because we assume $\epsilon \leq\frac{1}{4C}\kappa_3  (\eta/\alpha)^{1/2}$} \\
& \leq \eta. 
\ant{because we assume  $T=\left \lceil 4\left (\epsilon^{2}\kappa_3 (\eta/
\alpha)^{1/2}\right)^{-1} (L(\vv\theta_0, \vv w_0) - \eta) \right \rceil$}
\end{align*}
This completes the proof. 
\end{proof}
\subsection{Convergence of Binary Splitting}


\begin{thm} \label{thm:conv2splitting}
Assume we run Algorithm~\ref{alg:main} with signed binary splittings $(m=2)$  and $c\geq 1$, and  
Assumption~
\ref{ass:ass2} holds.  
Assume we initialize the network with $(\vv\theta_0,\vv w_0)$ with $m_0$ neurons such that there is at least 
\[
T:=\left \lceil \beta 
n^{1/2} d^{-1/2} 
\epsilon^{-2} \eta^{-1/2}  \right \rceil 
\]
neurons satisfying $w_i^2 \geq 1$, 
where $\beta  
=4(\kappa_2 h^2 \lambda_\X^2 )^{-1}  
\max(L(\vv\theta_0, \vv w_0) - \eta,0)$. 

Then the algorithm determines within at most 
$T$ 
iterations, and return 
a neural network that achieves $L(\vv \theta, \vv w)\leq \eta$ with  $T+m_0$  neurons.  
\end{thm}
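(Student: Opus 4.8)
The plan is to follow the proof of Theorem~\ref{thm:conv34splitting} almost verbatim, with $\kappa_2$ in place of $\kappa_3$; the one genuinely new ingredient is that we must track the number of neurons with $w_i^2\ge1$ along the run, since positive binary splitting halves an offspring weight and can therefore destroy ``heavy'' neurons (unlike triplet/quartet splitting, where an offspring with weight $\tfrac{c+1}{2}w_i$ is always produced once $c\ge3$). First I would record a bookkeeping fact: a single binary splitting step modifies only the neuron being split, replacing it by two offsprings, so it can decrease the number of neurons with $w_i^2\ge1$ by at most one. Indeed, the negative binary scheme \eqref{equ:neg2splitting} leaves an offspring of weight $\tfrac{c+1}{2}w_i$, whose square is $\ge w_i^2$ since $c\ge1$, while the positive binary scheme \eqref{equ:binarysplitting} merely halves; and by Assumption~\ref{ass:ass2}(3) the parametric phase leaves $\vv w$ unchanged. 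Hence, if the initial network has at least $T$ neurons with $w_i^2\ge1$, then at the start of iteration $t\in\{1,\dots,T\}$ at least $T-(t-1)\ge1$ such neurons remain.

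Next I would run the usual dichotomy at each of the first $T$ iterations. If at some step there is no neuron with $w_i^2\ge1$ and $\rho(S_i)\ge\rho_0$, pick the (still available) neuron $i$ with $w_i^2\ge1$: it has $\rho(S_i)<\rho_0=(\eta/\alpha)^{1/2}$, so Lemma~\ref{lem:globalopt} gives $L(\vv\theta,\vv w)\le\alpha(\rho(S_i)/w_i)^2\le\alpha\rho_0^2=\eta$ and the algorithm has already succeeded. Otherwise the algorithm splits a neuron with $\rho(S_i)\ge\rho_0$, and combining the Taylor expansion (Theorems~2.2 and~2.4 of \citet{splitting2019}), the bound $G_2^{-c}\le-\kappa_2\rho(S_i)\le-\kappa_2\rho_0$ from \eqref{equ:gmcrho}, and the step-size assumption $\epsilon\le\tfrac1{4C}\kappa_2\rho_0$ of Assumption~\ref{ass:ass2}(2), we get
\[
L(\vv\theta',\vv w')\le L(\vv\theta,\vv w)-\tfrac{\epsilon^2}{2}\kappa_2\rho_0+C\epsilon^3\le L(\vv\theta,\vv w)-\tfrac{\epsilon^2}{4}\kappa_2\rho_0,
\]
while the subsequent parametric phase does not increase the loss.

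Telescoping over the first $T$ iterations yields $L(\vv\theta_T,\vv w_T)\le L(\vv\theta_0,\vv w_0)-T\tfrac{\epsilon^2}{4}\kappa_2\rho_0$. Substituting $\rho_0=h(\lambda_\X\eta d/n)^{1/2}$, one checks that the stated choice $T=\lceil\beta\,n^{1/2}d^{-1/2}\epsilon^{-2}\eta^{-1/2}\rceil$, with $\beta$ as defined, makes $T\tfrac{\epsilon^2}{4}\kappa_2\rho_0\ge\max(L(\vv\theta_0,\vv w_0)-\eta,0)$, hence $L(\vv\theta_T,\vv w_T)\le\eta$. Together with the early-stopping branch, the algorithm therefore produces a network of loss $\le\eta$ within at most $T$ iterations, and since each binary splitting adds a net of one neuron to a network of initial size $m_0$, the final network has at most $m_0+T$ neurons.

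The step requiring the most care is the neuron-supply argument of the first paragraph: one must verify that the splitting criterion in Assumption~\ref{ass:ass2}(1) only ever selects neurons with $w_i^2\ge1$ (so that charging ``at most one heavy neuron lost per step'' is legitimate), that splitting a sufficiently heavy neuron can only create more qualifying neurons rather than fewer, and that neither the parametric phase nor the splitting of an unrelated neuron disturbs the count; this is exactly what the over-provisioned initialization hypothesis is designed to handle. Once that is in place, the rest is a routine transcription of the triplet/quartet estimates.
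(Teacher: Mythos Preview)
Your proposal is correct and follows essentially the same route as the paper's proof: both hinge on the observation that each binary splitting step consumes at most one neuron with $w_i^2\ge1$ (since by Assumption~\ref{ass:ass2}(1) only such a neuron is selected, and the parametric phase leaves $\vv w$ unchanged), so the over-provisioned initialization guarantees a heavy neuron is available throughout the first $T$ iterations, after which the triplet/quartet estimates carry over verbatim with $\kappa_2$ replacing $\kappa_3$. Your bookkeeping is in fact slightly more explicit than the paper's (you note that the negative scheme \eqref{equ:neg2splitting} always preserves a heavy offspring since $\tfrac{c+1}{2}\ge1$), but the structure of the argument is identical.
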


As shown in Lemma~\ref{lem:initial2splitting}, we require the initialization condition of Lemma~\ref{lem:initial2splitting} in Theorem~\ref{thm:conv2splitting} with $m_0 = \mathcal O(n^{3/2} d^{-3/2} \eta^{-3/2})$, 
which implies that signed binary splitting can learn neural networks with 
$ \mathcal O(n^{3/2} d^{-3/2} \eta^{-3/2})$ neurons to achieve $L(\vv\theta,\vv w) \leq \eta$. 

\paragraph{Remark}
Indeed, if the initialization condition in Lemma~\ref{lem:initial2splitting} holds, Theorem \ref{thm:conv2splitting} can be generalized for triplet and quartet splitting easily for any $c>1$.



\begin{proof}[\textbf{Proof of Theorem~\ref{thm:conv2splitting}}]  
First, 
because there are at least $T$ neurons with $w_i^2 \geq 1$ and each splitting step splits only one such neurons, there must exist neurons with $w_i^2\geq 1$ throughout the first $T$ iterations. 

If there exists a neuron $i$ such that $w_i^2 \geq 1$ and $\rho(S_i) \leq |w_i| (\eta/\alpha)^{1/2}$  
within the first $T$ iterations of signed splitting,
 we readily have by Lemma~\ref{lem:globalopt}
$$
L(\vv\theta, \vv w) \leq \alpha (\rho(S_i)/w_i)^2 \leq \eta. 
$$
If this does not hold, then 
$\alpha(\rho(S_i)/w_i)^2 \geq \eta$ holds for every neuron in the first $T$ iterations.  
This means that the neurons with $w_i^2\geq 1$ must have $\rho(S_i) \geq (\eta/\alpha)^{1/2}$. 

Let $(\vv\theta',\vv w')$ be the parameter and weights we obtained by applying an optimal trplet splitting on $(\vv\theta, \vv w)$. By the Taylor expansion in Theorem~2.2 and Theorem 2.4 of \citet{splitting2019}, we have 
\begin{align*} 
L(\vv\theta',\vv w') 
& \leq L(\vv \theta, \vv w) + 
 \frac{\epsilon^2}{2} G_2^{-c} +  C\epsilon^3 \\
& \leq L(\vv \theta, \vv w) -\frac{\epsilon^2}{2} \kappa_2 \rho(S_i) +  C\epsilon^3 \\
& \leq L(\vv \theta, \vv w) -\frac{\epsilon^2}{2} \kappa_2 ({\eta /\alpha })^{1/2}
+  C\epsilon^3. 
\end{align*}
Therefore, through the first $T$ iterations of  splitting descent,  we have 
\begin{align*} 
L(\th_T, \vv w_T) 
& \leq L(\vv \theta_0, \vv w_0) - T \left (\frac{\epsilon^2}{2} \kappa_2  
({\eta /\alpha } )^{1/2}
- C \epsilon^3 \right) \\
& \leq L(\vv \theta_0, \vv w_0) - T \left (\frac{\epsilon^2}{4} \kappa_2 
({\eta /\alpha } )^{1/2}\right) \ant{because we assume $\epsilon \leq\frac{1}{4C}\kappa_3  (\eta/\alpha)^{1/2}$} \\
& \leq \eta. 
\ant{because we assume  $T=\left \lceil 4\left (\epsilon^{2}\kappa_3 (\eta/
\alpha)^{1/2}\right)^{-1} (L(\vv\theta_0, \vv w_0) - \eta) \right \rceil$}
\end{align*}
This completes the proof. 
\end{proof}

\begin{lem}\label{lem:initial2splitting}
Assume $\sigma(0) =0$, $\norm{\sigma}_{\mathrm{Lip}}< \infty$, and  
 $r_{\mathcal D_n} = \max_{(x,y)
\sim \mathcal D_n}(\norm{x}, |y|) < \infty$. 
Let $r$ be any positive constant and $\ell_0 = r_{\mathcal D_n} (1+ r \norm{\sigma}_{\mathrm{Lip}})$. 

If we initialize $(\vv \theta_0, \vv w)$ with $m_0 := \left \lceil \left (4\epsilon^{2}\kappa_2 (\eta/
\alpha)^{1/2}\right)^{-1} (\ell_0- \eta) \right \rceil$ neurons, 
such that $w_{i,0}^2 = 1$ and $\norm{\theta_i} \leq \frac{r}{m_0}$, 
then there exists at least 
$$
T = m_0 \geq 
\left \lceil \left (4\epsilon^{2}\kappa_2 (\eta/
\alpha)^{1/2}\right)^{-1} (L(\vv\theta_0, \vv w_0)- \eta) \right \rceil 
$$
neurons that satisfies $w_{i,0}^2 \geq 1$. 
\end{lem}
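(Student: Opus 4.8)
The plan is to reduce the entire claim to a single scalar inequality: an upper bound on the initial loss, $L(\vv\theta_0,\vv w_0)\le \ell_0$. The part of the statement asserting that ``at least $T$ neurons satisfy $w_{i,0}^2\ge1$'' is immediate from the construction, since every one of the $m_0$ initialized neurons has $w_{i,0}^2=1$, so all $m_0=T$ of them qualify. What is left is the numerical inequality $m_0\ge \left\lceil\left(4\epsilon^2\kappa_2(\eta/\alpha)^{1/2}\right)^{-1}\left(L(\vv\theta_0,\vv w_0)-\eta\right)\right\rceil$. Since $m_0$ is by definition the same ceiling expression with $L(\vv\theta_0,\vv w_0)$ replaced by $\ell_0$, and since the multiplicative factor $\left(4\epsilon^2\kappa_2(\eta/\alpha)^{1/2}\right)^{-1}$ is strictly positive (using $c>1$ so that $\kappa_2=(c-1)/(c+1)>0$, together with $\eta,\alpha>0$), the map $s\mapsto\left\lceil\left(4\epsilon^2\kappa_2(\eta/\alpha)^{1/2}\right)^{-1}(s-\eta)\right\rceil$ is monotone nondecreasing; hence it suffices to prove $L(\vv\theta_0,\vv w_0)\le\ell_0$.

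To get $L(\vv\theta_0,\vv w_0)\le\ell_0$, I would bound the network output pointwise at initialization. From $\sigma(0)=0$ and the Lipschitz assumption, each activation obeys $|\sigma(\theta_{i,0}^\top x)|=|\sigma(\theta_{i,0}^\top x)-\sigma(0)|\le\norm{\sigma}_{\mathrm{Lip}}\norm{\theta_{i,0}}\norm{x}$. Summing over the $m_0$ neurons and using $|w_{i,0}|=1$, $\norm{\theta_{i,0}}\le r/m_0$, and $\norm{x}\le r_{\mathcal D_n}$, the $m_0$ contributions, each at most $\norm{\sigma}_{\mathrm{Lip}}(r/m_0)r_{\mathcal D_n}$, sum to $|f(x;\vv\theta_0,\vv w_0)|\le r\norm{\sigma}_{\mathrm{Lip}}r_{\mathcal D_n}$ for every data point. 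Combined with $|y(x)|\le r_{\mathcal D_n}$ and the triangle inequality, the residual satisfies $|f(x;\vv\theta_0,\vv w_0)-y(x)|\le r_{\mathcal D_n}(1+r\norm{\sigma}_{\mathrm{Lip}})=\ell_0$ uniformly over $(x,y)\sim\mathcal D_n$, whence the (mean square) loss $L(\vv\theta_0,\vv w_0)$ is at most $\ell_0$, and the proof closes.

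The one point I would flag is the telescoping in the second step: the bound on $|f(x;\vv\theta_0,\vv w_0)|$ is independent of $m_0$ \emph{precisely} because the initialization scales each $\norm{\theta_{i,0}}$ as $r/m_0$, so a sum of $m_0$ terms each of size at most $\norm{\sigma}_{\mathrm{Lip}}(r/m_0)r_{\mathcal D_n}$ stays bounded by the width-independent constant $r\norm{\sigma}_{\mathrm{Lip}}r_{\mathcal D_n}$. Without this scaling the initial loss would grow with the network width and the whole reduction would break. With it, $\ell_0$ is a genuine width-independent constant and the monotone-ceiling argument goes through cleanly; everything else is routine bookkeeping, so I do not anticipate a substantive obstacle beyond checking that the positivity conditions needed for the monotonicity (notably $c>1$) are in force.
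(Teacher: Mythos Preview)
Your proposal is essentially identical to the paper's own proof: both reduce the claim to the scalar bound $L(\vv\theta_0,\vv w_0)\le\ell_0$, obtain it via $\sigma(0)=0$, the Lipschitz estimate $|\sigma(\theta_{i,0}^\top x)|\le\norm{\sigma}_{\mathrm{Lip}}\norm{\theta_{i,0}}\norm{x}$, and the width-cancelling scaling $\norm{\theta_{i,0}}\le r/m_0$, and then note that the conclusion follows immediately since all $m_0$ neurons have $w_{i,0}^2=1$. Your write-up is in fact more explicit than the paper's (which simply says ``This obviously concludes the result''), particularly in spelling out the monotonicity of the ceiling map and the role of $c>1$ in ensuring $\kappa_2>0$.
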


\begin{proof}
\begin{align*}
    L(\vv \theta_0,\vv w_0) 
    & = \E_{(x,y)\sim \mathcal D_n} \left [ \left ( 
    y - \sum_{i=1}^{m_0} w_{i,0}\sigma(\theta_{i,0}^\top x) \right)^2\right] \\
    & \leq  \E_{(x,y)\sim \mathcal D_n} \left [\left (|y| + m_0 \norm{\sigma}_{\mathrm{Lip}}  \norm{\theta} \norm{x}\right)^2\right ] \\
    & \leq r_{\mathcal D_n}(1 + 
     k \norm{\sigma}_{\mathrm{Lip}}) \\
     & = \ell_0.
\end{align*}
This obviously concludes the result. 
\end{proof}
\section{Experiment Detail}
\subsection{Toy RBF neural network}
\label{app:toy}
Following \citet{splitting2019}, we consider the following one-hidden-layer RBF neural network with one-dimensional inputs: 
\begin{align} \label{equ:rbf}
\!\!\!\!\!\!\! f(x) = 
\sum_{i=1}^{m} w_i \sigma(\theta_{i1} x + \theta_{i2}), && \sigma(x) = \exp (-\frac{x^2}{2}), 
\end{align}
where 
$x, w_i\in \R$ and $\theta_i=[\theta_{i1},\theta_{i2}]\in \RR^2$ for  $\forall i\in[m]$. We define an underlying true function by taking $m=15$ neurons and drawing $w_i$ and the elements of $\theta_i$ i.i.d. from  $\normal(0,3)$. 
We then simulate a dataset $
\{x^{(\ell)}, y^{(\ell)}\}_{\ell=1}^{1000}$ by 
drawing $x^{(\ell)}$ from $\mathrm{Uniform}([-5,5])$ and $y^{(\ell)} = f(x^{(\ell)})$. 

 Because different $m$ yields different numbers of new neurons at each splitting,
 we set the number of gradient descent iterations during the parametric update phase to be $(m-1)\times 10k$ for each $m\in\{2,3,4\}$,
 so that neural networks of the same size is trained with the same number of gradient descent iterations.    
 We use Adam optimizer 
  with learning rate 0.005 for parametric gradient descent in all the splitting methods. 
  
 \subsection{CIFAR-100}
 \label{app:cifar}
  We train MobileNetV1 starting from a small network with 32 filters in each layer. 
MobileNetV1 contains two type of layers: the depth-wise layer and the point-wise layer. 
We grow the network by splitting the filters in the point-wise layers following Algorithm \ref{alg:main}.  
In parametric update phase, we train the network for 160 epochs with batch size 128. We use stochastic gradient descent with momentum 0.1,  weight decay $10^{-4}$, 
{and learning rate $0.1$.}  
{We apply a decay rate 0.1 on the learning rate when we reach 50$\%$ and 75$\%$ of the total training epochs. } 
In the splitting phase, we set the splitting step size $\epsilon = 0.01$, choose $c = 1.3$, and split the top $35\%$ of the current filters at each splitting phase.  
Our experiments in the energy-aware setting follows 
 \citet{wang2019energy} closely, 
 by keeping their code unchanged expect replacing positive splitting schemes with binary signed splittings with $c=1.3$. 

\subsection{ImageNet}
\label{app:img}
We test different methods on MobileNetV1 following the same setting in \citet{wang2019energy}: when we update parameters, we set the batch size to be 128 for each GPU and use 4 GPUs in total, we use stochastic gradient descent with the cosine learning rate scheduler and set the initial learning rate to be 0.2. We also use label-smoothing (0.1) and 5 epochs warm-up. When splitting the networks, we set the splitting step size $\epsilon = 0.01$ and keep the number of MACs close to the MACs of S2D reported in \citet{wang2019energy}.

\subsection{Filter Visualization Result}
\label{sec:vis}
We visualize the filters picked by S2D/S3D 
when splitting VGG19. 
We take the pre-trained VGG19 on ImageNet, 
and evaluate the splitting matrices of the filters in the last  convolution layer on a randomly picked image. 
We then pick the filters with the largest or smallest 
$\lambda_{\min}$ or $\lambda_{\max}$ 
and visualize them before and after splitting. 
We use guided gradient back-propagation \citep{springenberg2014striving} as the visualization tool  
and apply a gray-scale on the output image to have a better visualization. During the splitting, we set the splitting step size $\epsilon = 0.01$. 

Figure \ref{fig:vis} visualizes the filters on an image whose label is ``bulldog'' (see Appendix~\ref{fig:appendix_vis} for more examples). We see that the filters with large    $\rho:=\max(|\lambda_{\min}|,|\lambda_{\max}|)$ tend to  change  significantly after splitting. In contrast, the filters with small $\rho$ tend to keep unchanged after splitting.  
This suggests that the spectrum radius $\rho$ provides a good estimation of the benefit of splitting. 
We further include three more visualization of the splitting of filters on ImageNet in Figure~\ref{fig:appendix_vis}. All the cases show similar consistent patterns illustrated above. 

\begin{figure*}[ht!]
    \centering
    \begin{overpic}[width=1\textwidth]{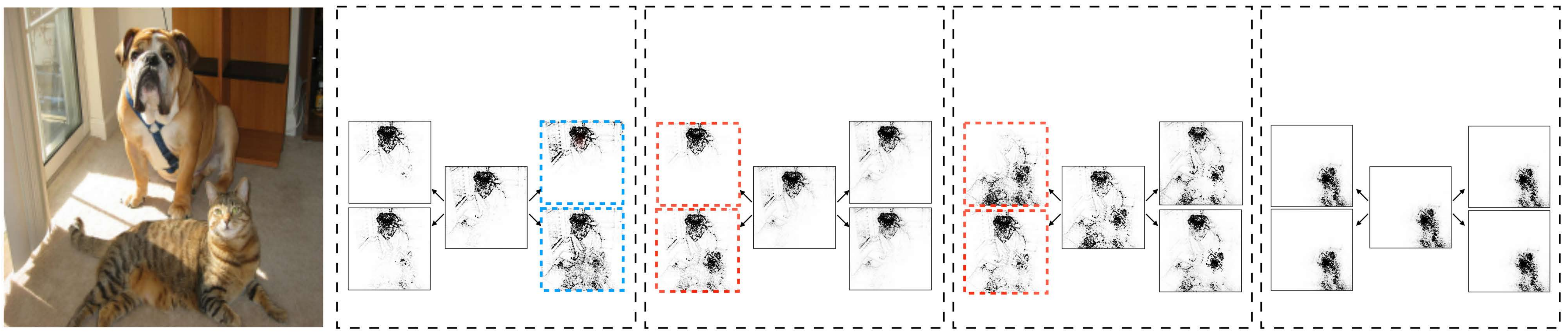}
    \put(22, 19){\scriptsize$\theta \pm \epsilon v_{\min}$}
    \put(22, 16){\scriptsize$\lambda_{\min}\text{=-}0.5$}
    \put(33.5, 19){\scriptsize$\theta \pm \epsilon v_{\max}$}
    \put(33, 16){\scriptsize$\lambda_{\max}\text{=}22.1$}
    
    \put(41.7, 19){\scriptsize$\theta \pm \epsilon v_{\min}$}
    \put(41.7, 16){\tiny$\lambda_{\min}\text{=-}29.2$}
    \put(53.2, 19){\scriptsize$\theta \pm \epsilon v_{\max}$}
    \put(53.2, 16){\tiny$\lambda_{\max}\text{=}10^{\text{-7}}$}
    
    \put(61.3, 19){\scriptsize$\theta \pm \epsilon v_{\min}$}
    \put(61.3, 16){\scriptsize$\lambda_{\min}\text{=-}23.6$}
    \put(72.8, 19){\scriptsize$\theta \pm \epsilon v_{\max}$}
    \put(72.8, 16){\scriptsize$\lambda_{\max}\text{=}0.3$}
    
    \put(80.9, 19){\scriptsize$\theta \pm \epsilon v_{\min}$}
    \put(80.9, 16){\scriptsize$\lambda_{\min}\text{=-}10^{\text{-}8}$}
    \put(92.3, 19){\scriptsize$\theta \pm \epsilon v_{\max}$}
    \put(92, 16){\scriptsize$\lambda_{\max}\text{=}10^{\text{-}2}$}
    
    \put(5,-1.5){\scriptsize(a) Original Image}
    \put(25,-1.5){\scriptsize(b) Largest $\lambda_{\max}$}
    \put(45,-1.5){\scriptsize(c) Smallest $\lambda_{\min}$}
    \put(65,-1.5){\scriptsize(d) Smallest $G_4^{-c}$}
    \put(83,-1.5){\scriptsize(e) Smallest $\rho (S(\theta))$}
    \end{overpic}
    \caption{{Filter visualization result. The filters are visualized by guided backpropagation. The center figures in (b)-(d) are the visualization of the different filters before splitting. The four surrounding figures in each panel 
 are the results we get after split the neurons with $\theta\gets \theta\pm \epsilon v_{\min}$ (left) and 
    $\theta\gets \theta\pm \epsilon v_{\max}$ (right), respectively. 
    The figures with colored dashed boxes 
    are the cases  with  significant changes after splitting,  whose corresponding eigen-values tend to be far away from zero comparing with the other cases.} 
    }
    \label{fig:vis}
\vspace{-8pt}
\end{figure*}

\begin{figure*}[h]
\centering
\scalebox{0.9}{
\setlength{\tabcolsep}{1.0pt}
\begin{tabular}{c}
\hspace{0.5em}\begin{overpic}[width=0.99\textwidth]{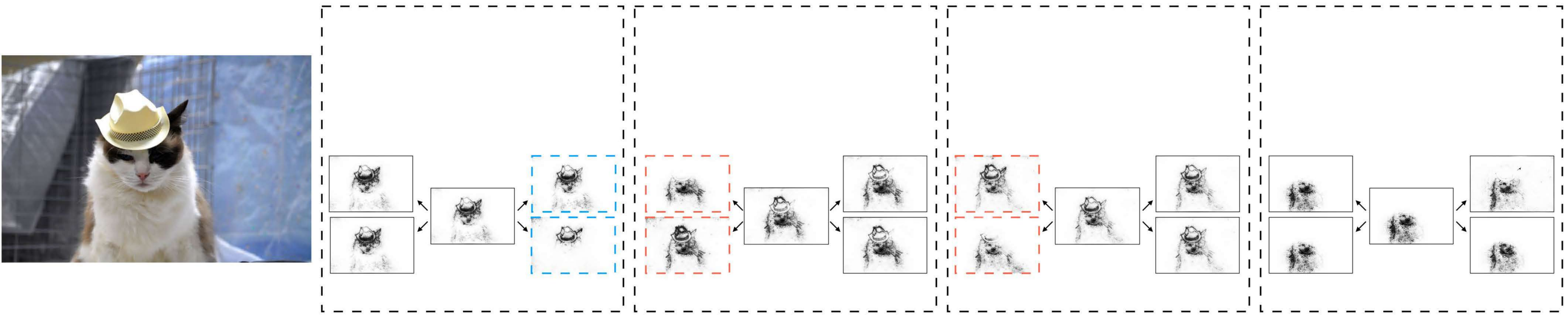}
\put(21, 17.3){\scriptsize$\theta \pm \epsilon v_{\min}$}
    \put(21, 14.3){\scriptsize$\lambda_{\min}\text{=-}10^{\text{-7}}$}
    \put(32.5, 17.3){\scriptsize$\theta \pm \epsilon v_{\max}$}
    \put(32, 14.3){\scriptsize$\lambda_{\max}\text{=}17.4$}
    
    \put(41.1, 17.3){\scriptsize$\theta \pm \epsilon v_{\min}$}
    \put(41.1, 14.3){\tiny$\lambda_{\min}\text{=-}44.7$}
    \put(52.6, 17.3){\scriptsize$\theta \pm \epsilon v_{\max}$}
    \put(52.6, 14.3){\tiny$\lambda_{\max}\text{=}10^{\text{-7}}$}
    
    \put(61.0, 17.3){\scriptsize$\theta \pm \epsilon v_{\min}$}
    \put(61.0, 14.3){\scriptsize$\lambda_{\min}\text{=-}23.5$}
    \put(72.5, 17.3){\scriptsize$\theta \pm \epsilon v_{\max}$}
    \put(72.5, 14.3){\scriptsize$\lambda_{\max}\text{=}10^{\text{-7}}$}
    
    \put(80.9, 17.3){\scriptsize$\theta \pm \epsilon v_{\min}$}
    \put(80.9, 14.3){\scriptsize$\lambda_{\min}\text{=-}10^{\text{-}8}$}
    \put(92.3, 17.3){\scriptsize$\theta \pm \epsilon v_{\max}$}
    \put(92, 14.3){\scriptsize$\lambda_{\max}\text{=}0.3$}

\end{overpic}\\
\begin{overpic}[width=1\textwidth]{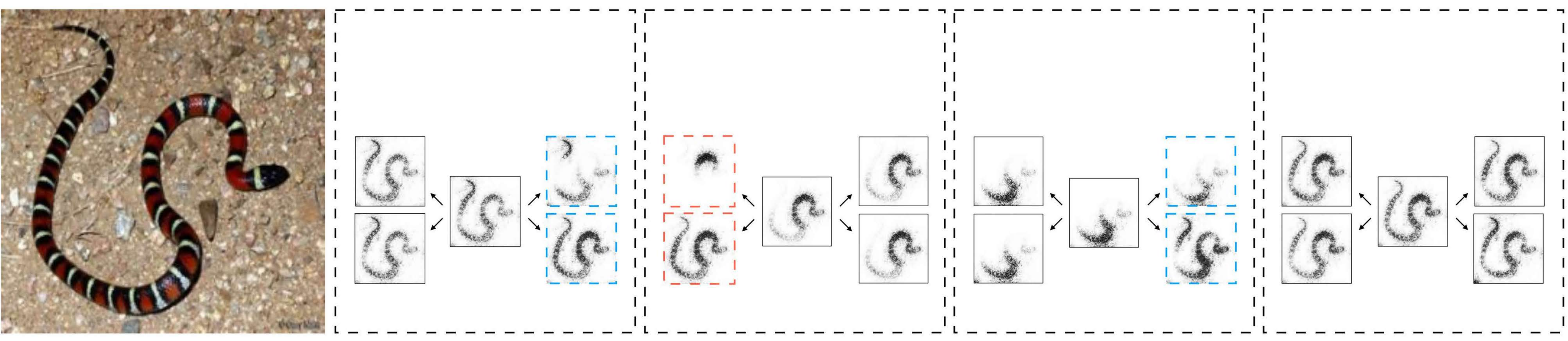}
\put(22, 19){\scriptsize$\theta \pm \epsilon v_{\min}$}
    \put(22, 16){\scriptsize$\lambda_{\min}\text{=-}10^{\text{-2}}$}
    \put(33.5, 19){\scriptsize$\theta \pm \epsilon v_{\max}$}
    \put(33, 16){\scriptsize$\lambda_{\max}\text{=}12.5$}
    
    \put(41.7, 19){\scriptsize$\theta \pm \epsilon v_{\min}$}
    \put(41.7, 16){\tiny$\lambda_{\min}\text{=-}21.3$}
    \put(53.2, 19){\scriptsize$\theta \pm \epsilon v_{\max}$}
    \put(53.2, 16){\tiny$\lambda_{\max}\text{=}10^{\text{-7}}$}
    
    \put(61.3, 19){\scriptsize$\theta \pm \epsilon v_{\min}$}
    \put(61.3, 16){\scriptsize$\lambda_{\min}\text{=-}16.2$}
    \put(72.8, 19){\scriptsize$\theta \pm \epsilon v_{\max}$}
    \put(72.8, 16){\scriptsize$\lambda_{\max}\text{=}10^{\text{-7}}$}
    
    \put(80.9, 19){\scriptsize$\theta \pm \epsilon v_{\min}$}
    \put(80.9, 16){\scriptsize$\lambda_{\min}\text{=-}10^{\text{-}9}$}
    \put(92.3, 19){\scriptsize$\theta \pm \epsilon v_{\max}$}
    \put(92, 16){\scriptsize$\lambda_{\max}\text{=}10^{\text{-}4}$}

\end{overpic}\\
\hspace{-0.6em}\begin{overpic}[width=1.01\textwidth]{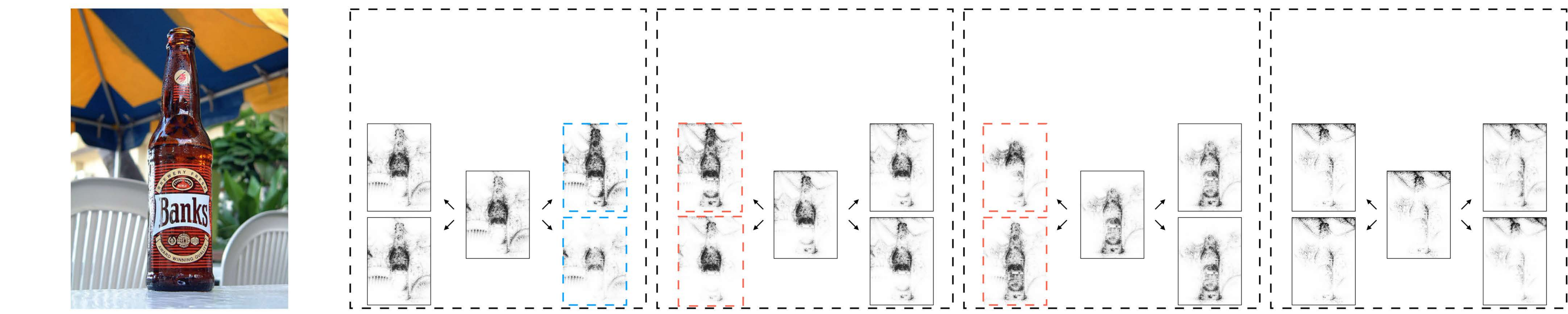}
\put(22.7, 17){\scriptsize$\theta \pm \epsilon v_{\min}$}
    \put(22.7, 14){\scriptsize$\lambda_{\min}\text{=-}10^{\text{-8}}$}
    \put(34.2, 17){\scriptsize$\theta \pm \epsilon v_{\max}$}
    \put(33.7, 14){\scriptsize$\lambda_{\max}\text{=}4.7$}
    
    \put(42.4, 17){\scriptsize$\theta \pm \epsilon v_{\min}$}
    \put(42.4, 14){\tiny$\lambda_{\min}\text{=-}7.5$}
    \put(53.9, 17){\scriptsize$\theta \pm \epsilon v_{\max}$}
    \put(53.9, 14){\tiny$\lambda_{\max}\text{=}10^{\text{-8}}$}
    
    \put(62, 17){\scriptsize$\theta \pm \epsilon v_{\min}$}
    \put(62, 14){\scriptsize$\lambda_{\min}\text{=-}5.6$}
    \put(73.3, 17){\scriptsize$\theta \pm \epsilon v_{\max}$}
    \put(73.3, 14){\scriptsize$\lambda_{\max}\text{=}10^{\text{-}5}$}
    
    \put(81.6, 17){\scriptsize$\theta \pm \epsilon v_{\min}$}
    \put(81.5, 14){\scriptsize$\lambda_{\min}\text{=-}10^{\text{-}9}$}
    \put(93, 17){\scriptsize$\theta \pm \epsilon v_{\max}$}
    \put(92.7, 14){\scriptsize$\lambda_{\max}\text{=}0.2$}
    
    \put(5,-1.5){\scriptsize(a) Original Image}
    \put(26,-1.5){\scriptsize(b) Largest $\lambda_{max}$}
    \put(46,-1.5){\scriptsize(c) Smallest $\lambda_{min}$}
    \put(66,-1.5){\scriptsize(d) Smallest $G_4^{-c}$}
    \put(84,-1.5){\scriptsize(e) Smallest $\rho (S(\theta))$}
\end{overpic}\\

\end{tabular}
}
\caption{\small More visualization results similar to Figure~\ref{fig:vis}. 
}
\label{fig:appendix_vis}
\end{figure*}

\subsection{Point Cloud Classification}
\label{app:dgcnn}
 DGCNN \cite{wang2019dynamic} has two types of layers, the EdgeConv layers and the fully-connected layers. 
 We keep the fully-connected layers  unchanged, and 
 apply S3D to split the filters in the EdgeConv layers which contain {expensive} dynamic KNN operations. The smaller number of filters can substantially speed up the KNN operations, leading a faster forward speed in the EdgeConv layers.
 
 The standard DGCNN has $(64, 64, 128, 256)$ channels in its 4  EdgeConv layers. 
 We initialize our splitting 
 process starting from a small DGCNN with 16 channels in each EdgeConv layer. 
 In the parametric update phase, we follow the setting of \citet{wang2019dynamic} and train the network with a batch size of 32 for 250 epochs. 
 We use the stochastic gradient descent with an initial learning rate of $0.1$, weight decay $10^{-4}$, and momentum $0.9$. 
 We use the cosine decay scheduler to decrease the learning rate. In each splitting phase, 
 we ue the gradient-based eigen-approximation following \citet{wang2019energy} and 
 increase the total number of filters by $40\%$. 
 {The splitting step size $\epsilon$ is $0.01$. }

\section{Algorithm in Practice}

\subsection{Selecting Splittings with Knapsack}
\label{sec:multi_case}
The different splitting schemes 
provide a trade-off between network size and loss descent.  
Although we mainly use binary splittings $(m=2)$ in the experiment, there can be cases when it is better to use different splitting schemes (i.e., different $m$) for different neurons to 
achieve the best possible improvement under a constraint of the growth of total network size. 
We can frame the optimal choice of the splitting schemes of different neurons into a simple knapsack problem. 

Specifically, assume we have a network with $N$ different neurons. Let $S_\ell$ be the splitting matrix of the $\ell$-th neuron and $G_{m,\ell}^{-c}$ the loss decrease obtained by splitting the $\ell$-th neuron into $m$ copies. We want to decide an optimal splitting size $m_\ell \in\{1,2,3,4\}$ (here $m_\ell=1$ means the neuron is not split) of the $\ell$-th neuron for each $\ell\in[N]$, 
subject to a predefined constraint of the total number of neurons after splitting ($m_{total}$). The optimum $\{m_\ell\}_{\ell=1}^{N}$ solve the following constrained optimization:  
\begin{align*} 
\min_{\{m_\ell\}}\sum_{\ell=1}^N G_{m_\ell, \ell}^{-c}, ~~s.t.~~ \sum_{\ell=1}^N m_{\ell} \leq m_{total},~~ m_\ell \in \{1,2,3,4\}. \\
\end{align*}
This is an instance of knapsack problem. In practice, we can solve it using convex relaxation. 
Specifically, let $p_{m,\ell}$ be a probability of splitting the $\ell$-th neuron into $m$ copies, we have 
\begin{align*} 
\min_{p} \sum_{\ell=1}^N \sum_{m=1}^4 p_{m, \ell} G_{m, \ell}^{-c}, ~~~~~s.t.~~~~~ & \sum_{\ell=1}^N \sum_{m=1}^4 m \times p_{m,\ell} \leq m_{total}  \\
& \sum_{m=1}^4 p_{m,\ell}=1, ~~~~~ \forall \ell\in[N]  \\
& p_{m,\ell}\ge 0 ~~~~~ \forall \ell\in[N],  ~~ m\in [4].
\end{align*}
In practice, we can solve the above problem using linear programming. 


\subsection{Result on CIFAR-100 using Knapsack}
We also test the result on CIFAR-100 using MobileNetV1, in which we decide the splitting schemes by formulating it as the knapsack problem. In this scenario, we consider all the situations we described in Section \ref{sec:method}. We show the result using the same setting as that in Table  \ref{table:cifar100}. As shown in Table \ref{table:cifar100-app}, the S3D with knapsack gives comparable accuracy as the Binary splitting, with slightly lower Flops.

\begin{table}[bthp]

\begin{center}
\setlength{\tabcolsep}{3pt}
\scalebox{0.8}{
\begin{tabular}{l|ccc}
\hline
Method & Accuracy & \# Param (M) & \# Flops (M) \\
\hline
S2D-5 & 69.69& 0.31 & 79.94\\
S3D-5 (m = 2) & \textbf{70.19}& \textbf{0.30} & 73.69\\
S3D-5 (knapsack) & 70.01& \textbf{0.30} & 72.19\\
\hline

\end{tabular}
}
\end{center}
\caption{Comparison of different methods near the full-size accuracy. 
S2D-5 denotes running splitting steepest descent (S2D) for 5 steps.
S3D-5 ($m = 2$) 
and S3D-5 (knapsack) 
represent running  S3D  for 5 steps use binary splitting and knapsack, respectively.}\label{table:cifar100-app}
\end{table}
\subsection{Fast Implementation
via Rayleigh Quotient
}  
\label{appendix:Fast_RQ}
\citet{wang2019energy} developed a 
Rayleigh-quotient gradient descent method for fast calculation of the minimum eigenvalues and eigenvectors of the splitting matrices. 
We extend it to calculate both minimum eigenvalues and maximum eigenvalues. 

Define the Rayleigh quotient \cite{parlett1998} of a matrix $S$ is 
$$
\mathcal R_S(v) := \frac{ v^\top S v }{v^\top v}.
$$
The maximum and minimum of 
$\mathcal R_S(v)$ equal the maximum and minimum eigenvalues, respectively, that is, 
\begin{align*} 
  &\lambda_{\min} 
= \min_{v}
\mathcal R_S(v), 
&&
v_{\min} \propto \argmin_{v}  \mathcal R_S(v) \\
&\lambda_{\max} 
= \max_{v}
\mathcal R_S(v)
, 
&&
v_{\min} \propto \argmax_{v}  \mathcal R_S(v).
\label{eq:rayley_}
\end{align*}
Therefore, we can apply gradient descent and gradient ascent on Rayleigh quotient
to approximate the minimum  and maximum eigenvalues. 
In practice, we use the automatic differentiation trick proposed by 
\citet{wang2019energy} to simultaneously calculate the gradient of 
Rayleigh quotient of all the neurons without using for loops,
by only evaluating the matrix-vector products $Sv$, without expanding the full splitting matrices.   


\end{document}